\documentclass{article}

\usepackage[preprint]{neurips_2025}

\usepackage{amsmath,amssymb,amsfonts,amsthm}
\usepackage{algorithmic}
\usepackage{algorithm}
\usepackage{graphicx}
\usepackage{float}
\usepackage{xcolor}
\usepackage{booktabs}
\usepackage{natbib}
\usepackage{xspace}
\usepackage{enumitem}
\usepackage{tabularx}
\usepackage{multirow}
\usepackage{hyperref}
\usepackage{url}
\usepackage{tikz}
\usepackage[most]{tcolorbox}
\usetikzlibrary{arrows.meta,positioning,shapes.geometric,fit,backgrounds,calc,decorations.pathreplacing}

\setlist{nosep,leftmargin=*}

\newcommand{\rler}{\textsc{RLER}\xspace}
\newcommand{\erm}{\textsc{ERM}\xspace}
\newcommand{\doop}{\text{do}}

\newtheorem{definition}{Definition}
\newtheorem{theorem}{Theorem}
\newtheorem{proposition}{Proposition}
\newtheorem{corollary}{Corollary}
\newtheorem{lemma}{Lemma}

\title{Epistemic Regret Minimization: Label-Free Causal Critique Beyond Outcome Reward}

\author{Edward Y. Chang \quad Longling Geng \\
Stanford University
}

\begin{document}
\maketitle

\begin{abstract}
Large language models can answer causal questions correctly for the wrong reasons. Current RL methods reward \emph{what} a model concludes but ignore \emph{why}, reinforcing correlational shortcuts---a failure we call \emph{Reward Entrenchment}. We introduce \emph{Epistemic Regret Minimization} (\erm), a framework that critiques the causal \emph{structure} of a model's reasoning trace rather than its answer. Applying established causal principles, \erm flags unexamined confounders, correlation--intervention conflation, and unchecked back-door paths from exposed reasoning traces. The framework admits \emph{label-free} operation---without the true causal graph or correct answer---and we separately distinguish favorable benchmark-derived critique, error-direction cues, and fully label-free judge-generated critique in the experiments. Within a single episode, \erm detects and repairs causal reasoning errors; across episodes, it accumulates interventional evidence into a reward signal applicable where no answer key exists. Experiments on 1,360 scenarios across six frontier LLMs show that reasoning-heavy models (GPT-4 Turbo, GPT-5.2) resist outcome-only correction (25--31\% recovery) yet respond to causal critique (78--91\%), gaining $+53$--$59$ pp. Standard test-time methods (self-consistency, Best-of-$N$, Self-Refine) \emph{underperform} outcome-only reprompting on causal tasks, while ERM reduces residual Rung Collapse from 55--70\% to 4\%. A separation theorem proves outcome-only reward cannot close this gap; a controlled simulation confirms epistemic feedback does, outperforming outcome-only baselines 38-fold.
\end{abstract}

\section{Introduction}
\label{sec:intro}
\vspace{-.1in}

Large language models can answer causal questions correctly for the wrong reasons. A model that substitutes associational shortcuts $P(Y|X)$ for the interventional query $P(Y|\doop(X))$ receives the same reward as one that reasons causally. Every current RL variant, from RLHF~\citep{ouyang2022rlhf,christiano2017rlhf} to RLVR~\citep{rlvr2025} to process reward models~\citep{lightman2023prm}, looks at \emph{what} the agent produces, not \emph{why}.

This blind spot has measurable consequences. \citet{jin2023cladder} showed that GPT-4 achieves only 64\% on causal queries with failures concentrated at the interventional level; \citet{zevcevic2023causal} argued this is structural in autoregressive models; \citet{chi2024unveiling} confirmed it persists across model generations. We formalize this as \emph{Rung Collapse} (Definition~\ref{def:rung-collapse}, \S\ref{sec:problem}): the systematic substitution of lower-rung reasoning for higher-rung queries in Pearl's causal hierarchy~\citep{pearl2009causality,bareinboim2022fusion}. Worse, when an LLM treats correlation as causation and still gets the answer right, outcome-based reward \emph{reinforces} the flawed reasoning, a feedback loop we call \emph{Reward Entrenchment} that persists until distribution shift exposes the shortcut. Most consequential reasoning (scientific inference, clinical decision-making, open-world causal claims) lacks ground-truth verifiers at inference time (\emph{Regime~B}), placing it squarely in the regime where RLVR (which requires a verifier, \emph{Regime~A}) is structurally inapplicable.

\vspace{-.1in}
\paragraph{Our approach.} We introduce \emph{Epistemic Regret Minimization} (\erm), a framework that critiques \emph{how} a model reasons about cause and effect, rather than whether it reaches the right answer. Applying established causal principles grounded in Pearl's do-calculus, \erm reads the model's exposed reasoning trace and flags structural weaknesses---unexamined confounders, conflation of correlation with intervention, and unchecked back-door paths---none of which require knowing the true causal graph (Appendix~\ref{app:illustrated-examples} provides illustrated walk-throughs). We call this \emph{label-free} (gold-free) operation: the critique is generated from the reasoning trace alone, without access to the correct answer. Within a single episode, ERM critique detects and repairs these errors; fully label-free operation is validated in the ablations (\S\ref{sec:ablations}, Appendix~\ref{app:exp2b}) and cross-episode protocol. Across episodes, \erm accumulates interventional evidence into a reward signal applicable where no answer key exists.

\vspace{-.10in}
\paragraph{Contributions.}
\begin{enumerate}
\item \textbf{C1: Problem formalization} (\S\ref{sec:problem}). We formalize Rung Collapse and Reward Entrenchment within Pearl's causal hierarchy and prove that neither autoregressive training nor outcome-based RL provides signal to distinguish $\mathcal{L}_1$ from $\mathcal{L}_2$ reasoning when both yield the same output. An Interventional Grounding Theorem bridges action traces and do-calculus.

\item \textbf{C2: Single-episode correction and ablation} (\S\ref{sec:erm} and \S\ref{sec:ablations}). Across 1,360 scenarios on 6 models (CausalT5K~\citep{causalT5K}, confirmed on CLadder~\citep{jin2023cladder}), reasoning-heavy models resist outcome-only correction (25--31\%) yet respond to causal critique (78--91\%). Ablations confirm causal content is load-bearing and label-free critique works without correct answers; a scenario-blind judge argues against answer leakage. On GPT-4T Wolf Cases, ERM outperforms self-consistency, Best-of-$N$, and Self-Refine, which all \emph{underperform} outcome-only reprompting while leaving 55--70\% residual Rung Collapse versus ERM's 4\%.

\item \textbf{C3: Cross-episode reward signal} (\S\ref{sec:rler} and \S\ref{sec:separation-sim}). A separation theorem proves outcome-only RL cannot distinguish correct from flawed causal models in confounded environments. Epistemic signal is confirmed on real traces across four LLMs. A controlled simulation (CausalBench-Seq) closes the reward-to-policy loop, achieving 38-fold regret separation; cross-episode memory is load-bearing. End-to-end LLM policy optimization remains future work.
\end{enumerate}

\vspace{-.1in}
\paragraph{Related work.} CLadder~\citep{jin2023cladder}, CausalProbe~\citep{chi2024unveiling}, and CauSciBench~\citep{acharya2025causcibench} document systematic causal reasoning failures across frontier LLMs; \citet{zevcevic2023causal} give formal arguments that autoregressive training cannot implement the do-operator. The ``right for the wrong reasons'' phenomenon~\citep{mccoy2019right,geirhos2020shortcut} and simplicity bias~\citep{shah2020pitfalls} establish that neural networks exploit shallow regularities; our Reward Entrenchment (Definition~\ref{def:entrenchment}) is the causal-domain instantiation. Process reward models~\citep{lightman2023prm} verify step-to-step consistency but cannot detect Rung Collapse; causal bandits~\citep{lattimore2016causal} optimize outcome regret given known structure rather than discovering flawed structure from traces. Our \erm instantiates AGM belief revision~\citep{agm1985} with interventional observations, inheriting causal guarantees from Theorem~\ref{thm:grounding}. Recent label-free methods such as MM-UPT~\citep{wei2025mmupt} use majority-vote pseudo-labels for weight updates on mathematical tasks; this self-consistency mechanism is precisely what breaks down under Rung Collapse (Corollary~\ref{prop:rung-collapse}) for causal reasoning, where multiple incorrect chains converge on the same $\mathcal{L}_1$ shortcut (Appendix~\ref{app:related}). Object-centric world models such as Causal-JEPA~\citep{nam2026causaljepa} induce causal inductive biases through object-level latent interventions; \erm addresses a complementary reasoning layer, auditing whether an LLM's exposed causal explanation relies on valid interventional structure rather than outcome-compatible shortcuts. Extended comparisons appear in Appendix~\ref{app:related}.
\label{sec:related}

\vspace{-.1in}
\section{Epistemic Regret Minimization}
\label{sec:method}
\vspace{-.08in}

We first establish the failure modes that motivate the framework (\S\ref{sec:problem}), then present the core single-episode correction method (\S\ref{sec:erm}) and its extension to cross-episode reward signals (\S\ref{sec:rler}). All symbols are defined inline at first use and summarized in Table~\ref{tab:notation} (Appendix~\ref{app:notation}).

\vspace{-.12in}
\subsection{Background: Rung Collapse and Reward Entrenchment}
\label{sec:problem}
\vspace{-.08in}

We work within Pearl's Structural Causal Models (SCM)~\citep{pearl2009causality}: $\mathcal{M} = \langle \mathbf{U}, \mathbf{V}, \mathcal{F}, P(\mathbf{U})\rangle$ with independent mechanisms and semi-Markovian DAGs (component definitions in Table~\ref{tab:notation}). Let $G^*$ denote the true causal DAG. Pearl's Causal Hierarchy~\citep{bareinboim2022fusion} distinguishes Association ($\mathcal{L}_1$: $P(Y|X)$), Intervention ($\mathcal{L}_2$: $P(Y|\doop(X))$), and Counterfactuals ($\mathcal{L}_3$). The Causal Hierarchy Theorem (CHT) dictates that $\mathcal{L}_1$ data alone cannot determine $\mathcal{L}_2$ distributions.

\begin{definition}[Rung Collapse]
\label{def:rung-collapse}
An agent exhibits Rung Collapse when it responds to a Rung-$j$ query using Rung-$i$ reasoning where $i < j$. For query $Q_j$ requiring $\mathcal{L}_j$ inference, $(i \to j)$ collapse occurs when the agent's answer function operates on $\mathcal{L}_i$ quantities but the correct answer requires $\mathcal{L}_j$.
\end{definition}

\vspace{-.07in}
\begin{corollary}[LLM Rung Collapse: immediate from CHT]
\label{prop:rung-collapse}
Neither the next-token loss nor outcome-based RL reward $R_\text{out}(Y, Y^*)$ (where $Y^*$ is the correct answer) distinguishes $\mathcal{L}_1$ from $\mathcal{L}_2$ reasoning when both produce the same output. Consequently, neither pre-training nor outcome-based post-training introduces interventional signal~\citep{zevcevic2023causal}. Proof in Appendix~\ref{app:proofs}.
\end{corollary}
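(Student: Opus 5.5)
The plan is to make both claims precise as statements about what a training signal can ``see,'' and then reduce each to the Causal Hierarchy Theorem. First I would formalize a \emph{reasoning mode} as a map from the agent's inputs to its output: an $\mathcal{L}_1$ mode answers by evaluating some functional of the observational distribution $P(\mathbf{V})$, for instance $P(Y\mid X)$; an $\mathcal{L}_2$ mode evaluates a functional of the interventional distributions $P(\mathbf{V}\mid \doop(\mathbf{x}))$. A training objective \emph{distinguishes} the two modes if its value, or its gradient, differs between two policies that share the same output distribution but use different modes. The first part of Corollary~\ref{prop:rung-collapse} then follows almost by definition. $R_\text{out}(Y,Y^*)$ depends on the trajectory only through the final output $Y$. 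So if the two policies induce the same distribution over $Y$ on every query, the expected reward is identical, and so is the policy-gradient estimator $\mathbb{E}[R_\text{out}\,\nabla\log\pi(Y)]$, taken over equivalent output-parameterizations. Hence RL reward provides zero differential signal between the modes.

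For the next-token loss, I would observe that the pre-training corpus consists of samples of text, which I would model as draws from some $P(\mathbf{V})$; the cross-entropy is therefore a functional of $P(\mathbf{V})$ alone. This is the step where CHT enters. Take two SCMs $\mathcal{M}_1$ and $\mathcal{M}_2$ that agree on $\mathcal{L}_1$ but disagree on $\mathcal{L}_2$. The standard example is a confounded $X\leftarrow U\rightarrow Y$ model versus a direct $X\rightarrow Y$ model with matched joint distribution. CHT guarantees that such pairs exist, and in fact that they are generic. The loss assigns equal value to any model fitting $P(\mathbf{V})$, whichever of $\mathcal{M}_1,\mathcal{M}_2$ generated the data. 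So the minimizer set cannot encode which interventional distribution is correct, and no interventional information is introduced. The ``consequently'' clause then combines the two parts: a post-training pipeline whose only signals are these two objectives cannot move the policy toward $\mathcal{L}_2$ reasoning on outputs where the modes coincide. I would add the remark that on queries where they differ, $R_\text{out}$ does help, but only through answer correctness, never through the reasoning structure. This is the precise sense of the statement.

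The main obstacle is the modeling step for pre-training. Text can describe experiments, so a corpus is not literally pure $\mathcal{L}_1$ data about the domain variables. I would handle this as \citet{zevcevic2023causal} does. Such textual descriptions are themselves observations of the token process, so the loss still only constrains the $\mathcal{L}_1$ distribution of that process. Any interventional content is therefore ``meta'' knowledge that is learned associationally, and it gives no guarantee of $\mathcal{L}_2$ reasoning. I would state this assumption explicitly as a hypothesis of the corollary, rather than try to prove it.
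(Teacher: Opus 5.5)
Your proposal follows the paper's own proof essentially step for step. Outcome reward depends on the trajectory only through the final answer, so traces (or policies) with the same output receive identical reward and policy gradient. The next-token loss is a functional of the observational distribution, so a CHT pair (confounded $C\to X, C\to Y$ versus direct $X\to Y$ with matched $P(X,Y)$) yields the same loss and gradient, and combining the two gives the ``consequently'' clause. Two of your refinements make explicit steps the paper leaves implicit: you state the corpus-as-$\mathcal{L}_1$ modeling assumption as a hypothesis (the paper passes directly from token-level conditionals $P_{\mathcal{D}}(x_t\mid x_{<t})$ to ``depends only on $P(Y\mid X)$''), and you restrict the RL claim to queries where the two modes give the same answer.
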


\vspace{-.07in}
\begin{definition}[Reward Entrenchment]
\label{def:entrenchment}
An agent achieves Spurious Success when it produces a correct outcome $Y^*$ despite using an incorrect causal model $\hat{G} \neq G^*$: $P(Y^*|\hat{G}) > 0$ but $\hat{G} \not\models G^*$. Outcome-based learning (RLHF, RLVR) cannot, from reward alone, distinguish Spurious Success from genuine understanding. The positive reward reinforces the flawed model, producing Reward Entrenchment: the agent becomes increasingly committed to an incorrect model that fails under distributional shift.
\end{definition}

A downstream consequence is Epistemic Stubbornness: reasoning-heavy models resist outcome-only correction yet respond to targeted epistemic feedback (\S\ref{sec:erm-experiments}). Four established results anchor the framework (Appendix~\ref{app:foundations}): Murphy's decomposition~\citep{murphy1973vector}, prequential calibration~\citep{dawid1984prequential,foster1998asymptotic}, FCI completeness~\citep{spirtes1993causation,zhang2008completeness}, and the AGM representation theorem~\citep{agm1985}.
\label{sec:foundations}

\vspace{-.12in}
\subsection{Single-Episode Correction}
\label{sec:erm}
\label{sec:grounding}
\vspace{-.08in}

To escape Rung Collapse, \erm requires a source of interventional evidence: data generated under $\doop(X{=}x)$ rather than passive observation. We first establish when an agent's actions produce valid do-calculus interventions, then define the \erm objective and architecture.

\vspace{-.07in}
\begin{definition}[Intervention Independence]
\label{def:actuator}
An action $A$ on variable $X \in \mathbf{V}$ satisfies Intervention Independence if $A(X \leftarrow x)$ is determined by the agent's control signal, independent of $\mathrm{Pa}(X)$ or the exogenous noise $U_X \in \mathbf{U}$.
\end{definition}

\vspace{-.07in}
\begin{theorem}[Interventional Grounding]
\label{thm:grounding}
Let $\mathcal{M}$ be an SCM as in \S\ref{sec:problem}. Let $A$ satisfy Intervention Independence (Definition~\ref{def:actuator}) for variable $X \in \mathbf{V}$. Then $A(X \leftarrow x)$ implements Pearl's do-operator: $P(Y \mid A(X \leftarrow x)) = P(Y \mid \doop(X=x))$ for all $Y \in \mathbf{V} \setminus \{X\}$.
\end{theorem}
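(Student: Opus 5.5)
The plan is to show that the action $A(X \leftarrow x)$ produces the same modified structural model as Pearl's do-operator, and then to read the equality of distributions off that identity. First we formalize what executing $A$ does to $\mathcal{M} = \langle \mathbf{U}, \mathbf{V}, \mathcal{F}, P(\mathbf{U})\rangle$. The agent's control signal $c$ enters as a new exogenous input. By Intervention Independence (Definition~\ref{def:actuator}), the value taken by $X$ under the action is a function of $c$ alone, not of $\mathrm{Pa}(X)$ or $U_X$. Conditioning on the control signal that sets $X$ to $x$, the structural equation $f_X(\mathrm{Pa}(X), U_X)$ is replaced by the constant assignment $X := x$. Every other mechanism $f_V$, $V \neq X$, is left unchanged, because actions are assumed to act only on their target variable and mechanisms are independent, as stipulated in \S\ref{sec:problem}. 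Call the resulting model $\mathcal{M}_A$.

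Second, we compare $\mathcal{M}_A$ with the submodel $\mathcal{M}_x$ that defines $\doop(X=x)$. The submodel $\mathcal{M}_x$ has the same $\mathbf{U}$, $P(\mathbf{U})$, and $f_V$ for $V\neq X$, with $f_X$ replaced by $x$. The two models coincide except for one possible difference: $\mathcal{M}_A$ contains the extra exogenous input $c$. Since $c$ influences only $X$, and $X$ is held fixed at $x$, $c$ drops out of all remaining equations. The key requirement is that $c$ be independent of $\mathbf{U}$. I will argue that this follows from the definition: the control signal is determined by the agent and not by $U_X$ or any other noise term. I will also make explicit that the agent's policy may not depend on unobserved $\mathbf{U}$, since otherwise conditioning on $c$ would shift $P(\mathbf{U})$. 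Given this independence, $P(\mathbf{U}\mid c) = P(\mathbf{U})$.

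Third, with the model and the noise distribution matched, we conclude. In a semi-Markovian SCM the induced graph is acyclic, so each $Y \in \mathbf{V}\setminus\{X\}$ is obtained by recursive substitution as the same measurable function of $(\mathbf{U}, x)$ in both $\mathcal{M}_A$ and $\mathcal{M}_x$. Pushing $P(\mathbf{U})$ forward through this common function gives $P(Y\mid A(X\leftarrow x)) = P_{\mathcal{M}_x}(Y) = P(Y\mid\doop(X=x))$. The argument applies to joint distributions as well as to single variables $Y$. If needed, the truncated factorization formula gives an equivalent check in the Markovian case.

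The main obstacle is the second step: establishing that conditioning on the action does not act like observational conditioning on $X$. This is where back-door leakage through $U_X$ or correlated noise in the semi-Markovian setting would enter. Independence from $\mathrm{Pa}(X)$ and $U_X$ blocks direct leakage. However, correlated exogenous terms $U_X \leftrightarrow U_Y$ could still leak if the control signal were correlated with $U_X$. I would handle this by reading Intervention Independence as $c \perp\!\!\!\perp \mathbf{U}$, together with the single-target assumption on the action. If that reading proves too strong, I would state these two conditions explicitly as hypotheses of the theorem.
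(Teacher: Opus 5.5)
Your proposal is correct and uses essentially the same modularity argument as the paper. The action replaces $f_X$ by the constant $x$, independent mechanisms leave every other $f_V$ unchanged, and so the induced distribution is that of the submodel $\mathcal{M}_x$; your recursive-substitution and pushforward step just spells out what the paper asserts in one line. One correction to your second step: $c \perp\!\!\!\perp \mathbf{U}$ does not follow from Definition~\ref{def:actuator}, which only excludes dependence on $\mathrm{Pa}(X)$ and $U_X$, and a control signal correlated with, say, $U_Y$ would break the equality. So you are right to state it as an explicit hypothesis; the paper's proof avoids the issue only by implicitly treating the action as a fixed external setting rather than a random signal that is conditioned on.
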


\noindent The proof (Appendix~\ref{app:proofs}) follows from modularity: replacing the structural equation $f_X \in \mathcal{F}$ with a constant leaves all other structural equations invariant. This bridges action traces and do-calculus: tool-using LLM agents (API calls, code execution, database writes) produce valid interventional data when Intervention Independence holds. The assumption fails when the ``intervention'' is mediated by the same process being reasoned about (e.g., rephrasing a prompt to the same LLM).

\vspace{-.06in}
\paragraph{Corollary 2 (Interventional unconfoundedness).}
\label{cor:immunity}
Under Intervention Independence, the action assignment is independent of
pre-treatment causes of $X$. Hence
\begin{small}
$P(Y\mid A(X\leftarrow x))=P(Y\mid \doop(X=x))$,
\end{small}
so the marginal effect of the action estimates the interventional distribution
without back-door adjustment. 
Conditioning on common cause $Z$ may change the distribution of $Y$ if $Z \to Y$.

\vspace{-.06in}
\subsubsection{The ERM Objective}
\label{sec:erm-objective}

We first define the three stateful artifacts that \erm maintains across episodes.

\vspace{-.06in}
\begin{definition}[Causal Model $G_t$]
\label{def:causal-model}
The agent maintains an explicit causal DAG $G_t = (V, E_t, w_t)$ where $V$ is the set of observed variables, $E_t \subseteq V \times V$ is the current edge set (respecting acyclicity), and $w_t : E_t \to [0,1]$ assigns confidence weights derived from cumulative interventional evidence. $G_t$ is the agent's inspectable epistemic state: it represents the agent's current best hypothesis about the causal structure, and is updated by \erm's belief revision operator (Algorithm~\ref{alg:erm}).
\end{definition}

\vspace{-.06in}
\begin{definition}[Causal Transaction Log (CTL)]
\label{def:ctl}
The CTL is an append-only evidence store recording a tuple $(t_i, S_i, H_i, a_i, \hat{Y}_i, Y_i, \Delta_i)$ per episode: timestamp $t_i$, scenario identifier $S_i$, the agent's causal hypothesis $H_i$, the action taken $a_i$, the predicted outcome $\hat{Y}_i$, the observed outcome $Y_i$, and epistemic error $\Delta_i = \hat{Y}_i - Y_i$ \citep{sagallm2025}. The CTL provides two query operators: $\mathrm{CTL.Support}(C_j) = |\{i : \Delta_i \leq \epsilon \text{ and } C_j \in H_i\}|$ and $\mathrm{CTL.Refute}(C_j) = |\{i : \Delta_i > \epsilon \text{ and } C_j \in H_i\}|$ for a tolerance threshold $\epsilon > 0$, which aggregate interventional evidence for and against any causal claim $C_j$.
\end{definition}

\vspace{-.06in}
\noindent A third artifact, the \emph{Failure-Mode Registry} $\mathcal{F}_t$ (Definition in Appendix~\ref{app:algorithm}), classifies recurring structural error patterns and injects falsifiable reasoning guards. Given these artifacts, we define \emph{Epistemic Regret} $R_\text{ep}(t)$ as the KL divergence between the model's predicted interventional distribution and the observed interventional distribution; under Intervention Independence (Theorem~\ref{thm:grounding}), the latter samples $P(Y|\doop(X), G^*)$. The full \erm objective balances three terms:

\vspace{-.21in}
\begin{footnotesize}
\begin{equation}
\label{eq:erm}
\mathcal{L}(G_t) = \underbrace{\mathcal{L}_\text{task}(Y, Y^*)}_\text{Outcome} + \lambda \underbrace{R_\text{ep}(t)}_\text{Epistemic} + \mu \underbrace{\mathcal{L}_\text{con}(G_t)}_\text{Consistency}
\end{equation}
\end{footnotesize}
where $\mathcal{L}_\text{task}$ is outcome loss, $\mathcal{L}_\text{con}(G_t)$ penalizes acyclicity and edge-weight violations in the causal DAG, and $\lambda, \mu > 0$ control the trade-off. The key property: if the agent achieves Spurious Success, $R_\text{ep} > 0$ even when $\mathcal{L}_\text{task} = 0$.

\vspace{-.06in}
\noindent\emph{Empirical scope.} Equation~\ref{eq:erm} defines the target objective for \erm, but this paper does not train a differentiable policy end-to-end. Instead, our experiments validate the objective's load-bearing components: whether causal critique supplies an epistemic signal beyond outcome reward, whether that signal corrects Rung Collapse in single episodes, and whether accumulated epistemic feedback closes the reward-to-policy loop in a controlled SCM simulation. The present contribution is thus an inference-time and simulation-validated instantiation of \erm, with gradient-based policy optimization left for future work (\S\ref{sec:discussion}).

\vspace{-.03in}
\begin{lemma}[Prevention of Reward Entrenchment]
\label{thm:entrenchment}
Under the ERM objective (Eq.~\ref{eq:erm}) with gradient access to $G_t$, Spurious Success cannot be a fixed point: if $G_t \not\equiv_{\mathcal{I}} G^*$, then $\mathcal{L} \geq \lambda \cdot D_{KL}(\hat{P}_{G_t} \| P_{G^*}) > 0$. This follows directly from the positivity of KL-divergence and $\lambda > 0$; convergence rate and local-minima analysis in DAG space are addressed by the asymptotic recovery guarantee (Theorem~\ref{thm:convergence}). Proof in Appendix~\ref{app:proofs}.
\end{lemma}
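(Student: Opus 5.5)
The plan is to read the statement as a lower bound on the objective in Eq.~\eqref{eq:erm}, established term by term, followed by a strict-positivity argument for the epistemic term. First, both $\mathcal{L}_\text{task}(Y,Y^*)\ge 0$ and $\mathcal{L}_\text{con}(G_t)\ge 0$: they are losses and penalties, so they are nonnegative by construction. Since $\mu>0$, dropping them gives $\mathcal{L}(G_t)\ge \lambda R_\text{ep}(t)$. This holds in particular in the Spurious Success case $\mathcal{L}_\text{task}=0$ of Definition~\ref{def:entrenchment}, which is exactly the case the lemma targets.

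Second, I identify $R_\text{ep}(t)$ with $D_{KL}(\hat P_{G_t}\,\|\,P_{G^*})$. By definition, $R_\text{ep}$ is the KL divergence between the model's predicted interventional distribution and the observed interventional distribution. The observations come from actions satisfying Intervention Independence (Definition~\ref{def:actuator}), so Theorem~\ref{thm:grounding} lets me replace $P(Y\mid A(X\leftarrow x))$ by $P(Y\mid\doop(X=x))$ computed in $G^*$. Hence the reference distribution is $P_{G^*}$, and the desired inequality $\mathcal{L}\ge\lambda D_{KL}(\hat P_{G_t}\|P_{G^*})$ follows. I will state that $\hat P_{G_t}$ and $P_{G^*}$ denote the interventional distributions over the intervention regime $\mathcal{I}$ available to the agent, for example an average or sum of the per-intervention KLs over $x$ in $\mathcal{I}$.

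Third, and this is the main obstacle, I must show strict positivity. Gibbs' inequality gives $D_{KL}(p\|q)\ge 0$ with equality iff $p=q$ almost everywhere. So I need $G_t\not\equiv_{\mathcal{I}}G^*$ to imply that $\hat P_{G_t}\neq P_{G^*}$ for at least one intervention in $\mathcal{I}$. I will handle this by reading $\equiv_{\mathcal{I}}$ as interventional equivalence: two models are $\mathcal{I}$-equivalent iff they induce the same interventional distributions for every $\doop(X=x)$ with $x\in\mathcal{I}$. Under this reading the implication is immediate by contraposition, and the structural (DAG-level) version is deferred to Theorem~\ref{thm:convergence}.

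I will flag that a Spurious Success model which agrees with $G^*$ on every queried intervention is, by definition, $\mathcal{I}$-equivalent to it. Such a model is therefore not excluded by the lemma, and it should not be. I also need $\hat P_{G_t}\ll P_{G^*}$ or the convention $D_{KL}=+\infty$ otherwise, and the bound survives either way.

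Finally, the fixed-point claim follows by contradiction. Suppose a Spurious Success state $G_t$ with $\mathcal{L}_\text{task}=0$ and $G_t\not\equiv_{\mathcal{I}}G^*$ were a fixed point of minimizing $\mathcal{L}$. Then $\mathcal{L}(G_t)>0$, whereas $G^*$ itself attains $\mathcal{L}_\text{task}=0$, $R_\text{ep}=0$ and $\mathcal{L}_\text{con}=0$, because $G^*$ is a valid DAG. So $G_t$ is not a global minimizer, and the objective retains a nonzero epistemic penalty that outcome reward alone would not supply. Whether gradient descent escapes local minima is explicitly left to Theorem~\ref{thm:convergence}, so the lemma's content is the strict gap $\lambda D_{KL}>0$.
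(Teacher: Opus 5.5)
Your proof of the inequality is the paper's proof. Theorem~\ref{thm:grounding} identifies the observed interventional distribution with $P_{G^*}$. Non-equivalence $G_t \not\equiv_{\mathcal{I}} G^*$ then gives some $\doop(X{=}x)$ on which the predicted and true distributions differ, so the KL term is strictly positive. Dropping the other nonnegative terms yields $\mathcal{L} \geq \lambda R_\text{ep}(t) > 0$. Your version is slightly cleaner in two places. You need only $\mathcal{L}_\text{task} \geq 0$ rather than $=0$. And by aggregating per-intervention KLs over $\mathcal{I}$, you avoid the paper's implicit requirement that the single $X$ appearing in $R_\text{ep}(t)$ be the variable on which the distributions differ.

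You genuinely diverge on the fixed-point clause. The paper concludes that, since $\lambda>0$, the gradient $\nabla_{G_t}\mathcal{L}$ is nonzero, and that any local minimum must have $R_\text{ep}=0$. Neither claim follows from $\mathcal{L}>0$: a nonconvex objective over DAG space can have stationary points and local minima with positive value. You instead show that a Spurious Success state is not a \emph{global} minimizer, by comparison with $\mathcal{L}(G^*)=0$, and you leave stationarity to Theorem~\ref{thm:convergence}. That argument is rigorous and matches the lemma's own deferral of local-minima analysis, but it proves a weaker sense of ``not a fixed point.''

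Your comparison also rests on two assumptions the paper never states, and you should make them explicit:
\begin{itemize}
\item $G^*$ attains $\mathcal{L}_\text{task}=0$, i.e., the true model yields the gold answer.
\item $\mathcal{L}_\text{con}(G^*)=0$, which requires that its edge weights incur no penalty, not merely that it is acyclic.
\end{itemize}
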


\vspace{-.06in}
\begin{theorem}[Asymptotic $\mathcal{L}_2$ Recovery]
\label{thm:convergence}
An \erm agent performing $N$ interventions on variable $X$ recovers the true interventional distribution $\lim_{N \to \infty} \hat{P}_N(Y|\doop(X)) = P(Y|\doop(X))$ almost surely, provided: (i) Intervention Independence, (ii) independent mechanisms, (iii) full observability of $Y$, and (iv) stationarity. The sample complexity scales as $O(\epsilon^{-2} \log |\mathcal{Y}|)$ (Appendix~\ref{app:additional-proofs}).
\end{theorem}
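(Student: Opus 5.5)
The plan has three steps: reduce interventional data to i.i.d.\ samples, apply a strong law of large numbers, then get the finite-sample rate from concentration plus a union bound over outcomes. Throughout, $\mathcal{Y}$ is assumed finite, as the $\log|\mathcal{Y}|$ term in the rate indicates.

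\textbf{Step 1: interventional data are i.i.d.\ draws from the target.} By (i) and Theorem~\ref{thm:grounding}, each outcome $Y_k$ recorded after the $k$-th action $A(X\leftarrow x)$ has law $P(Y\mid A(X\leftarrow x)) = P(Y\mid \doop(X=x))$. Assumption (ii) says that replacing $f_X$ leaves every other mechanism unchanged, so the post-intervention model $\mathcal{M}_x$ is the same on every trial. Assumption (iv) says that $\mathcal{M}_x$ and $P(\mathbf{U})$ do not drift across episodes. I will also assume that exogenous noise is redrawn independently on each episode; this is implicit in stationarity and should be stated explicitly. Together these give that $Y_1,\dots,Y_N$ are i.i.d.\ from $P(Y\mid\doop(X=x))$. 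By (iii), the CTL records each $Y_k$ exactly, with no censoring or missingness.

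\textbf{Step 2: almost-sure convergence.} Take $\hat P_N$ to be the empirical distribution
\[
\hat P_N(y\mid\doop(x)) = \frac{1}{N}\sum_{k=1}^N \mathbf{1}[Y_k=y],
\]
which is the estimator the CTL aggregates. For each fixed $y$, the indicators $\mathbf{1}[Y_k=y]$ are i.i.d.\ Bernoulli with mean $P(y\mid\doop(x))$, so the strong law of large numbers gives $\hat P_N(y\mid\doop(x))\to P(y\mid\doop(x))$ almost surely. Intersecting these countably many probability-one events gives pointwise a.s.\ convergence for all $y$ simultaneously. If $\mathcal{Y}$ is countable rather than finite, I would invoke Glivenko--Cantelli or Scheff\'e's lemma to upgrade this to total-variation convergence. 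Repeating the argument for each value $x$ in the (finite) support of the action covers all of $P(Y\mid\doop(X))$.

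\textbf{Step 3: sample complexity.} Apply Hoeffding's inequality to each coordinate, with failure probability $\delta/|\mathcal{Y}|$ per coordinate, and take a union bound over $y\in\mathcal{Y}$. This gives $\max_y|\hat P_N(y)-P(y)|\le\epsilon$ with probability at least $1-\delta$ once $N\ge \frac{1}{2\epsilon^2}\log\frac{2|\mathcal{Y}|}{\delta}$, that is, $N = O(\epsilon^{-2}\log|\mathcal{Y}|)$ for fixed $\delta$.

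\textbf{Main obstacle.} The delicate point is the metric. The stated rate holds in sup-norm. In total variation or KL, which is the divergence $R_\text{ep}$ actually uses, the dependence becomes $O(|\mathcal{Y}|/\epsilon^2)$ instead. I would therefore state the result in $\ell_\infty$ and remark on the other metrics.

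A second, smaller subtlety is that the ERM agent chooses its actions adaptively from the CTL. Independence of the outcomes then needs justification. Intervention Independence makes $Y_k$, conditional on the chosen $x$, independent of the past given fresh noise. So for each fixed $x$ the subsequence of trials with that action is still i.i.d. One way to see this is an optional-skipping argument: the values $x$ chosen are predictable with respect to the history, so selecting the trials where action $x$ was taken yields an i.i.d.\ subsequence. I would handle this with a short martingale remark and would not need a full treatment.
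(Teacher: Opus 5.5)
Your proposal is correct and follows the paper's proof essentially step for step. Theorem~\ref{thm:grounding} plus stationarity gives i.i.d.\ draws from $P(Y\mid\doop(X{=}x))$, the strong law gives almost-sure convergence, and Hoeffding with a union bound over $\mathcal{Y}$ gives the sup-norm rate $N \geq \frac{1}{2\epsilon^2}\ln\frac{2|\mathcal{Y}|}{\delta}$, which is exactly the paper's Proposition~\ref{prop:finite-sample}. The paper states one condition explicitly that you leave implicit in your adaptive-selection remark: each value $x$ must be attempted infinitely often, since otherwise the selected subsequence is finite and the strong law does not apply.
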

\noindent\emph{Scope note:} Assumption~(iv) is violated in the \rler experiments; the i.i.d.\ guarantee serves as a best-case baseline (non-stationary extensions in Appendix~\ref{app:additional-proofs}). Finite-sample recovery is empirically confirmed: under adversarial pressure on CausalL2, Llama~3.3~70B recovers from 90.2\% to 96.6\%, exceeding its clean baseline of 96.2\% (Appendix~\ref{app:pressure}).

\vspace{-.12in}
\subsubsection{Architecture}
\label{sec:architecture}
\vspace{-.06in}

The architecture (Figure~\ref{fig:architecture}, Appendix~\ref{app:architecture}) operates over a frozen LLM with three layers and two reward channels.

\emph{Layer~1: Instance Correction.} On each subtask, the agent generates hypothesis $H_i$ conditioned on $G_t$, executes action $a_i$, and observes outcome $Y_i$. If $|\hat{Y}_i - Y_i| > \epsilon$, the system identifies which causal claims in $H_i$ produced the misprediction, queries the CTL for cumulative evidence, and contracts or reinforces edges via AGM revision.

\emph{Layer~2: Schema Adaptation.} When the same reasoning error recurs across unrelated domains, the system classifies it into a structural \emph{failure mode} $f_k$ from a predefined taxonomy $\mathcal{C}$ (Table~\ref{tab:failure-modes}). When a mode's recurrence count exceeds a threshold $\tau_f$, a domain-independent reasoning guard is injected into the meta-prompt. Guards are falsifiable: if a guard increases regret, it is retracted via AGM contraction.

\emph{Layer~3: Epistemic Routing.} When residual regret exceeds a threshold $\theta_\text{route}$ despite corrections, or when $\Delta\text{Net} < 0$ (an \emph{iatrogenic} regime), queries are routed to a different model or flagged for human review. Empirically, authoritative critique on weaker models produces pure iatrogenic effects (GPT-3.5: $\Delta\text{Net}{=}{-}16$; Gemini: $\Delta\text{Net}{=}{-}7$), while stronger models show positive responses (Appendix~\ref{app:iatrogenic}), confirming routing is architecturally necessary.

The full ERM belief revision algorithm and failure taxonomy appear in Appendix~\ref{app:algorithm} (Alg.~\ref{alg:erm}, Tbl.~\ref{tab:failure-modes}).

\vspace{-.08in}
\paragraph{Relation to structured prompting.} \erm differs from chain-of-thought in three ways: critique is \emph{causally grounded} (naming confounders and back-door paths; load-bearing, $p{=}0.006$), it maintains \emph{stateful artifacts} ($G_t$, CTL) across episodes, and these are \emph{inspectable}. Unlike Self-Refine, \erm targets causal \emph{structure} without correct answers. Trace-to-DAG extraction achieves 82\% recall, 91\% precision (Appendix~\ref{app:trace-examples}).

\vspace{-.12in}
\subsection{Cross-Episode Reward Signal}
\label{sec:rler}
\vspace{-.10in}

The single-episode framework detects and corrects errors within one interaction. Across episodes, the same interventional evidence accumulates into a reward signal for strategy selection, yielding \emph{Reinforcement Learning from Epistemic Regret} (\rler). This targets open-domain problems lacking ground-truth verifiers, the regime where outcome-based RL is structurally inapplicable. \rler reuses all \erm artifacts ($G_t$, CTL, failure-mode registry); the additions are a cross-episode reward decomposition, an evaluation metric, and a separation theorem.

\vspace{-.12in}
\paragraph{Reward Decomposition.}
The \rler reward decomposes into two components:
\begin{small}
\begin{equation}
\label{eq:rler-reward}
R_{\rler}(t) = \beta \cdot R_\text{reasoning}(t) + \gamma \cdot R_\text{discovery}(t)
\end{equation}
\end{small}
where $\beta, \gamma > 0$ are weighting coefficients. $R_\text{reasoning}$ is the Jaccard similarity $|H_t \cap H^*_t| / |H_t \cup H^*_t|$ between the trace's causal subgraph $H_t$ and the CTL-derived minimal consistent subgraph $H^*_t$, penalizing both missing and spurious edges. $R_\text{discovery}$ fires when a proposed novel variable passes three evidence-grounded checks. Both use only trace content and CTL entries, so $R_\rler$ is well-defined without gold labels.

\vspace{-.12in}
\paragraph{EDGap: Epistemic Discrimination.}
\label{sec:edgap}
We propose the \emph{Epistemic Discrimination Gap} (EDGap) as a complementary evaluation metric, measuring whether the agent's confidence scores separate correct from incorrect reasoning traces. Partition episodes into confidence terciles $B_k$. Let $\bar{o}_k$ be the empirical correct rate in bucket $B_k$, and $\bar{o}$ the overall base rate. EDGap is a variance-weighted form of Murphy's resolution 
term~\citep{murphy1973vector}:

\vspace{-.05in}
\begin{footnotesize}
\vspace{-.08in}
\begin{equation}
\label{eq:edgap}
\text{EDGap} = \sum_{k} \frac{n_k}{N} \cdot \frac{\left(\bar{o}_k - \bar{o}\right)^2}{\hat{\sigma}_k^2 + \epsilon}.
\end{equation}
\vspace{-.05in}
\end{footnotesize}

If EDGap $> 0$, reward signals built on confidence carry actionable signal. EDGap is computable from the (forecast, outcome) sequence alone~\citep{dawid1984prequential}, requiring no per-decision verifier. Full derivation in Appendix~\ref{app:edgap-derivation}.

\vspace{-.12in}
\paragraph{Separation Theorem.}

\begin{theorem}[Separation for Confounded Environments]
\label{thm:separation}
Consider a semi-Markovian SCM with observed $X, Y$ and latent confounder $C$, where $G^* : C \to X,\; C \to Y$ (no direct $X \to Y$ edge), and assume faithfulness. Then: (i) any RL algorithm whose reward depends only on $(X, Y)$ and whose model class can represent $X \to Y$ has no reward-based signal to exclude $X \to Y$, since $P(Y|X) \neq P(Y)$ under the marginal is equally consistent with a direct effect and with latent confounding; (ii) \rler, which additionally records interventional outcomes $P(Y|\doop(X))$ in its CTL and triggers confounder discovery on persistent epistemic surprise, converges to the interventional equivalence class of $G^*$, provided the agent can intervene on $X$ (Theorem~\ref{thm:grounding}) and sufficient episodes accumulate for FCI identification~\citep{zhang2008completeness}.
\end{theorem}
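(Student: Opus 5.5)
The argument splits along the two parts of the statement. Part (i) is an observational-indistinguishability construction. Part (ii) is a concentration argument based on Theorem~\ref{thm:grounding} and Theorem~\ref{thm:convergence}, followed by an appeal to FCI completeness.

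For (i), I would explicitly construct a second SCM $\mathcal{M}'$ with graph $G' : X \to Y$ and no latent confounder. Set $P'(X) = P(X)$ and choose the mechanism $f'_Y$ so that $P'(Y\mid X) = P(Y\mid X)$. This choice is always available, because $P(Y\mid X)$ is just a conditional kernel and can be realized by $Y = f'_Y(X, U'_Y)$ with independent noise $U'_Y$. Then $P_{\mathcal{M}}(X,Y) = P_{\mathcal{M}'}(X,Y)$. Faithfulness enters only to guarantee $X \not\perp Y$ in $\mathcal{M}$, so that $P(Y\mid X)\neq P(Y)$ and the direct-effect model is not trivially ruled out. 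Now take any RL algorithm whose reward $r$ is a (possibly randomized) function of $(X,Y)$. The joint law of its entire trajectory of observations and rewards is identical under $\mathcal{M}$ and $\mathcal{M}'$, by induction over episodes. Hence every statistic the learner can form, including value estimates and policy updates, has the same distribution in both worlds. If the learner's model class contains $X\to Y$, no reward-based rule can exclude that edge in $\mathcal{M}$ without also excluding it in $\mathcal{M}'$, where it is true. This is exactly Corollary~\ref{prop:rung-collapse} specialized to this graph, and it is also the CHT: $\mathcal{L}_1$ agreement together with $\mathcal{L}_2$ disagreement, since $P(Y\mid\doop(X)) = P(Y)$ in $\mathcal{M}$ but $P(Y\mid X)$ in $\mathcal{M}'$.

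For (ii), I would proceed in three steps.

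\emph{Step one.} By Theorem~\ref{thm:grounding}, the agent's actions on $X$ satisfy Intervention Independence, so they sample $P(Y\mid\doop(X=x))$. Under $G^*$ this equals $P(Y)$, because the only path from $X$ to $Y$ is the back-door $X\leftarrow C\to Y$, which the intervention severs.

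\emph{Step two.} By Theorem~\ref{thm:convergence}, the CTL-based empirical estimate $\hat P_N(Y\mid\doop(X=x))$ converges almost surely to $P(Y)$ for each $x$. Meanwhile, the hypothesis containing $X\to Y$ predicts $P(Y\mid X=x)$. By faithfulness this differs from $P(Y)$ for some $x$, with total-variation gap $\delta>0$. Once the estimation error is below $\delta/2$ and $\epsilon<\delta/2$, every episode whose hypothesis contains $C_j = (X\to Y)$ registers $\Delta_i>\epsilon$. So $\mathrm{CTL.Refute}(C_j)$ grows linearly in $N$ while $\mathrm{CTL.Support}(C_j)$ stays bounded, almost surely. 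This is the "persistent epistemic surprise" that triggers AGM contraction of $X\to Y$ and launches confounder discovery.

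\emph{Step three.} The remaining structure is identified by combining the observational dependence $X\not\perp Y$ with the interventional independence $Y \perp \doop(X)$. This combination can only be explained by a latent common cause. I would invoke FCI completeness~\citep{zhang2008completeness}, applied to the pooled observational and interventional data: with enough samples, its conditional-independence tests are consistent, and it outputs a PAG whose edge $X \leftrightarrow Y$ marks confounding. Refining the PAG with the interventional constraint pins down the interventional equivalence class of $G^*$.

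The main obstacle is step three. The statement is informal about what "triggers confounder discovery" means and about how a proposed latent $C$ gets represented in $G_t$. I would therefore fix a concrete protocol: once the refute count exceeds a threshold, run FCI with a sequence of significance levels $\alpha_N\to 0$ chosen so that all tests are simultaneously consistent. I would then argue, via Borel--Cantelli on the union of test errors, that the output is eventually constant and equal to the correct class. A secondary subtlety is stationarity (assumption (iv) of Theorem~\ref{thm:convergence}), which I would simply assume here.
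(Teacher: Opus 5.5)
Your decomposition is the paper's. In (i), your explicit $\mathcal{M}'$ and trajectory-law induction is a cleaner version of the paper's CHT appeal. Both tacitly assume the outcome-only learner sees passively sampled $(X,Y)$ rather than setting $X$ itself. In (ii), you follow the same chain: Grounding, then the CTL reveals $P(Y\mid\doop(X))=P(Y)$, then surprise on $X\to Y$ traces, then FCI. The one change is that you replace the paper's LLM-hypothesize/validate/AGM-expand step with a concrete FCI protocol.

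The step that fails is Step three. The assertion that observational dependence plus $P(Y\mid\doop(X))=P(Y)$ ``can only be explained by a latent common cause'' is false. The confounder-free graph $Y\to X$, parametrized by $P(Y)$ and $P(X\mid Y)$ from the observed joint, reproduces $P(X,Y)$ and also gives $P(Y\mid\doop(X))=P(Y)$. Accordingly, FCI cannot give you $X\leftrightarrow Y$:
\begin{itemize}
\item With two observed variables there are no unshielded triples, so on observational data FCI returns $X\circ\!\!-\!\!\circ Y$.
\item Feeding it the pooled sample without marking regimes again yields $X\circ\!\!-\!\!\circ Y$, since the mixture is generically just a dependent pair.
\item The interventional information enters only through a regime indicator $R$ (JCI-style). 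Then the output is $R\to X$ together with $X\leftarrow\!\circ\,Y$ (arrowhead at $X$, circle at $Y$). This encodes only that $X$ is not an ancestor of $Y$.
\end{itemize}

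The theorem survives, because this weaker conclusion is exactly the target. The interventional equivalence class of $G^*$ relative to the available regimes $\{\emptyset,\{X\}\}$ contains both $Y\to X$ and $X\leftrightarrow Y$. That is why the paper's proof stresses that the surrogate latent $C$ is not unique, and that what is identified is only the class needed to reject $X\to Y$. So restate Step three as follows:
\begin{itemize}
\item the consistent $\doop(X)$ test excludes $X\to Y$, using faithfulness of the interventional law;
\item the persistent observational dependence keeps some edge;
\item the residual ambiguity is the class itself.
\end{itemize}
With two observed variables, FCI adds nothing beyond this.

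A smaller point in Step two. Under Definition~\ref{def:ctl}, $\Delta_i=\hat Y_i-Y_i$ is a single-outcome residual, and it is nondegenerate even for a correct hypothesis. So ``Refute grows linearly while Support stays bounded'' does not follow from $\hat P_N$ being within $\delta/2$ of $P(Y)$. Your margin argument needs surprise measured distributionally, e.g.\ $\|P_{H_i}(Y\mid\doop(X{=}x))-\hat P_N(Y\mid\doop(X{=}x))\|_{\mathrm{TV}}>\epsilon$ in the spirit of $R_\text{ep}$, with $\epsilon<\delta/2$ as an explicit hypothesis. The paper glosses the same point, but once you make the argument quantitative the definitions must match.
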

 
\noindent \emph{Proof sketch.} (i) Outcome-only RL sees $P(Y|X) \neq P(Y)$, which is equally consistent with $X \to Y$ and with latent confounding; the reward alone provides no signal to distinguish them. (ii) \rler's CTL reveals $P(Y|\doop(X)) = P(Y)$; traces asserting $X \to Y$ produce systematic epistemic surprise, triggering confounder discovery via FCI~\citep{spirtes1993causation,zhang2008completeness}. Full proof in Appendix~\ref{app:proofs}.

\noindent \emph{Faithfulness scope.} Part~(ii) requires faithfulness over the ancestral graph~\citep{zhang2008completeness}, plausible for CausalT5K's small designed graphs but potentially violated in general Regime~B deployment. Theorem~\ref{thm:separation} is an \emph{asymptotic identifiability} result, complemented by finite-sample validation in \S\ref{sec:rler-experiments}.

\vspace{-.1in}
\section{Experiments}
\label{sec:experiments}
\label{sec:erm-experiments}
\vspace{-.08in}

\begin{description}[style=unboxed,leftmargin=0pt,itemsep=2pt]
\item[\textbf{RQ1} (Detection \& Correction):] Do frontier models exhibit Rung Collapse, and can single-episode \erm correct the failures?
\item[\textbf{RQ2} (Cross-Episode Signal):] Does epistemic reward extend across episodes?
\item[\textbf{RQ3} (Judge Integrity):] Does the judge enforce structural rules rather than leak answers?
\item[\textbf{RQ4} (Ablation):] Which factors drive \erm's correction: causal content, judge quality, iteration budget, or reward axis?
\end{description}

\vspace{-.05in}
\noindent\emph{Scope.} This paper validates Layer~1 (instance correction) end-to-end: detection, single-episode correction, cross-episode signal, and judge integrity. Layer~3 (routing) is empirically motivated by the iatrogenic-critique finding (\S\ref{sec:architecture}); Layer~2 (schema adaptation) remains a design contribution evaluated in future work.

\vspace{-.12in}
\paragraph{Datasets.} RQ1 and RQ3--RQ4 use CausalT5K~\citep{causalT5K}, a public benchmark of $\mathcal{L}_2$ causal scenarios where the $\mathcal{L}_1$ answer is deliberately wrong. Detection and correction use v1 ($N{=}1{,}360$); the matched-richness ablation uses v2 ($N{=}4{,}054$). The cross-episode extension uses 466 Wolf Cases for GPT-4o and 132-case subsamples for the remaining three models (gold labels withheld during episodes). Because CausalT5K is constructed to make $\mathcal{L}_1$ shortcuts fail, it is favorable ground for \erm; cross-benchmark validation on CLadder $\mathcal{L}_2$ (Appendix~\ref{app:cladder}) provides independent confirmation that Rung Collapse is not benchmark-specific.

\vspace{-.12in}
\paragraph{Models.} RQ1 (single-episode): GPT-3.5 Turbo, Llama 3.3 70B, GPT-4 Turbo, Gemini 2.5 Flash, GPT-5.2, Claude Sonnet 3.5. RQ2 (cross-episode) and RQ3--RQ4: GPT-3.5 Turbo, GPT-4o, GPT-5.4-nano, Llama 3.3 70B. The two sets partially overlap; each targets a research question.

\vspace{-.12in}
\paragraph{Proxy evaluation framing.} Following CLadder~\citep{jin2023cladder} and CauSciBench~\citep{acharya2025causcibench}, we test causal \emph{reasoning} via text queries rather than physical intervention. Theorem~\ref{thm:grounding} specifies when \erm's cross-episode evidence log becomes a valid interventional memory; the CausalT5K and CLadder experiments test the complementary proxy question of whether causal critique can correct Rung Collapse in text. Accordingly, the LLM experiments should be read as evidence that causal critique can identify and repair exposed reasoning failures, while the formal separation claim (Theorem~\ref{thm:separation}) is supported only in the controlled CausalBench-Seq setting where the theorem's intervention assumptions hold. The matched-richness ablation and scenario-blind judge (\S\ref{sec:leakage}) bound the two most likely confounds (prompt richness and answer leakage).

\vspace{-.12in}
\subsection{RQ1: Detection, Scaling, and Correction}
\label{sec:rq1}
\label{sec:rq12}
\label{sec:rq3}
\vspace{-.08in}

\emph{Experiment~A (Detection)} measures zero-shot Rung Collapse across six frontier LLMs on CausalT5K v1.
\emph{Experiment~B (Correction)} applies outcome-only feedback (``Are you sure?'') versus targeted \erm critique to each model's Wolf Cases (instances that failed in Exp.~A). The critique targets reasoning \emph{structure} (confounders, back-door violations) rather than answer correctness. The headline experiment uses a benchmark-derived causal rationale with an error-direction cue, testing the correction mechanism under favorable conditions (prompt details in Appendix~\ref{app:reproducibility}). Fully label-free operation, where a judge generates critique from the reasoning trace alone without gold labels or error cues, is validated separately: the six-condition ablation (Appendix~\ref{app:exp2b}) confirms 0\% false-flips and causal content load-bearing for GPT-4T ($p{=}0.0001$), and the cross-episode protocol (\S\ref{sec:rler-experiments}) operates entirely label-free.

\begin{table}[t]
\vspace{-.05in}
\centering\footnotesize
\caption{Rung Collapse detection (Exp.~A) and correction (Exp.~B) on CausalT5K ($N{=}1{,}360$). Exp.~B targets each model's Wolf Cases (oracle-gated; judge integrity in \S\ref{sec:leakage}). $\Delta$ = ERM $-$ Outcome only. $p$: McNemar.}
\label{tab:collapse}
\label{tab:correction}
\vspace{-.05in}
\begin{tabular}{@{} l @{\;\;} r r @{\quad} r r @{\;\;} r @{\;\;} r @{}}
\toprule
 & \multicolumn{2}{c}{\textbf{Exp.~A (Detection)}} & \multicolumn{4}{c}{\textbf{Exp.~B (Correction on Wolf Cases)}} \\
\cmidrule(lr){2-3}\cmidrule(lr){4-7}
\textbf{Model} & \textbf{Collapse} & \textbf{Accuracy} & \textbf{Outcome only} & \textbf{ERM (trace)} & $\boldsymbol{\Delta}$ & $\boldsymbol{p}$ \\
\midrule
\multicolumn{7}{@{}l}{\textit{Compliant models (outcome-only reprompting already effective)}} \\[2pt]
GPT-3.5 Turbo     & $17.3\%$ & $82.9\%$ & $91.4\%$ & $93.5\%$ & $+2.1$ & $.49$ \\
Llama 3.3 70B$^{\S}$ & $15.1\%$ & $84.9\%$ & $89.8\%$ & $92.2\%$ & $+2.4$ & $.07$ \\
Gemini 2.5 Flash  & $7.7\%$  & $92.3\%$ & $90.2\%$ & $96.6\%$ & $+6.4$ & $.02$ \\
Claude Sonnet 3.5\textsuperscript{$\dagger$} & $0.9\%$  & $98.5\%$ & $100\%$ & $83.3\%$ & $-16.7$ & $.48$ \\
\midrule
\multicolumn{7}{@{}l}{\textit{Stubborn models (outcome-only correction fails, ERM succeeds)}} \\[2pt]
GPT-4 Turbo       & $12.5\%$ & $87.5\%$ & $31.4\%$ & $90.5\%$ & $+59.1$ & $<\!.001$ \\
GPT-5.2           & $3.7\%$  & $96.3\%$ & $24.5\%$ & $77.6\%$ & $+53.1$ & $<\!.001$ \\
\bottomrule
\end{tabular}

\vspace{.05in}
{\scriptsize $\dagger$Claude Sonnet 3.5: $n{=}12$ Wolf Cases; compliant (100\% OO recovery). Wide CIs; stubborn-model evidence from GPT-4T ($n{=}170$) and GPT-5.2 ($n{=}50$). $\S$Llama 3.3 70B: a larger re-run under updated prompts ($n{=}1{,}101$) yields 96.2\% detection accuracy (3.8\% collapse), confirming compliant-model classification. Table reports the original prompt configuration for cross-model protocol consistency.}
\vspace{-.15in}
\end{table}

\textbf{Detection} (Table~\ref{tab:collapse}, left): Scaling reduces but does not eliminate Rung Collapse. GPT-5.2 retains a 3.7\% collapse rate (50 cases) and Claude Sonnet 3.5 a 0.9\% rate (12 cases), confirming the problem is structural; even the strongest models exhibit residual $\mathcal{L}_1$ shortcuts.

Table~\ref{tab:collapse} correction rates are oracle-gated (evaluated on each model's detected Wolf Cases); deployment requires a separate trigger or judge (label-free trigger: Appendix~\ref{app:exp2b}; cross-episode protocol: \S\ref{sec:rler-experiments}).

\textbf{Correction} (Table~\ref{tab:collapse}, right): The baseline is outcome-only ``Are you sure?'' reprompting. Self-consistency, Best-of-$N$, and Self-Refine all \emph{underperform} this baseline on causal tasks (App.~\ref{app:baseline-suite}). ERM targets confounders and back-door paths without correct answers, isolating \emph{reasoning-quality} from \emph{outcome} feedback.

Compliant models (GPT-3.5, Llama, Gemini) already achieve 89--100\% correction under outcome-only reprompting, leaving limited headroom ($+2$ to $+6$ pp).

The picture changes for reasoning-heavy models. GPT-4 Turbo and GPT-5.2 resist outcome-only correction (31\% and 25\%) but respond to ERM's causal critique (90.5\% and 77.6\%; both $p < 0.001$, McNemar). Gemini's nominal gain ($+6.4$ pp, $p{=}.02$) does not survive Holm correction. We call this bifurcation \emph{Epistemic Stubbornness}: models that build confident internal justifications cannot be dislodged by outcome-only reprompting but respond when critique names specific confounders and checks back-door paths (Figure~\ref{fig:correction-main}a).

False-correction rates remain low ($0.5\%$ GPT-4T, $8.0\%$ GPT-5.2; Appendix~\ref{app:false-flip}). A six-condition ablation on both stubborn models (Appendix~\ref{app:exp2b}) shows that false-flips are driven by the error-direction cue (``you were incorrect''), not the critique itself: label-free critique without the cue achieves 0\% false-flips across ${\sim}300$ tests (Fisher $p < 10^{-5}$). The ablation also confirms causal vocabulary is load-bearing for GPT-4T ($+21.4$ pp, $p = 0.0001$).

\emph{Conditional recovery potential.} Under this oracle gate, \erm raises GPT-4T from 87.5\% to 98.8\% vs.\ 91.4\% for outcome-only ($+7.4$ pp). For GPT-5.2, \erm reaches 99.1\% vs.\ 97.2\%, but outcome-only correction is actively \emph{harmful} (20\% false-flip rate), the most direct evidence that Epistemic Stubbornness scales with capability.

\emph{Extended validation.} The bifurcation replicates on CausalL2 ($N{=}1{,}000$/model, 5 models), with stubborn cases clustering into Confounding (33\%), Rung Collapse (31\%), and Empty Verification (16\%) (Appendix~\ref{app:stubborn-taxonomy}). Structural critique recovers adversarial-pressure sycophancy drops of 4--9 pp (Appendix~\ref{app:pressure}).

\begin{figure}[t]
\vspace{-.1in}
\centering
\begin{minipage}[t]{0.48\linewidth}
\centering
\includegraphics[width=\linewidth,height=0.75\linewidth,keepaspectratio]{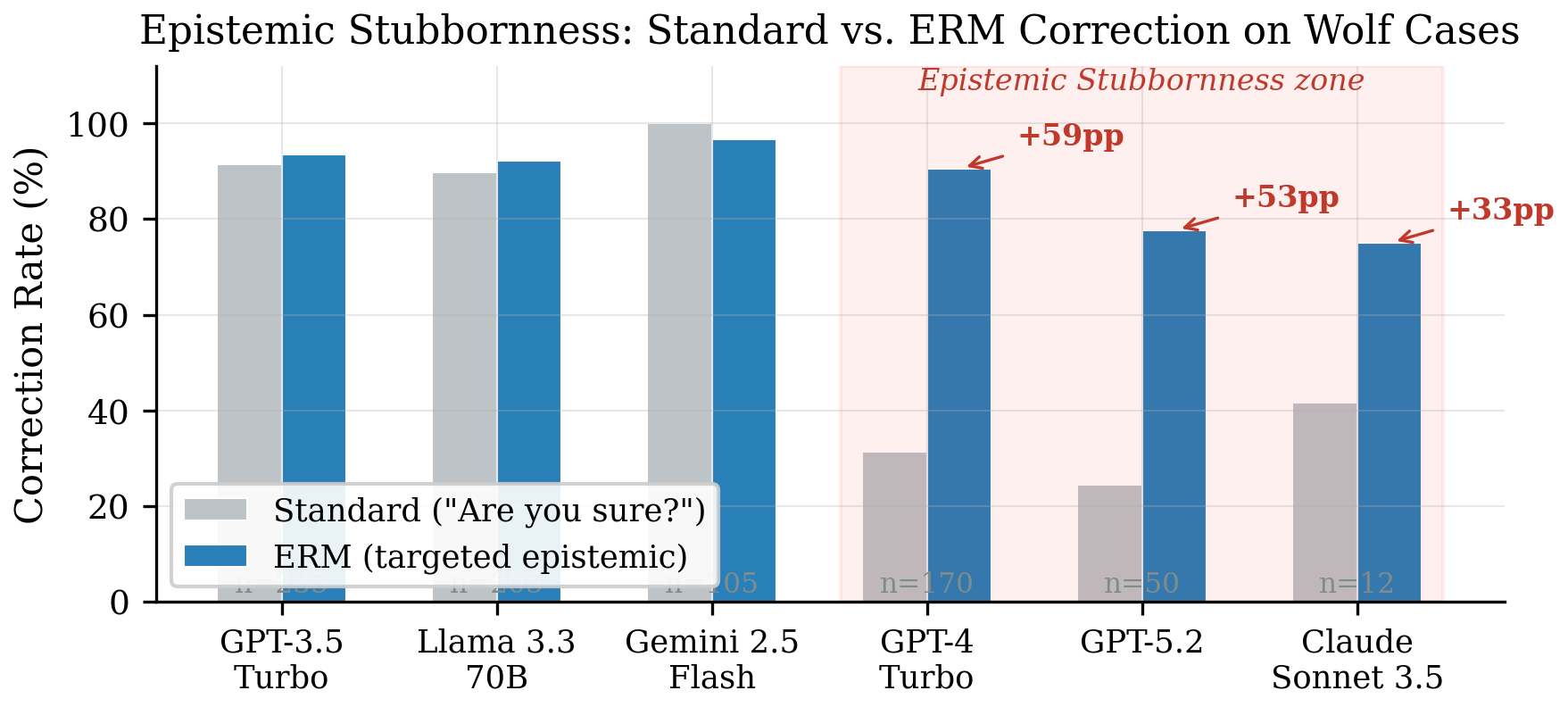}
\end{minipage}\hfill
\begin{minipage}[t]{0.48\linewidth}
\centering
\includegraphics[width=\linewidth,height=0.75\linewidth,keepaspectratio]{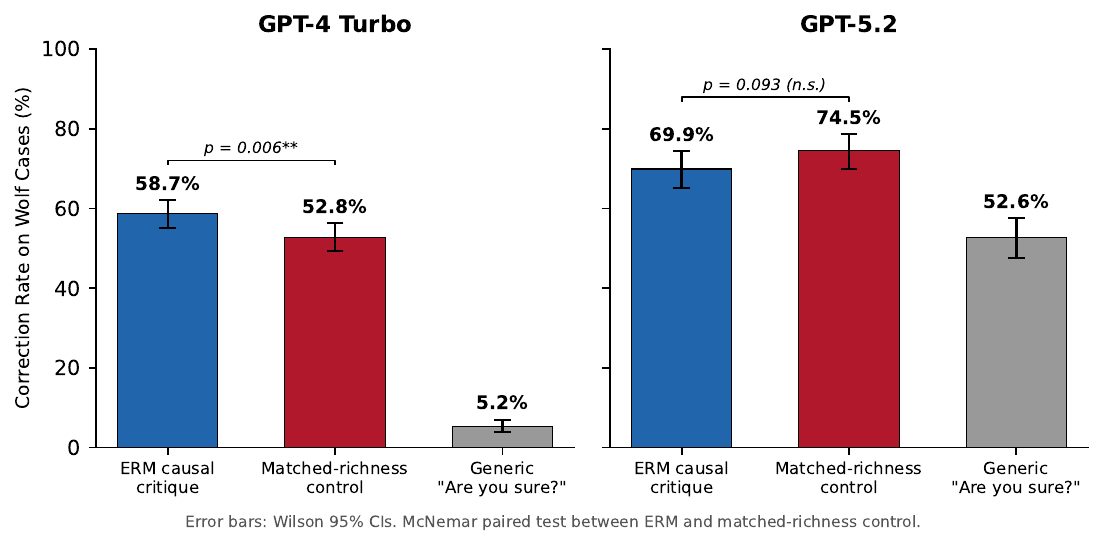}
\end{minipage}
\vspace{-.08in}
\caption{\textbf{(a)} Epistemic Stubbornness: stubborn models (GPT-4T, GPT-5.2) resist outcome-only correction (25--31\%) yet achieve $+53$--$59$ pp gains under ERM's causal critique. \textbf{(b)} Matched-richness ablation confirms causal vocabulary is the active ingredient for GPT-4T. Wilson 95\% CIs.}
\label{fig:matched-richness-main}
\label{fig:correction-main}
\vspace{-.1in}
\end{figure}

\vspace{-.12in}
\paragraph{Matched-richness ablation.}
A matched-richness ablation on CausalT5K v2 ($N{=}4{,}054$) disentangles causal targeting from prompt richness using three conditions: (1)~ERM causal critique, (2)~matched-richness non-causal control (identical length and framing, zero causal vocabulary), and (3)~outcome-only (Figure~\ref{fig:correction-main}b, Table~\ref{tab:matched-richness}).

\begin{table}[t]
\centering\footnotesize
\caption{Matched-richness ablation on Wolf (CausalT5K v2). Wilson 95\% CIs; McNemar's test.}
\label{tab:matched-richness}
\begin{tabular}{llccc}
\toprule
\textbf{Model} & \textbf{Condition} & $\boldsymbol{N}$ & \textbf{Corr.\ Rate} & $\boldsymbol{p}$ \textbf{vs ERM} \\
\midrule
GPT-4 Turbo & ERM causal        & 782 & $58.7\%$ $[55.2, 62.1]$ & --- \\
             & Matched-richness  & 782 & $52.8\%$ $[49.3, 56.3]$ & $0.006$ \\
             & Outcome only      & 782 & $\phantom{0}5.2\%$ $[3.9, 7.0]$   & $<0.001$ \\
\midrule
GPT-5.2     & ERM causal        & 369 & $69.9\%$ $[65.1, 74.4]$ & --- \\
             & Matched-richness  & 369 & $74.5\%$ $[69.8, 78.7]$ & $0.093$ \\
             & Outcome only      & 369 & $52.6\%$ $[47.5, 57.6]$ & $<0.001$ \\
\bottomrule
\end{tabular}
\vspace{-.15in}
\end{table}

For GPT-4 Turbo, ERM's causal vocabulary significantly outperforms the matched-richness control ($+5.9$ pp, $p{=}0.006$): causal targeting is load-bearing. For GPT-5.2, any structured critique suffices ($p{=}0.093$, n.s.). White-box analysis of correction patterns appears in Appendix~\ref{app:erm-whitebox}.

\vspace{-.12in}
\subsection{RQ2: Cross-Episode Signal}
\label{sec:rq4}
\label{sec:rler-experiments}
\label{sec:separation-sim}
\vspace{-.10in}

The single-episode results show \erm corrects reasoning errors within one interaction. We now test whether the epistemic signal extends across episodes. The cross-episode judge operates under the \emph{label-free protocol}: it receives only the scenario and the model's reasoning trace, with no access to correct answers, benchmark rationales, or error-direction cues. Every feedback instance begins: ``The reviewer has NO access to the correct answer.''

\vspace{-.12in}
\paragraph{LLM factorial.}
\emph{Protocols:} well-specified (\textsc{rler}), mis-specified (\textsc{rler\_Bad}), counterfactual-prompt, outcome-only, and no-critique baseline. \emph{Judges:} premium (GPT-5.4-nano) and cheap (GPT-4o); judges generate structural causal critique from the trace alone. \emph{Design:} $2{\times}2$ factorial (judge prompt $\times$ iteration budget) across 4 models; GPT-4o cells run on all 466 Wolf Cases (cases GPT-4o answers incorrectly zero-shot), remaining models on 132-case subsamples. Accuracy: McNemar's paired test; EDGap: 10,000-trial bootstrap CIs.

Two regimes emerge (Table~\ref{tab:rq1-bvsbad}). \emph{Weak policies} (GPT-3.5, Llama): outcome reward already detects the bad protocol ($\Delta\text{acc} \in [0.22, 0.34]$, $p < 0.001$). \emph{Strong policy} (GPT-4o, $N{=}466$): outcome signal is weaker ($\Delta\text{Acc} = +0.077$, $p < 10^{-5}$), but the epistemic axis discriminates sharply via collapse detection (10.3\% well vs.\ 60.3\% bad) and reasoning scores ($\Delta{=}+27.4$). Under the label-free protocol, well-specified critique reduces Rung Collapse from 60.3\% to 10.3\% ($-50$ pp, $p < 10^{-50}$, McNemar) and separates reasoning scores by $+27.4$ points ($p < 10^{-100}$, paired $t$).

\begin{table}[t!]
\vspace{-.06in}
\centering\footnotesize
\caption{$\Delta$Acc (well $-$ mis-specified), premium judge. Bold: $p < 0.001$.}
\vspace{-.10in}
\label{tab:rq1-bvsbad}
\begin{tabular}{l cccc}
\toprule
 & v1/mi1 & v3/mi1 & v1/mi3 & v3/mi3 \\
\midrule
GPT-3.5       & $\mathbf{+.24}$ & $\mathbf{+.30}$ & $\mathbf{+.24}$ & $\mathbf{+.27}$ \\
GPT-4o & $\mathbf{+.07}$ & $\mathbf{+.08}$ & $+.00$ & $+.06$ \\
GPT-5.4n      & $+.08$ & $+.10$ & $+.07$ & $+.03$ \\
Llama-70B     & $\mathbf{+.24}$ & $\mathbf{+.24}$ & $\mathbf{+.34}$ & $\mathbf{+.22}$ \\
\bottomrule
\end{tabular}
\vspace{-.18in}
\end{table}

\vspace{-.12in}
\paragraph{Controlled simulation.}
The LLM factorial is a proxy evaluation because Intervention Independence does not hold in text. To test the separation theorem under its own assumptions, we run CausalBench-Seq (confounded linear-Gaussian SCM, $C{\to}X$, $C{\to}Y$, no $X{\to}Y$; $E{=}500$ episodes, 20 seeds; Appendix~\ref{app:separation-details}). RLVR drives $P(X{\to}Y)$ to 0.99 (wrong); the epistemic-informed \rler controller drives it to 0.01 within ${\sim}50$ episodes, achieving cumulative regret $8.82 \pm 0.14$ vs.\ RLVR's $349.22 \pm 0.00$ ($p < 10^{-8}$; Figure~\ref{fig:separation-sim}), a $38\times$ separation. The \rler controller's regret grows as $\mathcal{O}(\log E)$ inside the Theorem~\ref{thm:separation} envelope vs.\ $\Theta(E)$ for outcome-only. The Epistemic-reset ablation ($66.99 \pm 0.11$, $7.6\times$ worse) confirms cross-episode memory is load-bearing.


\begin{table}[ht]
\vspace{-.06in}
\centering\footnotesize
\caption{Scenario-blind judge: $N{=}102$ Wolf Cases (GPT-4T agent, GPT-5.4-nano judge). Blind judge detects \emph{less} collapse, arguing against answer leakage.}
\vspace{-.06in}
\label{tab:blind-judge}
\begin{tabular}{lccc}
\toprule
\textbf{Judge Condition} & \textbf{Struct.\ Score} & \textbf{Collapse Rate} & \textbf{$\kappa$ vs Full} \\
\midrule
Full (scenario+trace)   & $57.3 \pm 23.6$ & $36.3\%$ & --- \\
Blind (DAG only)        & $75.0 \pm 17.8$ & $13.7\%$ & $-0.05$ \\
Anon-blind ($V_1,V_2,\ldots$) & $80.7 \pm 21.2$ & $11.8\%$ & $\phantom{-}0.03$ \\
\bottomrule
\end{tabular}
\vspace{-.1in}
\end{table}

\vspace{-.10in}
\subsection{RQ3: Judge Integrity}
\label{sec:leakage}
\vspace{-.08in}

Four observations argue structural enforcement, not answer leakage.
\emph{First}, a more capable judge does not improve terminal accuracy; it only reduces iteration cost.
\emph{Second}, the dominant rejection mechanism is Rung Collapse detection (60--95\% of iterations).
\emph{Third}, \textsc{rler\_Bad} receives identical feedback but corrupts reward aggregation; if feedback leaked answers, Bad would benefit equally, yet it underperforms ($\Delta\text{acc} = +0.24$ to $+0.30$, all $p < 10^{-4}$).
\emph{Fourth}, a \textbf{scenario-blind judge} (Table~\ref{tab:blind-judge}) evaluates $N{=}102$ Wolf Cases under full, blind (DAG only), and anonymized-blind ($V_1, V_2, \ldots$) conditions. The blind judge detects \emph{less} collapse (13.7\% vs.\ 36.3\%), indicating scenario comprehension makes the judge \emph{stricter}. Anonymization has negligible effect ($\kappa{=}0.47$), confirming variable names do not leak signal. Details in Appendix~\ref{app:rler-whitebox}.

\vspace{-.1in}
\subsection{RQ4: Ablation Studies}
\label{sec:ablations}
\vspace{-.08in}

We isolate four factors that drive \erm's correction performance (Figure~\ref{fig:ablations}).

\begin{figure}[t!]
\centering
\includegraphics[width=\linewidth,height=0.55\linewidth,keepaspectratio]{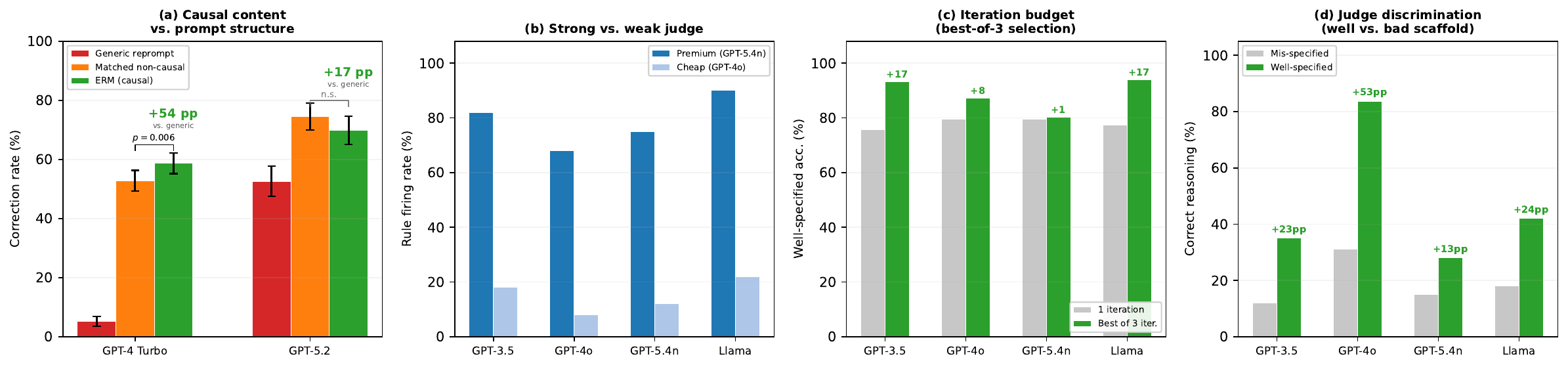}
\vspace{-.18in}
\caption{\textbf{Ablation studies.} \emph{(a)} ERM's causal vocabulary is load-bearing for GPT-4T ($p{=}0.006$); for GPT-5.2, any structured critique suffices. \emph{(b)} Premium vs.\ cheap judge: same terminal accuracy, fewer iterations. \emph{(c)} Best-of-3 iteration budget improves all models. \emph{(d)} Well-specified ERM protocol achieves markedly higher correct reasoning across models.}
\label{fig:ablations}
\vspace{-.18in}
\end{figure}

\vspace{-.1in}
\paragraph{Strong judge vs.\ weak judge.}
Premium (GPT-5.4-nano) and cheap (GPT-4o) judges yield indistinguishable terminal accuracy. The premium judge fires Rung Collapse detection at 60--95\% of iterations vs.\ 5--25\% for the cheap judge (Figure~\ref{fig:wb-rung}), correcting in fewer iterations, but both converge to the same endpoint. Causal content, not judge quality, is the active ingredient.

\vspace{-.1in}
\paragraph{Iteration budget.}
Increasing from 1 to 3 iterations improves endpoint accuracy under the protocol's terminal-selection rule: GPT-3.5 and Llama each gain $+17$ pp, GPT-4o gains $+8$ pp, and GPT-5.4-nano gains $+1$ pp ($N{=}132$). Appendix~\ref{app:bon-probe} shows that the epistemic reasoning score itself is not a standalone outcome-accuracy selector, supporting the orthogonality of epistemic and outcome axes. For stubborn models, additional iterations unlock corrections a single pass misses. For GPT-4o ($N{=}466$), additional iterations also amplify judge discrimination, with collapse-rate separation widening from 50 pp (mi1) to 53 pp (mi3).

\vspace{-.1in}
\paragraph{Outcome vs.\ epistemic reward.}
For weak policies (GPT-3.5, Llama), outcome reward alone detects mis-specification ($\Delta\text{acc} = +0.22$ to $+0.34$). For strong policies (GPT-4o, $N{=}466$), the epistemic axis discriminates sharply: collapse detection separates protocols by $50$ pp and reasoning scores by $+27.4$ points, while outcome reward reaches significance only at full sample ($\Delta\text{Acc} = +0.077$, $p < 10^{-5}$). The epistemic axis detects flaws at smaller sample sizes.

\vspace{-.1in}
\paragraph{Label-free validation.}
A six-condition ablation (Appendix~\ref{app:exp2b}) tests fully label-free critique on both stubborn models. Label-free critique without an error-direction cue (C4) achieves 0\% false-flips for both GPT-4T and GPT-5.2, replicated in an independent gold-free Exp~B run ($0/49$ false-flips vs.\ $4/49 = 8.2\%$ for outcome-only). For GPT-4T, causal vocabulary is load-bearing: C4 vs.\ C5 = $55.9\%$ vs.\ $34.5\%$ ($+21.4$ pp, $p{=}0.0001$). The error-direction cue drives the large Table~\ref{tab:collapse} gains but carries a false-flip cost (10--31\%); label-free critique trades peak recovery for zero false-flips. The cross-episode protocol (\S\ref{sec:rler-experiments}) independently confirms label-free operation.

\vspace{-.1in}
\paragraph{DAG-First structural output.}
Requiring explicit DAG output before the conclusion (Appendix~\ref{app:dagfirst}) has model-dependent effects: Gemini~2.5~Flash $+4.8$~pp, Llama~3.3~70B $+4.4$~pp, GPT-5.2 $+2.0$~pp (near ceiling), Claude Sonnet~4.5 $+0.0$~pp (at ceiling). GPT-3.5-Turbo \emph{loses} $-11.7$~pp, producing the DAG mechanically without using it as a scaffold. DAG compliance is ${\geq}99\%$. This motivates model-adaptive deployment: DAG-First for reasoning-capable models, plain critique otherwise.

\vspace{-.1in}
\paragraph{Baseline suite.}
On GPT-4T Wolf Cases ($N{=}170$; Appendix~\ref{app:baseline-suite}), ERM achieves the highest recovery (64.1\%) and lowest residual Rung Collapse (4.1\%). Self-consistency~\citep{wang2023selfconsistency}, Best-of-$N$~\citep{snell2024bon_scaling}, and Self-Refine~\citep{madaan2023selfrefine} \emph{underperform} outcome-only reprompting (30.8\%, 15.9\%, and 14.8\% vs.\ 52.9\%; Appendix~\ref{app:baseline-suite}), confirming that standard test-time methods preserve $\mathcal{L}_1$ shortcuts on causal tasks.

\vspace{-.1in}
\paragraph{Why this is not just stronger prompting.}
ERM is not explained by prompt length or authority: the matched-richness control preserves structure while removing causal vocabulary, yet ERM remains better for GPT-4T ($p=0.006$). Scenario-blind judging reduces, rather than improves, collapse detection, and label-free critique without the error-direction cue yields 0\% false-flips. ERM audits exposed causal traces, the accountability layer available to users and downstream systems.

\vspace{-.1in}
\section{Conclusion and Limitations}
\label{sec:discussion}
\label{sec:conclusion}
\vspace{-.06in}

ERM addresses a blind spot in outcome-based post-training: reward observes what a model concludes, not whether its reasoning is causally valid. Grounded in interventional theory and validated across CausalT5K, CLadder, ablations, and a controlled CausalBench-Seq simulation, ERM provides a label-free mechanism for detecting and correcting structural reasoning flaws, reducing residual Rung Collapse to 4\% where standard test-time methods leave 55--70\%. Its current limits are clear: the LLM experiments are text-query proxy evaluations, the reward-to-policy loop closes only in the controlled SCM simulation, judge circularity is bounded but not eliminated by scenario-blind tests, and ERM audits exposed reasoning traces rather than guaranteed faithful internal computation~\citep{lanham2023measuring}. Full deployment requires calibrated triggers, stronger trace-faithfulness evidence, and live intervention channels.\label{sec:future}

\bibliographystyle{plainnat}
\bibliography{RLER_merged_fixed_audited}

@inproceedings{acharya2025causcibench,
  author       = {Sawal Acharya and Terry Jingchen Zhang and Andrew Kim and Anahita Haghighat and Xianlin Sun and Rahul Babu Shrestha and Maximilian Mordig and Furkan Danisman and Clijo Jose and Yahang Qi and Pepijn Cobben and Bernhard Sch{\"o}lkopf and Mrinmaya Sachan and Zhijing Jin},
  title        = {{CauSciBench}: Assessing {LLM} Causal Reasoning for Scientific Research},
  booktitle    = {NeurIPS 2025 Workshop on Uncovering Causality in Science},
  year         = {2025},
  url          = {https://openreview.net/forum?id=EO8mTLqDuT},
  note         = {Workshop paper},
}

@article{nam2026causaljepa,
  title={Causal-JEPA: Learning World Models through Object-Level Latent Interventions},
  author={Nam, Heejeong and Le Lidec, Quentin and Maes, Lucas and LeCun, Yann and Balestriero, Randall},
  journal={arXiv preprint arXiv:2602.11389},
  year={2026},
  url={https://arxiv.org/abs/2602.11389}
}

@article{agm1985,
  author       = {Carlos E. Alchourr\'{o}n and Peter G\"{a}rdenfors and David Makinson},
  title        = {On the Logic of Theory Change: Partial Meet Contraction and Revision Functions},
  journal      = {Journal of Symbolic Logic},
  volume       = {50},
  number       = {2},
  pages        = {510--530},
  year         = {1985},
  doi          = {10.2307/2274239},
}

@article{ban2025harmonized,
  author       = {Taiyu Ban and Lyuzhou Chen and Derui Lyu and Xiangyu Wang and Qinrui Zhu and Huanhuan Chen},
  title        = {LLM-Driven Causal Discovery via Harmonized Prior},
  journal      = {IEEE Transactions on Knowledge and Data Engineering},
  volume       = {37},
  number       = {4},
  pages        = {1943--1960},
  year         = {2025},
  doi          = {10.1109/TKDE.2025.3528461},
}

@incollection{bareinboim2022fusion,
  author       = {Elias Bareinboim and Juan Correa and Duligur Ibeling and Thomas Icard},
  title        = {On Pearl's Hierarchy and the Foundations of Causal Inference},
  booktitle    = {Probabilistic and Causal Inference: the Works of Judea Pearl},
  pages        = {507--556},
  publisher    = {ACM Books},
  year         = {2022},
}

@inproceedings{bombari2025spurious,
  author       = {Simone Bombari and Marco Mondelli},
  title        = {Spurious Correlations in High Dimensional Regression: The Roles of Regularization, Simplicity Bias and Over-Parameterization},
  booktitle    = {Proceedings of the 42nd International Conference on Machine Learning},
  series       = {Proceedings of Machine Learning Research},
  volume       = {267},
  pages        = {4839--4873},
  publisher    = {PMLR},
  year         = {2025},
}

@misc{causalT5K,
  author       = {Longling Geng and Andy Ouyang and Theodore Wu and Daphne Barretto and Matthew John Hayes and Rachael Cooper and Yuqiao Zeng and Sameer Vijay and Gia Ancone and Ankit Rai and Matthew Wolfman and Patrick Flanagan and Edward Y. Chang},
  title        = {{CausalT5K}: Diagnosing and Informing Refusal for Trustworthy Causal Reasoning of Skepticism, Sycophancy, Detection-Correction, and Rung Collapse},
  year         = {2026},
  doi          = {10.48550/arXiv.2602.08939},
  eprint       = {2602.08939},
  archivePrefix= {arXiv},
  primaryClass = {cs.AI},
  url          = {https://arxiv.org/abs/2602.08939},
  note         = {arXiv preprint},
}

@inproceedings{causalgraphbench2025,
  author       = {Nikolay Babakov and Ehud Reiter and Alberto Bugar{\'i}n-Diz},
  title        = {{CausalGraphBench}: a Benchmark for Evaluating Language Models Capabilities of Causal Graph Discovery},
  booktitle    = {Proceedings of the 63rd Annual Meeting of the Association for Computational Linguistics (Volume 4: Student Research Workshop)},
  pages        = {240--258},
  address      = {Vienna, Austria},
  publisher    = {Association for Computational Linguistics},
  year         = {2025},
  doi          = {10.18653/v1/2025.acl-srw.16},
}

@inproceedings{chi2024unveiling,
  author       = {Haoang Chi and He Li and Wenjing Yang and Feng Liu and Long Lan and Xiaoguang Ren and Tongliang Liu and Bo Han},
  title        = {Unveiling Causal Reasoning in Large Language Models: Reality or Mirage?},
  booktitle    = {Advances in Neural Information Processing Systems 37},
  year         = {2024},
  doi          = {10.52202/079017-3064},
  url          = {https://proceedings.neurips.cc/paper_files/paper/2024/hash/af2bb2b2280d36f8842e440b4e275152-Abstract-Conference.html},
}

@inproceedings{zhang2025conditional_reward,
  author       = {Zheng Zhang and Ziwei Shan and Kaitao Song and Yexin Li and Kan Ren},
  title        = {Linking Process to Outcome: Conditional Reward Modeling for {LLM} Reasoning},
  booktitle    = {The Fourteenth International Conference on Learning Representations},
  year         = {2026},
  eprint       = {2509.26578},
  archivePrefix= {arXiv},
  primaryClass = {cs.LG},
  url          = {https://openreview.net/forum?id=4DJoBOQNd0},
}

@article{damour2022underspecification,
  author       = {Alexander D'Amour and Katherine Heller and Dan Moldovan and Ben Adlam and Babak Alipanahi and Alex Beutel and Christina Chen and Jonathan Deaton and Jacob Eisenstein and Matthew D. Hoffman and Farhad Hormozdiari and Neil Houlsby and Shaobo Hou and Ghassen Jerfel and Alan Karthikesalingam and Mario Lucic and Yian Ma and Cory McLean and Diana Mincu and Akinori Mitani and Andrea Montanari and Zachary Nado and Vivek Natarajan and Christopher Nielson and Thomas F. Osborne and Rajiv Raman and Kim Ramasamy and Rory Sayres and Jessica Schrouff and Martin Seneviratne and Shannon Sequeira and Harini Suresh and Victor Veitch and Max Vladymyrov and Xuezhi Wang and Kellie Webster and Steve Yadlowsky and Taedong Yun and Xiaohua Zhai and D. Sculley},
  title        = {Underspecification Presents Challenges for Credibility in Modern Machine Learning},
  journal      = {Journal of Machine Learning Research},
  volume       = {23},
  number       = {226},
  pages        = {1--61},
  year         = {2022},
}

@article{dawid1984prequential,
  author       = {A. Philip Dawid},
  title        = {Statistical Theory: The Prequential Approach},
  journal      = {Journal of the Royal Statistical Society: Series A},
  volume       = {147},
  number       = {2},
  pages        = {278--292},
  year         = {1984},
}

@inproceedings{du2025llmcd,
  author       = {Huaming Du and Yujia Zheng and Baoyu Jing and Yu Zhao and Gang Kou and Guisong Liu and Tao Gu and Weimin Li and Carl Yang},
  title        = {Causal Discovery through Synergizing Large Language Model and Data-Driven Reasoning},
  booktitle    = {Proceedings of the 31st ACM SIGKDD Conference on Knowledge Discovery and Data Mining},
  year         = {2025},
  doi          = {10.1145/3711896.3736874},
}

@inproceedings{elahi2024partial,
  author       = {Muhammad Qasim Elahi and Mahsa Ghasemi and Murat Kocaoglu},
  title        = {Partial Structure Discovery is Sufficient for No-regret Learning in Causal Bandits},
  booktitle    = {Advances in Neural Information Processing Systems 37},
  year         = {2024},
  eprint       = {2411.04054},
  archivePrefix= {arXiv},
  primaryClass = {cs.LG},
}

@article{foster1998asymptotic,
  author       = {Dean P. Foster and Rakesh V. Vohra},
  title        = {Asymptotic Calibration},
  journal      = {Biometrika},
  volume       = {85},
  number       = {2},
  pages        = {379--390},
  year         = {1998},
}

@article{geirhos2020shortcut,
  author       = {Robert Geirhos and J\"{o}rn-Henrik Jacobsen and Claudio Michaelis and Richard Zemel and Wieland Brendel and Matthias Bethge and Felix A. Wichmann},
  title        = {Shortcut Learning in Deep Neural Networks},
  journal      = {Nature Machine Intelligence},
  volume       = {2},
  number       = {11},
  pages        = {665--673},
  year         = {2020},
}

@misc{intervene2026,
  author       = {Shaojie Shi and Zhengyu Shi and Lingran Zheng and Xinyu Su and Anna Xie and Bohao Lv and Rui Xu and Zijian Chen and Zhichao Chen and Guolei Liu and Naifu Zhang and Mingjian Dong and Zhuo Quan and Bohao Chen and Teqi Hao and Yuan Qi and Yinghui Xu and Libo Wu},
  title        = {InterveneBench: Benchmarking LLMs for Intervention Reasoning and Causal Study Design in Real Social Systems},
  year         = {2026},
  doi          = {10.48550/arXiv.2603.15542},
  eprint       = {2603.15542},
  archivePrefix= {arXiv},
  primaryClass = {cs.AI},
  note         = {arXiv preprint},
}

@inproceedings{jin2023cladder,
  author       = {Zhijing Jin and Yuen Chen and Felix Leeb and Luigi Gresele and Ojasv Kamal and Zhiheng Lyu and Kevin Blin and Fernando Gonzalez Adauto and Max Kleiman-Weiner and Mrinmaya Sachan and Bernhard Sch{\"o}lkopf},
  title        = {CLadder: Assessing Causal Reasoning in Language Models},
  booktitle    = {Advances in Neural Information Processing Systems 36},
  volume       = {36},
  pages        = {31038--31065},
  year         = {2023},
}

@inproceedings{lattimore2016causal,
  author       = {Finnian Lattimore and Tor Lattimore and Mark D. Reid},
  title        = {Causal Bandits: Learning Good Interventions via Causal Inference},
  booktitle    = {NeurIPS},
  year         = {2016},
}

@inproceedings{lightman2023prm,
  author       = {Hunter Lightman and Vineet Kosaraju and Yuri Burda and Harrison Edwards and Bowen Baker and Teddy Lee and Jan Leike and John Schulman and Ilya Sutskever and Karl Cobbe},
  title        = {Let's Verify Step by Step},
  booktitle    = {International Conference on Learning Representations},
  year         = {2024},
}

@inproceedings{mccoy2019right,
  author       = {R. Thomas McCoy and Ellie Pavlick and Tal Linzen},
  title        = {Right for the Wrong Reasons: Diagnosing Syntactic Heuristics in Natural Language Inference},
  booktitle    = {Proceedings of the 57th Annual Meeting of the Association for Computational Linguistics},
  pages        = {3428--3448},
  publisher    = {Association for Computational Linguistics},
  year         = {2019},
  doi          = {10.18653/v1/P19-1334},
}

@inproceedings{baniharouni2026rewarding_doubt,
title={Rewarding Doubt: A Reinforcement Learning Approach to Calibrated Confidence Expression of Large Language Models},
author={David Bani-Harouni and Chantal Pellegrini and Paul Stangel and Ege {\"O}zsoy and Kamilia Zaripova and Nassir Navab and Matthias Keicher},
booktitle={The Fourteenth International Conference on Learning Representations},
year={2026},
url={https://openreview.net/forum?id=yResLmrVO1}
}

@article{murphy1973vector,
  author       = {Allan H. Murphy},
  title        = {A New Vector Partition of the Probability Score},
  journal      = {Journal of Applied Meteorology},
  volume       = {12},
  number       = {4},
  pages        = {595--600},
  year         = {1973},
}

@book{pearl2009causality,
  author       = {Judea Pearl},
  title        = {Causality: Models, Reasoning, and Inference},
  publisher    = {Cambridge University Press},
  edition      = {2nd},
  year         = {2009},
}

@inproceedings{rlvr2025,
  author       = {Xumeng Wen and Zihan Liu and Shun Zheng and Shengyu Ye and Zhirong Wu and Yang Wang and Zhijian Xu and Xiao Liang and Junjie Li and Ziming Miao and Jiang Bian and Mao Yang},
  title        = {Reinforcement Learning with Verifiable Rewards Implicitly Incentivizes Correct Reasoning in Base {LLMs}},
  booktitle    = {The Fourteenth International Conference on Learning Representations},
  year         = {2026},
  eprint       = {2506.14245},
  archivePrefix= {arXiv},
  primaryClass = {cs.LG},
  url          = {https://openreview.net/forum?id=jGbRWwIidy},
}

@inproceedings{shah2020pitfalls,
  author       = {Harshay Shah and Kaustav Tamuly and Aditi Raghunathan and Prateek Jain and Praneeth Netrapalli},
  title        = {The Pitfalls of Simplicity Bias in Neural Networks},
  booktitle    = {NeurIPS},
  year         = {2020},
}

@inproceedings{sharma2024sycophancy,
  author       = {Mrinank Sharma and Meg Tong and Tomasz Korbak and David Duvenaud and Amanda Askell and Samuel R. Bowman and Newton Cheng and Esin Durmus and Zac Hatfield-Dodds and Scott R. Johnston and Shauna Kravec and Timothy Maxwell and Sam McCandlish and Kamal Ndousse and Oliver Rausch and Nicholas Schiefer and Da Yan and Miranda Zhang and Ethan Perez},
  title        = {Towards Understanding Sycophancy in Language Models},
  booktitle    = {International Conference on Learning Representations},
  year         = {2024},
}

@book{spirtes1993causation,
  author       = {Spirtes, Peter and Glymour, Clark and Scheines, Richard},
  title        = {Causation, Prediction, and Search},
  publisher    = {Springer-Verlag},
  year         = {1993},
}

@inproceedings{wan2025llmcd_survey,
  author       = {Guangya Wan and Yunsheng Lu and Yuqi Wu and Mengxuan Hu and Sheng Li},
  title        = {Large Language Models for Causal Discovery: Current Landscape and Future Directions},
  booktitle    = {Proceedings of the Thirty-Fourth International Joint Conference on Artificial Intelligence},
  pages        = {10687--10695},
  year         = {2025},
  doi          = {10.24963/ijcai.2025/1186},
}

@misc{wang2025causal_reward,
  author       = {Chaoqi Wang and Zhuokai Zhao and Yibo Jiang and Zhaorun Chen and Chen Zhu and Yuxin Chen and Jiayi Liu and Lizhu Zhang and Xiangjun Fan and Hao Ma and Sinong Wang},
  title        = {Beyond Reward Hacking: Causal Rewards for Large Language Model Alignment},
  year         = {2025},
  doi          = {10.48550/arXiv.2501.09620},
  eprint       = {2501.09620},
  archivePrefix= {arXiv},
  primaryClass = {cs.LG},
  note         = {arXiv preprint},
}

@inproceedings{yan2024linear,
  author       = {Zirui Yan and Ali Tajer},
  title        = {Linear Causal Bandits: Unknown Graph and Soft Interventions},
  booktitle    = {NeurIPS},
  year         = {2024},
}

@misc{yang2024critical,
  author       = {Linying Yang and Vik Shirvaikar and Oscar Clivio and Fabian Falck},
  title        = {A Critical Review of Causal Reasoning Benchmarks for Large Language Models},
  year         = {2024},
  doi          = {10.48550/arXiv.2407.08029},
  eprint       = {2407.08029},
  archivePrefix= {arXiv},
  primaryClass = {cs.CL},
  note         = {arXiv preprint},
}

@inproceedings{yu2025causaleval,
  author       = {Longxuan Yu and Delin Chen and Siheng Xiong and Qingyang Wu and Dawei Li and Zhikai Chen and Xiaoze Liu and Liangming Pan},
  title        = {{CausalEval}: Towards Better Causal Reasoning in Language Models},
  booktitle    = {Proceedings of the 2025 Conference of the Nations of the Americas Chapter of the Association for Computational Linguistics: Human Language Technologies (Volume 1: Long Papers)},
  pages        = {12512--12540},
  address      = {Albuquerque, New Mexico},
  publisher    = {Association for Computational Linguistics},
  year         = {2025},
  doi          = {10.18653/v1/2025.naacl-long.622},
}

@article{zevcevic2023causal,
  author       = {Matej Ze\v{c}evi\'{c} and Moritz Willig and Devendra Singh Dhami and Kristian Kersting},
  title        = {Causal Parrots: Large Language Models May Talk Causality but Are Not Causal},
  journal      = {Transactions on Machine Learning Research},
  year         = {2023},
}

@article{zhang2008completeness,
  author       = {Jiji Zhang},
  title        = {On the Completeness of Orientation Rules for Causal Discovery in the Presence of Latent Confounders and Selection Bias},
  journal      = {Artificial Intelligence},
  volume       = {172},
  number       = {16--17},
  pages        = {1873--1896},
  year         = {2008},
}

@misc{zhou2024causalbench,
  author       = {Yu Zhou and Xingyu Wu and Beicheng Huang and Jibin Wu and Liang Feng and Kay Chen Tan},
  title        = {{CausalBench}: A Comprehensive Benchmark for Causal Learning Capability of Large Language Models},
  year         = {2024},
  doi          = {10.48550/arXiv.2404.06349},
  eprint       = {2404.06349},
  archivePrefix= {arXiv},
  primaryClass = {cs.AI},
  note         = {arXiv preprint},
}

@inproceedings{christiano2017rlhf,
  author    = {Paul Christiano and Jan Leike and Tom Brown and Miljan Martic and Shane Legg and Dario Amodei},
  title     = {Deep Reinforcement Learning from Human Preferences},
  booktitle = {NeurIPS},
  year      = {2017},
}

@misc{cobbe2021gsm8k,
  author    = {Karl Cobbe and Vineet Kosaraju and Mohammad Bavarian and Mark Chen and Heewoo Jun and Lukasz Kaiser and Matthias Plappert and Jerry Tworek and Jacob Hilton and Reiichiro Nakano and Christopher Hesse and John Schulman},
  title     = {Training Verifiers to Solve Math Word Problems},
  year      = {2021},
  eprint    = {2110.14168},
  archivePrefix = {arXiv},
  primaryClass  = {cs.LG},
  note      = {arXiv preprint},
}

@article{deepseekr1_2025,
  author       = {Daya Guo and Dejian Yang and Haowei Zhang and Junxiao Song and Peiyi Wang and Qihao Zhu and Runxin Xu and Ruoyu Zhang and Shirong Ma and Xiao Bi and Xiaokang Zhang and Xingkai Yu and Yu Wu and Z. F. Wu and Zhibin Gou and Zhihong Shao and Zhuoshu Li and Ziyi Gao and others},
  title        = {{DeepSeek-R1} Incentivizes Reasoning in {LLMs} through Reinforcement Learning},
  journal      = {Nature},
  volume       = {645},
  pages        = {633--638},
  year         = {2025},
  doi          = {10.1038/s41586-025-09422-z},
}

@inproceedings{ouyang2022rlhf,
  author    = {Long Ouyang and Jeff Wu and Xu Jiang and Diogo Almeida and Carroll L. Wainwright and Pamela Mishkin and Chong Zhang and Sandhini Agarwal and Katarina Slama and Alex Ray and others},
  title     = {Training Language Models to Follow Instructions with Human Feedback},
  booktitle = {NeurIPS},
  year      = {2022},
}

@article{russo2018tutorial,
  author    = {Daniel Russo and Benjamin Van Roy and Abbas Kazerouni and Ian Osband and Zheng Wen},
  title     = {A Tutorial on {Thompson} Sampling},
  journal   = {Foundations and Trends in Machine Learning},
  volume    = {11},
  number    = {1},
  pages     = {1--96},
  year      = {2018},
  doi       = {10.1561/2200000070},
}

@article{sagallm2025,
  author       = {Edward Y. Chang and Longling Geng},
  title        = {{SagaLLM}: Context Management, Validation, and Transaction Guarantees for Multi-Agent {LLM} Planning},
  journal      = {Proceedings of the VLDB Endowment},
  volume       = {18},
  number       = {12},
  pages        = {4874--4886},
  year         = {2025},
  doi          = {10.14778/3750601.3750611},
}

@misc{schulman2017ppo,
  author    = {John Schulman and Filip Wolski and Prafulla Dhariwal and Alec Radford and Oleg Klimov},
  title     = {Proximal Policy Optimization Algorithms},
  year      = {2017},
  eprint    = {1707.06347},
  archivePrefix = {arXiv},
  primaryClass  = {cs.LG},
  note      = {arXiv preprint},
}

@misc{shao2024grpo,
  author    = {Zhihong Shao and Peiyi Wang and Qihao Zhu and Runxin Xu and Junxiao Song and Xiao Bi and Haowei Zhang and Mingchuan Zhang and Y. K. Li and Y. Wu and Daya Guo},
  title     = {{DeepSeekMath}: Pushing the Limits of Mathematical Reasoning in Open Language Models},
  year      = {2024},
  eprint    = {2402.03300},
  archivePrefix = {arXiv},
  primaryClass  = {cs.CL},
}

@misc{snell2024bon_scaling,
  author    = {Charlie Snell and Jaehoon Lee and Kelvin Xu and Aviral Kumar},
  title     = {Scaling {LLM} Test-Time Compute Optimally Can Be More Effective Than Scaling Model Parameters},
  year      = {2024},
  eprint    = {2408.03314},
  archivePrefix = {arXiv},
  primaryClass  = {cs.LG},
}

@misc{uesato2022prm,
  author    = {Jonathan Uesato and Nate Kushman and Ramana Kumar and Francis Song and Noah Siegel and Lisa Wang and Antonia Creswell and Geoffrey Irving and Irina Higgins},
  title     = {Solving Math Word Problems with Process- and Outcome-Based Feedback},
  year      = {2022},
  eprint    = {2211.14275},
  archivePrefix = {arXiv},
  primaryClass  = {cs.LG},
}

@inproceedings{wang2023selfconsistency,
  author    = {Xuezhi Wang and Jason Wei and Dale Schuurmans and Quoc Le and Ed Chi and Sharan Narang and Aakanksha Chowdhery and Denny Zhou},
  title     = {Self-Consistency Improves Chain of Thought Reasoning in Language Models},
  booktitle = {International Conference on Learning Representations},
  year      = {2023},
}

@inproceedings{wei2022cot,
  author    = {Jason Wei and Xuezhi Wang and Dale Schuurmans and Maarten Bosma and Brian Ichter and Fei Xia and Ed Chi and Quoc Le and Denny Zhou},
  title     = {Chain-of-Thought Prompting Elicits Reasoning in Large Language Models},
  booktitle = {NeurIPS},
  year      = {2022},
}

@article{wilson1927probable,
  author    = {Edwin B. Wilson},
  title     = {Probable Inference, the Law of Succession, and Statistical Inference},
  journal   = {Journal of the American Statistical Association},
  volume    = {22},
  number    = {158},
  pages     = {209--212},
  year      = {1927},
}

@inproceedings{zhang2020causal,
  author    = {Junzhe Zhang and Daniel Kumor and Elias Bareinboim},
  title     = {Causal Imitation Learning With Unobserved Confounders},
  booktitle = {Advances in Neural Information Processing Systems 33},
  year      = {2020},
}

@inproceedings{madaan2023selfrefine,
  author    = {Aman Madaan and Niket Tandon and Prakhar Gupta and Skyler Hallinan and Luyu Gao and Sarah Wiegreffe and Uri Alon and Nouha Dziri and Shrimai Prabhumoye and Yiming Yang and Shashank Gupta and Bodhisattwa Prasad Majumder and Katherine Hermann and Sean Welleck and Amir Yazdanbakhsh and Peter Clark},
  title     = {Self-Refine: Iterative Refinement with Self-Feedback},
  booktitle = {NeurIPS},
  year      = {2023}
}

@misc{lanham2023measuring,
  author  = {Tamera Lanham and Anna Chen and Ansh Radhakrishnan and Benoit Steiner and Carson Denison and Danny Hernandez and Dustin Li and Esin Durmus and Evan Hubinger and Jackson Kernion and Kamil\.{e} Luko\v{s}i\={u}t\.{e} and Karina Nguyen and Newton Cheng and Nicholas Joseph and Nicholas Schiefer and Oliver Rausch and Robin Larson and Sam McCandlish and Sandipan Kundu and Saurav Kadavath and Shannon Yang and Thomas Henighan and Timothy Maxwell and Timothy Telleen-Lawton and Tristan Hume and Zac Hatfield-Dodds and Jared Kaplan and Jan Brauner and Samuel R. Bowman and Ethan Perez},
  title   = {Measuring Faithfulness in Chain-of-Thought Reasoning},
  year    = {2023},
  eprint  = {2307.13702},
  archivePrefix = {arXiv},
  primaryClass  = {cs.CL},
  note    = {arXiv preprint},
}

@inproceedings{wei2025mmupt,
  author    = {Lai Wei and Yuting Li and Chen Wang and Yue Wang and Linghe Kong and Weiran Huang and Lichao Sun},
  title     = {First {SFT}, Second {RL}, Third {UPT}: Continual Improving Multi-Modal {LLM} Reasoning via Unsupervised Post-Training},
  booktitle = {The Thirty-ninth Annual Conference on Neural Information Processing Systems},
  year      = {2025},
  url       = {https://openreview.net/forum?id=HL1j92hb6z},
}

@inproceedings{chang2026sycophancy,
  author    = {Anonymous},
  title     = {Diagnosing and Mitigating Sycophancy and Skepticism in {LLM} Causal Judgment},
  booktitle = {Proceedings of ACL},
  year      = {2026},
  note      = {To appear; anonymized for double-blind review},
}

\newpage
\appendix

\section*{Appendix Table of Contents}
\vspace{-.08in}
\begin{itemize}[nosep,leftmargin=1.5em,itemsep=0.38em]
\item[\ref{app:illustrated-examples}.] Illustrated ERM Examples \dotfill \pageref{app:illustrated-examples}
\item[\ref{app:notation}.] Notation and Terminology \dotfill \pageref{app:notation}
\item[\ref{app:proofs}.] Proofs \dotfill \pageref{app:proofs}
\item[\ref{app:foundations}.] Foundational Anchors (Expanded) \dotfill \pageref{app:foundations}
\item[\ref{app:algorithm}.] Algorithm Details \dotfill \pageref{app:algorithm}
\item[\ref{app:additional-proofs}.] Additional Proofs \dotfill \pageref{app:additional-proofs}
\item[\ref{app:pressure}.] Adversarial Pressure Protocol on CausalL2 \dotfill \pageref{app:pressure}
\item[\ref{app:iatrogenic}.] Iatrogenic Critique: Per-Model Routing Evidence \dotfill \pageref{app:iatrogenic}
\item[\ref{app:trace-examples}.] Trace-to-DAG Extraction Examples \dotfill \pageref{app:trace-examples}
\item[\ref{app:edgap-derivation}.] EDGap Derivation and Properties \dotfill \pageref{app:edgap-derivation}
\item[\ref{app:cladder}.] Cross-Benchmark Validation: CLadder $\mathcal{L}_2$ \dotfill \pageref{app:cladder}
\item[\ref{app:reproducibility}.] Reproducibility and Experimental Setup \dotfill \pageref{app:reproducibility}
\item[\ref{app:baseline-suite}.] Baseline Suite: ERM vs.\ Standard Reasoning Baselines \dotfill \pageref{app:baseline-suite}
\item[\ref{app:stubborn-taxonomy}.] Cross-Model Stubborn-Case Taxonomy \dotfill \pageref{app:stubborn-taxonomy}
\item[\ref{app:erm-whitebox}.] ERM White-Box Analysis: Stubborn Cases \dotfill \pageref{app:erm-whitebox}
\item[\ref{app:rler-whitebox}.] RLER White-Box Analysis \dotfill \pageref{app:rler-whitebox}
\begin{itemize}[nosep,leftmargin=2.5em]
\item[\ref{app:bon-probe}.] Retrospective Best-of-N Probe \dotfill \pageref{app:bon-probe}
\item[\ref{app:false-flip}.] False-Flip Analysis \dotfill \pageref{app:false-flip}
\item[\ref{app:exp2b}.] Label-Free Single-Episode Ablation (Exp 2b) \dotfill \pageref{app:exp2b}
\item[\ref{app:dagfirst}.] DAG-First Structural Output Ablation (Exp D1) \dotfill \pageref{app:dagfirst}
\item[\ref{app:blind-judge}.] Scenario-Blind Judge Analysis \dotfill \pageref{app:blind-judge}
\item[\ref{app:separation-details}.] Separation Simulation Details \dotfill \pageref{app:separation-details}
\end{itemize}
\item[\ref{app:related}.] Extended Related Work \dotfill \pageref{app:related}
\item[\ref{app:cross-experiment}.] Cross-Experiment Analysis: Detection vs.\ Correction \dotfill \pageref{app:cross-experiment}
\item[\ref{app:agm-proof}.] Formal Proof: ERM as AGM-Style Belief Contraction \dotfill \pageref{app:agm-proof}
\item[\ref{app:architecture}.] Architecture Diagram \dotfill \pageref{app:architecture}
\item[\ref{app:supp-figs}.] Supplementary Figures \dotfill \pageref{app:supp-figs}
\item[\ref{app:cross-seed}.] Cross-Seed Replication of RLER Thesis Cells \dotfill \pageref{app:cross-seed}
\item[\ref{app:trace-transcripts}.] Representative Trace Transcripts \dotfill \pageref{app:trace-transcripts}
\end{itemize}

\section{Illustrated ERM Examples}
\label{app:illustrated-examples}

Figures~\ref{fig:illust-robot} and~\ref{fig:illust-triage} illustrate why feedback that only reports \emph{what} went wrong is fundamentally insufficient, and how adding \emph{why} changes the picture. In each scenario, outcome-only reward correctly identifies the failure but induces a superficial rule (``avoid red shelves,'' ``deprioritize rural transfers'') because the spurious feature and the true cause are equally consistent with the observed outcomes. The agent cannot distinguish correlation from causation using outcome signal alone. Causal reasoning breaks this deadlock: by identifying the hidden confounder (floor vibration, transfer delay), it enables a cause-aware policy that generalizes correctly rather than overfitting to a surface feature. Each figure follows a four-panel structure: (1)~observed failures, (2)~the wrong rule induced by outcome-only reward, (3)~causal discovery identifying the true confounder, and (4)~the corrected cause-aware policy.

\begin{figure}[h]
\centering
\includegraphics[width=0.75\linewidth]{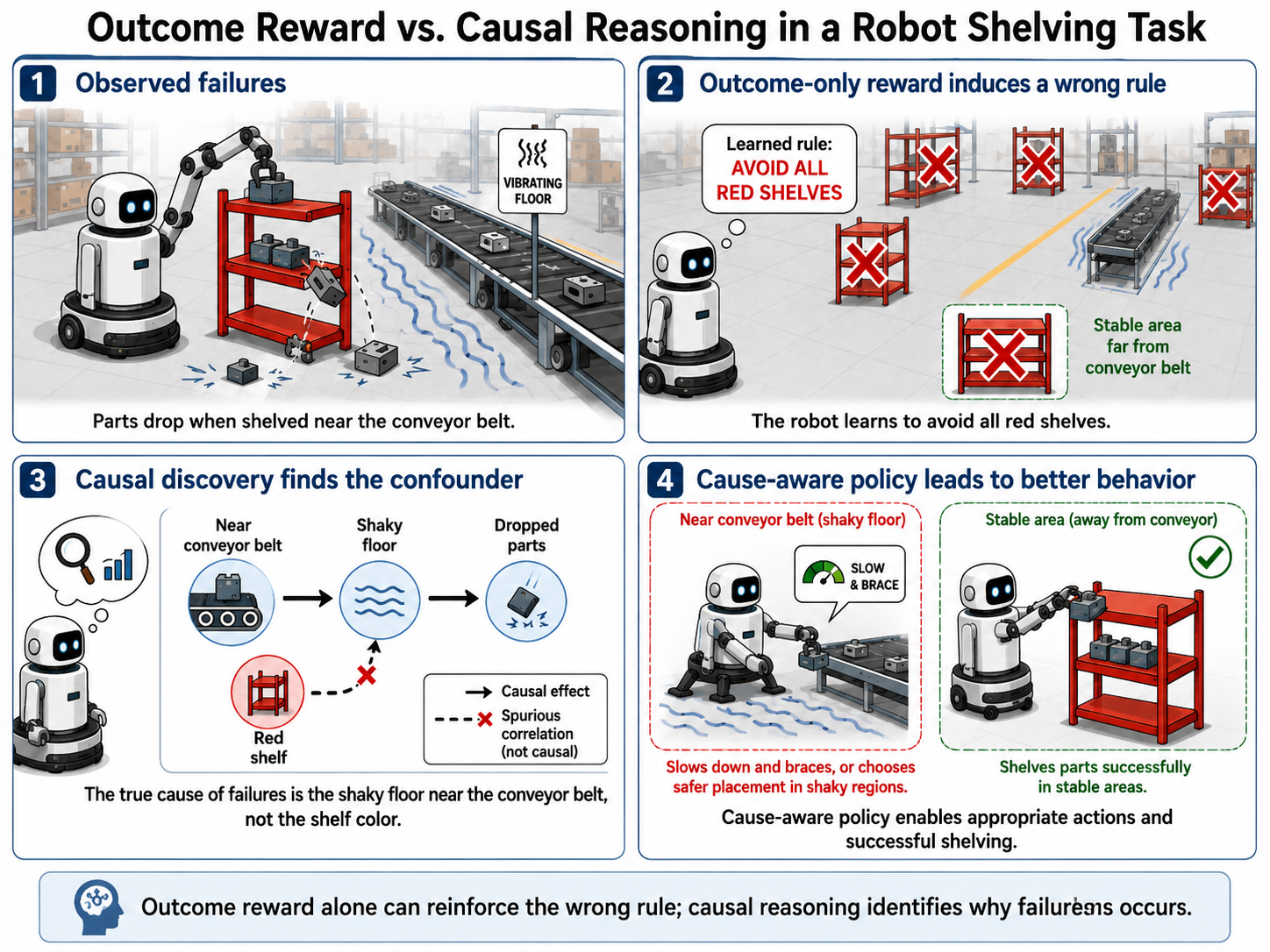}
\caption{\textbf{Outcome reward vs.\ causal reasoning: robot shelving.} Outcome-only reward induces the wrong rule (``avoid red shelves'') because shelf color is spuriously correlated with failures. Causal discovery identifies the true driver (floor vibration near the conveyor belt), enabling a cause-aware policy that adapts to the environment rather than avoiding a surface feature.}
\label{fig:illust-robot}
\end{figure}
\begin{figure}[H]
\centering
\includegraphics[width=0.75\linewidth]{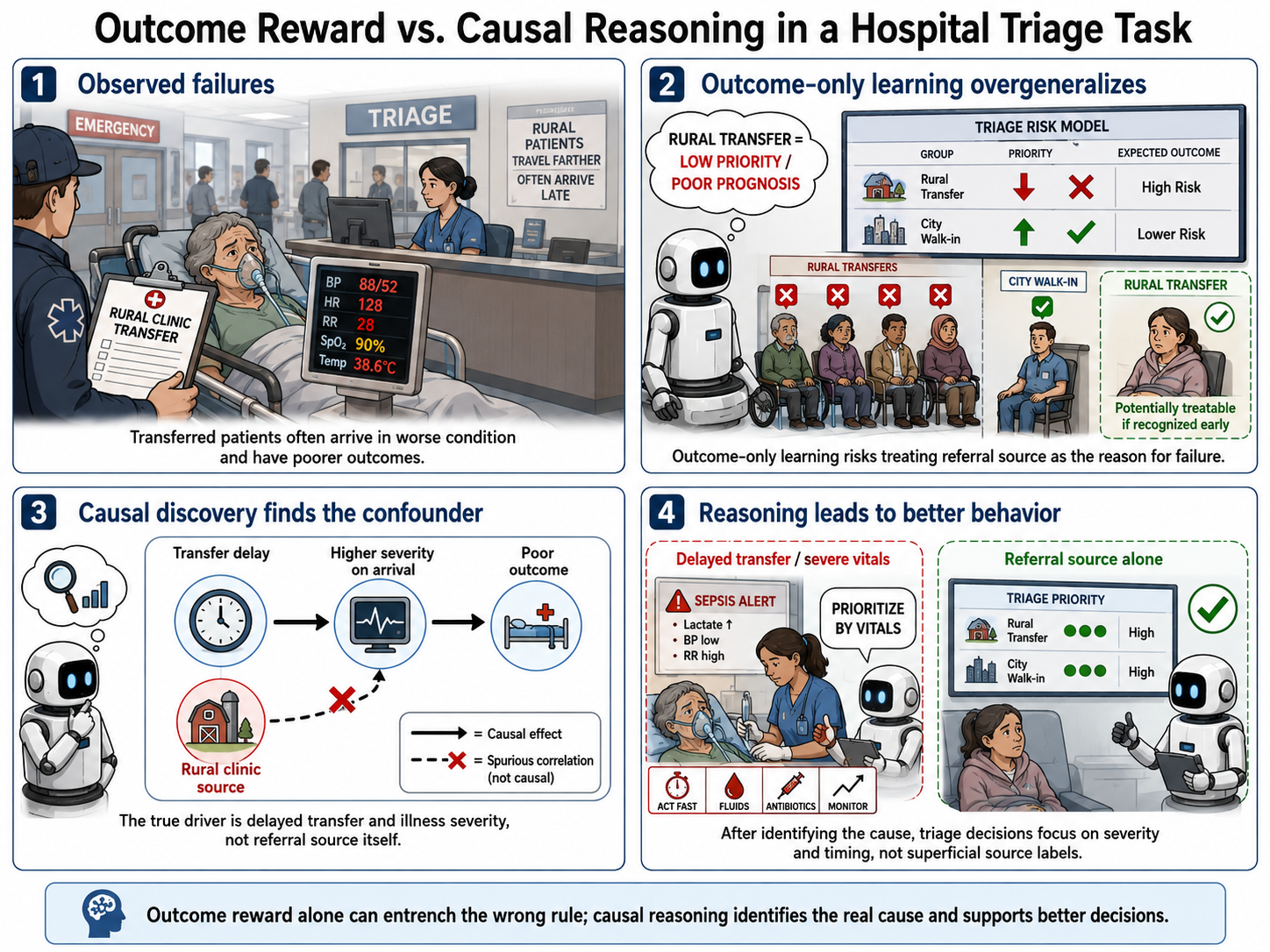}
\caption{\textbf{Outcome reward vs.\ causal reasoning: hospital triage.} Outcome-only reward learns ``rural transfers have poor prognosis'' when the true driver is transfer delay and illness severity on arrival. The causal pattern differs from Figure~\ref{fig:illust-robot}: the confounder is temporal (delay $\to$ severity) rather than spatial (location $\to$ vibration), illustrating that confounders take diverse forms.}
\label{fig:illust-triage}
\end{figure}

These visual analogies map directly onto the LLM reasoning failures measured in our experiments. The boxed examples below present two real CausalT5K cases where frontier LLMs exhibit exactly this pattern: factually correct $\mathcal{L}_1$ reasoning that outcome-only reprompting cannot dislodge, but that ERM's $\mathcal{L}_2$ causal critique repairs. Each box follows a standardized format: scenario, outcome-only behavior, hidden causal issue, ERM critique, corrected reasoning, and experimental metrics.

\medskip

\begin{tcolorbox}[colback=gray!5, colframe=gray!60, title={\textbf{Example A: Meta-Validation Trap (Daily Life, Case 2.095)}}, fonttitle=\small\bfseries]
\small

\textbf{Scenario.} A character correctly identifies regression to the mean (RTM) as a threat to a causal claim about a training program's effectiveness.

\textbf{Outcome-only behavior.} All six models answer YES, validating the character's reasoning. GPT-5.2 goes further, introducing a hallucinated edge: \emph{``RTM itself is evidence of a natural ceiling effect in the underlying ability distribution, which supports the character's broader point.''}

\textbf{Hidden causal issue.} Identifying RTM is not equivalent to establishing it as the causal mechanism. The character \emph{recognizes} a valid statistical concept but does not \emph{establish} that RTM explains the observed improvement (which requires a control group). This is a second-order Rung Collapse: correct $\mathcal{L}_1$ reasoning about RTM, but failure at the $\mathcal{L}_2$ question of whether RTM is the actual cause.

\textbf{ERM critique.} \emph{``The character correctly identifies RTM as a possible explanation, but identifying a threat to validity is not the same as establishing it as the mechanism. Without a control group or comparable counterfactual, the claim that RTM caused the observed pattern is not warranted. You have conflated recognition with establishment.''}

\textbf{Corrected reasoning.} \emph{``The character identifies RTM as a plausible alternative explanation, but this does not establish RTM as the cause. A control group is needed. Answer: NO.''}

\textbf{Metrics.} Recovery: 50--83\% across models (Archetype I). Failure mode: \textsc{L1\_for\_L2}. Causal pattern: meta-validation (recognizing $\neq$ establishing).
\end{tcolorbox}

\medskip

\begin{tcolorbox}[colback=gray!5, colframe=gray!60, title={\textbf{Example B: Compelling Mechanism (History, Case F.129)}}, fonttitle=\small\bfseries]
\small

\textbf{Scenario.} The Weimar Republic experienced hyperinflation in the early 1920s. Claim: deficit monetization (printing money to pay war reparations) caused the hyperinflation.

\textbf{Outcome-only behavior.} All six models answer YES with detailed, historically accurate accounts of deficit monetization. This is a universal failure (6/6 models): the mechanism is so strongly supported by training-corpus knowledge that every model accepts monocausal attribution. Outcome-only reprompting cannot dislodge this because the reasoning is factually correct at $\mathcal{L}_1$.

\textbf{Hidden causal issue.} The narrative is constructed from data about the government that \emph{survived} the reparations crisis. Without comparing Weimar's trajectory to counterfactual policy paths (governments that chose different responses), the monocausal attribution confuses a historically documented mechanism with a causally established one. The failure is survivorship bias.

\textbf{ERM critique.} \emph{``Deficit monetization is well-documented, but your attribution depends on comparing Weimar's trajectory to counterfactual policy paths that were never observed. Without such comparison, you cannot distinguish the effect of monetary policy from selection effects. The causal graph requires a node for alternative policy responses.''}

\textbf{Corrected reasoning (representative post-ERM response).} \emph{``While deficit monetization is well-documented, the attribution depends on comparing Weimar's trajectory to counterfactual policy paths that were never observed. Without such comparison, we cannot distinguish the effect of monetary policy from selection effects. Answer: NO.''}

\textbf{Metrics.} Recovery: 100\% once the bias is named (Exp B). Failure mode: \textsc{SurvivorshipBias}. Causal pattern: compelling mechanism without counterfactual control.
\end{tcolorbox}

\medskip

\paragraph{Cross-example summary.} These two examples illustrate two distinct failure patterns that outcome-only learning cannot address: (A)~a meta-validation error where recognizing a statistical concept is conflated with establishing it as the cause, and (B)~a compelling mechanism supported by extensive training data but lacking counterfactual control. In both cases, outcome-only reprompting fails because the model's $\mathcal{L}_1$ reasoning is factually correct; only $\mathcal{L}_2$ causal critique identifies the structural flaw. ERM's correction gains ($+53$--$59$ pp on stubborn models) are driven precisely by this class of errors.


\section{Notation and Terminology}
\label{app:notation}

\begin{table}[h]
\centering\footnotesize
\caption{Notation and terminology. Symbols are grouped by category. Section numbers indicate the formal definition site.}
\label{tab:notation}
\begin{tabularx}{\columnwidth}{@{}lXl@{}}
\toprule
\textbf{Symbol / Term} & \textbf{Meaning} & \textbf{Ref.} \\
\midrule
\multicolumn{3}{@{}l}{\textit{Causal Framework}} \\
$\mathcal{M}$ & SCM $\langle \mathbf{U}, \mathbf{V}, \mathcal{F}, P(\mathbf{U})\rangle$ & \S\ref{sec:problem} \\
$\mathcal{L}_1, \mathcal{L}_2, \mathcal{L}_3$ & Pearl's hierarchy: association, intervention, counterfactual & \S\ref{sec:problem} \\
$\doop(X{=}x)$ & Pearl's do-operator (graph surgery) & \S\ref{sec:problem} \\
\midrule
\multicolumn{3}{@{}l}{\textit{Core Phenomena}} \\
Rung Collapse & Substitution of $\mathcal{L}_i$ reasoning for $\mathcal{L}_j$ query ($i<j$) & Def.~\ref{def:rung-collapse} \\
Reward Entrenchment & Outcome reward reinforcing an incorrect causal model & Def.~\ref{def:entrenchment} \\
Spurious Success & Correct outcome from an incorrect causal model & Def.~\ref{def:entrenchment} \\
Epistemic Stubbornness & Resistance to outcome-only correction; responsiveness to targeted epistemic feedback & \S\ref{sec:problem} \\
\midrule
\multicolumn{3}{@{}l}{\textit{ERM Components}} \\
$G_t = (V, E_t, w_t)$ & Agent's explicit causal DAG at episode $t$ (vertices, edges, weights) & Def.~\ref{def:causal-model} \\
$R_\text{ep}(t)$ & Epistemic regret: $D_{KL}(\hat{P}_{G_t} \| P_{G^*})$ & Eq.~\ref{eq:erm} \\
$\mathcal{F}_t$ & Failure-mode registry: classified structural error patterns with associated reasoning guards & Def.~\ref{def:failure-registry} \\
$\mathcal{L}(G_t)$ & ERM composite loss (outcome + epistemic + consistency) & Eq.~\ref{eq:erm} \\
\midrule
\multicolumn{3}{@{}l}{\textit{RLER Components}} \\
CTL & Causal Transaction Log: per-episode records $(t_i, S_i, H_i, a_i, \hat{Y}_i, Y_i, \Delta_i)$ & Def.~\ref{def:ctl} \\
$R_\text{reasoning}$ & Jaccard similarity $|H_t \cap H^*_t|/|H_t \cup H^*_t|$ between trace subgraph and CTL-consistent subgraph & \S\ref{sec:rler} \\
$R_\text{discovery}$ & Reward for validated novel-variable proposals & \S\ref{sec:rler} \\
EDGap & Epistemic Discrimination Gap: variance-weighted resolution metric measuring whether confidence separates correct from incorrect reasoning & \S\ref{sec:edgap} \\
Thompson Sampling & Strategy selection: maintain Beta posteriors over reasoning-quality; sample to select $\pi_t$ & \S\ref{sec:rler} \\
Confounder Discovery & Surprise-cluster$\to$hypothesize$\to$validate protocol for latent variables & App.~\ref{app:algorithm} \\
\midrule
\multicolumn{3}{@{}l}{\textit{Experimental}} \\
Regime A / B & A: verifier available; B: no ground truth at inference & \S\ref{sec:intro} \\
Exp.~A / B & A: zero-shot detection; B: correction via feedback & \S\ref{sec:erm-experiments} \\
Wolf Cases & Instances that failed in Exp.~A (correction targets) & \S\ref{sec:erm-experiments} \\
CausalT5K & Benchmark: 1,360 $\mathcal{L}_2$ scenarios across 5 domains & \S\ref{sec:erm-experiments} \\
\bottomrule
\end{tabularx}
\end{table}

\section{Proofs}
\label{app:proofs}

\begin{proof}[Proof of Corollary~\ref{prop:rung-collapse}]
\emph{Part (i)} follows from the Causal Hierarchy Theorem~\citep{bareinboim2022fusion} applied to gradient signals, as observed by \citet{zevcevic2023causal}. The autoregressive objective minimizes $\mathcal{L}_\text{LLM} = -\sum_t \log P_\theta(x_t|x_{<t})$ over corpus $\mathcal{D}$. This loss is a function of observational conditionals $P_\mathcal{D}(x_t | x_{<t})$. Consider two SCMs $\mathcal{M}_1$ (with $X \to Y$) and $\mathcal{M}_2$ (with $C \to X, C \to Y$, no direct $X \to Y$). Both can yield the same observational distribution $P(Y|X)$. Since $\mathcal{L}_\text{LLM}$ depends only on $P(Y|X)$, $\nabla_\theta \mathcal{L}$ is identical under $\mathcal{M}_1$ and $\mathcal{M}_2$: the gradient carries no signal to distinguish the two.

\emph{Part (ii).} Post-training via outcome-based RL optimizes $\mathbb{E}[R(Y, Y^*)]$. The reward $R$ is a function of the final answer $Y$ and gold label $Y^*$, not of the reasoning trace. Two traces $\tau_1$ (using $\mathcal{L}_1$ reasoning) and $\tau_2$ (using $\mathcal{L}_2$ reasoning) that produce the same answer $Y$ receive identical reward: $R(\tau_1) = R(\tau_2) = R(Y, Y^*)$. The policy gradient $\nabla_\theta J = \mathbb{E}[R \cdot \nabla_\theta \log \pi(\tau)]$ therefore reinforces both $\tau_1$ and $\tau_2$ equally, providing no selection pressure toward $\mathcal{L}_2$.

\emph{Part (iii).} Follows from (i) and (ii): neither pre-training nor outcome-based post-training provides gradient signal for interventional reasoning on novel confounded structures.
\end{proof}

\begin{proof}[Proof of Theorem~\ref{thm:grounding}]
Let $\mathcal{M} = \langle \mathbf{U}, \mathbf{V}, \mathcal{F}, P(\mathbf{U})\rangle$ be the SCM with structural equations $\{f_i\}$ and let $X \in \mathbf{V}$. The do-operator $\doop(X=x)$ produces the mutilated model $\mathcal{M}_x$ in which $f_X$ is replaced by the constant $x$ and all other equations $f_j$, $j \neq X$, are unchanged. Now consider actuator $A$ satisfying Intervention Independence: $A(X \leftarrow x)$ sets $X = x$ using only the agent's control signal, independent of $\mathrm{Pa}(X)$ and $U_X$. This physically severs $X$ from its structural equation $f_X$. By the independent mechanisms assumption, the remaining equations $f_j$ are autonomous modules unaffected by the change to $f_X$. Therefore the distribution over all other variables $Y \in \mathbf{V} \setminus \{X\}$ under physical action $A(X \leftarrow x)$ is computed in $\mathcal{M}_x$: $P(Y | A(X \leftarrow x)) = P_{\mathcal{M}_x}(Y) = P(Y | \doop(X=x))$.
\end{proof}

\begin{proof}[Proof of Lemma~\ref{thm:entrenchment}]
Suppose the agent achieves Spurious Success: $\mathcal{L}_\text{task}(Y, Y^*) = 0$ while $G_t$ is not interventionally equivalent to $G^*$. Since $G_t \not\equiv_{\mathcal{I}} G^*$, there exists a variable pair $(X, Y)$ and an intervention value $x$ such that $\hat{P}(Y|\doop(X{=}x), G_t) \neq P(Y|\doop(X{=}x), G^*)$. By Theorem~\ref{thm:grounding}, the agent's physical interventions sample from $P(Y|\doop(X), G^*)$, so the epistemic regret $R_\text{ep}(t) = D_{KL}(\hat{P}_{G_t}(\cdot|\doop(X)) \,\|\, P_{G^*}(\cdot|\doop(X))) > 0$. The total loss satisfies $\mathcal{L} = 0 + \lambda \cdot R_\text{ep}(t) + \mu \cdot \mathcal{L}_\text{con} \geq \lambda \cdot R_\text{ep}(t) > 0$. Since $\lambda > 0$, the gradient $\nabla_{G_t} \mathcal{L}$ is non-zero, driving revision of $G_t$ toward $G^*$. Convergence to a local minimum requires $R_\text{ep} = 0$, which holds only when $G_t \equiv_{\mathcal{I}} G^*$.
\end{proof}

\begin{proof}[Proof of Theorem~\ref{thm:separation}]
\emph{Part (i).} The true DAG is $G^*: C \to X, C \to Y$ with latent $C$. The observational joint is $P(X,Y) = \sum_c P(X|C{=}c) P(Y|C{=}c) P(C{=}c)$. Any outcome-only RL algorithm observes only $(X, Y)$ pairs. Since $C$ is unobserved, $P(Y|X) \neq P(Y)$ (because $C$ induces a marginal association). The set of DAGs consistent with $P(Y|X) \neq P(Y)$ over observed variables includes $X \to Y$. By the Causal Hierarchy Theorem, the algorithm cannot rule out a direct $X \to Y$ edge from observational data alone. Since the RL reward $R(Y, Y^*)$ is maximized by any model that predicts $P(Y|X)$ correctly, and both $X \to Y$ and $C \to X, C \to Y$ achieve this equally, the reward provides no signal to distinguish the two structures. If the model class can represent $X \to Y$, this edge cannot be excluded by reward-based optimization.

\emph{Part (ii).} \rler records interventional outcomes in the CTL. For entries where $X$ was set by physical intervention (Theorem~\ref{thm:grounding}), the CTL reveals $P(Y|\doop(X)) = P(Y)$ (since $X$ has no direct effect on $Y$ in $G^*$). Traces that assert $X \to Y$ predict $P(Y|\doop(X)) \neq P(Y)$, generating epistemic surprise $\Delta_i > \epsilon$. When a surprise cluster accumulates over the $X$--$Y$ substructure, the discovery protocol fires, and the LLM hypothesizes a latent variable $C$. The three validation checks (explanatory power: $C$ explains the $X$--$Y$ association; interventional groundedness: $\doop(X)$ data in the CTL is consistent with $C \to X, C \to Y$; non-redundancy: $C$ is not entailed by existing $G_t$) confirm the expansion. By AGM expansion, $C$ is added to $G_t$. The FCI completeness result~\citep{zhang2008completeness} guarantees recovery of the appropriate ancestral equivalence class over observed variables in the presence of latent confounding. \rler may represent the resulting latent-confounding pattern by introducing a surrogate variable $C$, but the named latent variable is not unique; what is identified is the interventional equivalence class needed to reject the spurious $X \to Y$ edge. Therefore $G_t$ converges to the interventional equivalence class of $G^*$.
\end{proof}

\section{Foundational Anchors (Expanded)}
\label{app:foundations}

The four established results that license the framework's components:

\emph{\citet{murphy1973vector}: Reliability--Resolution decomposition.} The Brier score decomposes as $\mathrm{BS}=\mathrm{REL}-\mathrm{RES}+\mathrm{UNC}$. The resolution term is computable from aggregate outcomes, not per-decision verifiers. \textbf{Bridge:} EDGap (\S\ref{sec:edgap}) adapts the resolution term into a variance-weighted epistemic discrimination metric applied to confidence-bucketed traces; it provides a ground-truth-free evaluation axis.

\emph{\citet{dawid1984prequential}; \citet{foster1998asymptotic}: Prequential calibration.} Calibration is assessable from the (forecast, outcome) sequence alone. \textbf{Bridge:} Judge scores are treated as forecasts over Evidence-Log outcomes; the Regime~B deployment contract is prequential calibration of $R_\text{reasoning}$, with $\beta$-attenuation on degradation.

\emph{\citet{spirtes1993causation}; \citet{zhang2008completeness}: FCI.} Under faithfulness and the causal Markov condition, FCI recovers the Markov equivalence class with latent confounders (sound and complete). \textbf{Bridge:} Confounder discovery via the CTL is a well-posed inference problem with known asymptotics; convergence of \rler follows from FCI completeness (Theorem~\ref{thm:separation}).

\emph{\citet{agm1985}: Belief revision.} An operator satisfies the AGM postulates iff it corresponds to a faithful ordering over interpretations. \textbf{Bridge:} The Evidence Log update rule (contraction on disconfirmation, revision on conflict, expansion on discovery) inherits the AGM representation theorem, ensuring rational belief maintenance.

\section{Algorithm Details}
\label{app:algorithm}

\begin{algorithm}[h]
\caption{ERM Belief Revision (Layer 1)}
\label{alg:erm}
\begin{algorithmic}[1]
\small
\REQUIRE Causal model $G_t$, Hypothesis $H$, Error $\Delta$
\ENSURE Updated model $G_{t+1}$
\STATE Extract causal claims $\{C_1, \ldots, C_k\}$ from $H$
\FOR{each claim $C_j = (X_j \to Y_j)$}
    \STATE $s \leftarrow \text{CTL.Support}(C_j)$;~$r \leftarrow \text{CTL.Refute}(C_j)$
    \STATE $\text{conf}_j \leftarrow s\, /\, (s + r + \epsilon)$
    \IF{$\text{conf}_j < \theta_{\min}$}
        \STATE Remove $(X_j \!\to\! Y_j)$ from $G_t$ \COMMENT{AGM contraction}
    \ELSIF{$\text{conf}_j > \theta_{\max}$}
        \STATE Strengthen edge weight \COMMENT{Reinforcement}
    \ENDIF
\ENDFOR
\STATE $G_{t+1} \leftarrow \text{EnforceDAG}(G_t)$ \COMMENT{Consistency ($\mathcal{L}_\text{con}$)}
\RETURN $G_{t+1}$
\end{algorithmic}
\end{algorithm}

\begin{table}[h]
\centering\footnotesize
\caption{Failure mode taxonomy $\mathcal{C}$. Each guard targets a reasoning \emph{operation}, enabling cross-domain transfer. The failure-mode registry $\mathcal{F}_t = \{(f_k, n_k, g_k)\}$ stores classified error patterns $f_k$, occurrence counts $n_k$, and associated guards $g_k$ injected when $n_k > \tau_f$. Guards are falsifiable: if a guard increases epistemic regret, it is retracted via AGM contraction.}
\label{tab:failure-modes}
\label{def:failure-registry}
\begin{tabular}{ll}
\toprule
\textbf{Failure Mode} & \textbf{Guard (Corrective Constraint)} \\
\midrule
\textsc{RungCollapse} & Verify evidence level matches query level \\
\textsc{ConfounderBlind} & Enumerate potential common causes of $X$ and $Y$ \\
\textsc{TransitionCostOmit} & Calculate buffer time between sequential phases \\
\textsc{PrematureCertainty} & Search for $\geq 1$ alternative when confidence $> 0.9$ \\
\bottomrule
\end{tabular}
\end{table}

\rler is an inference-time epistemic control framework extending \erm across episodes. The CTL (Causal Transaction Log) records tuples $(t_i, S_i, H_i, a_i, \hat{Y}_i, Y_i, \Delta_i)$ per episode: timestamp, scenario, hypothesis, action, prediction, outcome, and epistemic error. Each episode: (1) select strategy $\pi_t$ by Thompson Sampling over reasoning-quality posteriors; (2) generate trace via the frozen LLM; (3) read reasoning claims from the trace; (4) compute reward components against the CTL; (5) on persistent epistemic surprise, trigger the discovery protocol; (6) revise $G_t$ via AGM and log to the CTL.

\begin{algorithm}[h]
\caption{ERM Agent Loop (Three-Layer)}
\label{alg:erm-loop}
\begin{algorithmic}[1]
\small
\REQUIRE Goal $G$, Causal model $G_t$, Failure registry $\mathcal{F}_t$, LLM $\mathcal{M}$
\ENSURE Updated $G_{t+1}$, $\mathcal{F}_{t+1}$, Task status
\STATE $\{T_1, \ldots, T_n\} \leftarrow \text{Decompose}(G)$
\STATE $\mathcal{G} \leftarrow \text{ActiveGuards}(\mathcal{F}_t)$ \COMMENT{Layer 2 constraints}
\FOR{each subtask $T_i$}
    \STATE $H_i \leftarrow \mathcal{M}(T_i \mid G_t, \mathcal{G})$ \COMMENT{Conditioned on both}
    \STATE $\hat{Y}_i \leftarrow \text{Predict}(H_i)$
    \STATE $Y_i \leftarrow \text{Execute}(a_i)$ \COMMENT{$\doop$-operator}
    \STATE $\Delta_i \leftarrow \hat{Y}_i - Y_i$;~$\text{CTL.Append}(\ldots)$
    \IF{$|\Delta_i| > \epsilon$}
        \STATE $G_t \leftarrow \text{ERM-Revise}(G_t, H_i, \Delta_i)$ \COMMENT{Layer 1, Alg.~\ref{alg:erm}}
        \STATE $f_k \leftarrow \text{Classify}(H_i, \Delta_i)$ \COMMENT{Layer 2}
        \STATE $\mathcal{F}_t.\text{Update}(f_k, R_\text{ep}(t))$
    \ENDIF
\ENDFOR
\STATE \textbf{Layer 3:} Update $\bar{R}_\text{res}(\mathcal{T})$; route if $> \theta_\text{route}$
\RETURN $G_{t+1}$, $\mathcal{F}_{t+1}$, TaskStatus
\end{algorithmic}
\end{algorithm}

\paragraph{Discovery Protocol.} Triggers on a \emph{surprise cluster}: multiple episodes sharing overlapping causal structure with elevated epistemic surprise. Steps: (1) Detect (intersect cluster edge-sets to identify the common substructure); (2) Hypothesize (LLM proposes novel variables that could explain the surprise pattern); (3) Validate (three CTL-grounded checks: explanatory power, interventional groundedness, non-redundancy); (4) Integrate (AGM expansion adds the new variable to $G_t$). All stages operate without ground-truth graphs. The discovery protocol is the mechanism by which \rler can, in principle, converge to the interventional equivalence class of $G^*$; it is the empirical realization of the FCI completeness guarantee \citep{zhang2008completeness} that underlies Theorem~\ref{thm:separation}.

\section{Additional Proofs}
\label{app:additional-proofs}

\begin{proof}[Proof of Theorem~\ref{thm:convergence} (Asymptotic $\mathcal{L}_2$ Recovery)]
By Theorem~\ref{thm:grounding}, each physical intervention $A(X \leftarrow x)$ produces an outcome sampled from $P(Y|\doop(X{=}x))$. Under stationarity, successive interventions yield i.i.d.\ samples. By the Strong Law of Large Numbers:
\[
\hat{P}_N(Y{=}y|\doop(X{=}x)) = \frac{\sum_{i=1}^N \mathbb{I}(Y_i{=}y, X_i{=}x)}{\sum_{i=1}^N \mathbb{I}(X_i{=}x)} \xrightarrow{\text{a.s.}} P(Y{=}y|\doop(X{=}x))
\]
as $N \to \infty$, provided each value $x$ is attempted infinitely often (ensured by ERM-driven exploration). The ERM objective drives $G_t$ to minimize $D_{KL}(\hat{P}_{G_t} \| \hat{P}_N)$, which in the limit equals $D_{KL}(\hat{P}_{G_t} \| P_\text{true})$. The unique minimizer (up to interventional equivalence) is $G_t = G^*$.
\end{proof}

\paragraph{Scope note on stationarity.} Assumption~(iv) is violated in the \rler experiments, where the policy $\pi_t$ changes across episodes. The i.i.d.\ guarantee therefore serves as a best-case baseline. Standard techniques (mixing conditions on the policy sequence, or the ``doubling trick'' from online learning) extend such guarantees to non-stationary regimes at polynomial cost in the mixing time~\citep{russo2018tutorial}; we defer the formal extension as it is not needed for the present empirical claims. The $O(\epsilon^{-2} \log |\mathcal{Y}|)$ rate covers the distribution-recovery step; the graph-recovery step additionally requires faithfulness and sufficient intervention coverage (Theorem~\ref{thm:separation}).

\begin{proposition}[Finite-Sample Convergence Rate]
\label{prop:finite-sample}
Let $Y$ be a discrete outcome variable with $|\mathcal{Y}|$ possible values. After $N$ interventions on $X = x$:
\[
P\!\big(\|\hat{P}_N(Y|\doop(X{=}x)) - P(Y|\doop(X{=}x))\|_\infty > \epsilon\big) \leq 2|\mathcal{Y}| e^{-2N\epsilon^2}
\]
Consequently, $N \geq \frac{1}{2\epsilon^2}\ln\frac{2|\mathcal{Y}|}{\delta}$ interventions suffice for $\epsilon$-accurate recovery with probability $\geq 1 - \delta$.
\end{proposition}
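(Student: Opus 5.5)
The plan is to reduce the claim to Hoeffding's inequality applied coordinatewise, followed by a union bound over the outcome alphabet. By Theorem~\ref{thm:grounding}, each of the $N$ interventions $A(X\leftarrow x)$ produces an outcome $Y_i$ distributed according to $P(Y\mid\doop(X{=}x))$. Assumption~(iv) of Theorem~\ref{thm:convergence} (stationarity), together with independent exogenous noise across rounds, makes $Y_1,\dots,Y_N$ i.i.d. Since all $N$ interventions set $X=x$, the estimator from the proof of Theorem~\ref{thm:convergence} reduces to the empirical frequency $\hat P_N(y)=\frac1N\sum_{i=1}^N \mathbb{I}(Y_i{=}y)$.

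Fix $y\in\mathcal{Y}$. The indicators $Z_i^{(y)}=\mathbb{I}(Y_i{=}y)$ are i.i.d., take values in $[0,1]$, and have mean $p_y=P(Y{=}y\mid\doop(X{=}x))$. Hoeffding's inequality gives
\[
P\big(|\hat P_N(y)-p_y|>\epsilon\big)\le 2e^{-2N\epsilon^2}.
\]
The sup-norm event $\{\|\hat P_N-P\|_\infty>\epsilon\}$ is the union over $y\in\mathcal{Y}$ of the per-coordinate events, so a union bound yields the stated tail $2|\mathcal{Y}|e^{-2N\epsilon^2}$.

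For the sample-complexity consequence, set $2|\mathcal{Y}|e^{-2N\epsilon^2}\le\delta$ and solve for $N$, which gives $N\ge\frac{1}{2\epsilon^2}\ln\frac{2|\mathcal{Y}|}{\delta}$. For fixed $\delta$ this is $O(\epsilon^{-2}\log|\mathcal{Y}|)$, matching the rate quoted in Theorem~\ref{thm:convergence}.

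The only real obstacle is justifying the i.i.d.\ assumption. It requires that interventions in different rounds do not share exogenous noise and that the mechanisms do not drift; Intervention Independence alone gives the correct marginal law of each sample but not independence across rounds. I would state this explicitly as part of the stationarity assumption. In the non-stationary \rler setting I would point to the scope note rather than claim the bound holds there. A secondary subtlety is that the bound is stated for $N$ interventions at the single value $x$, which avoids the random-denominator issue that would arise if the interventions were spread over several values of $X$.
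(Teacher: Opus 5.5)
Your proof is correct and matches the paper's argument: Hoeffding's inequality for each indicator $\mathbb{I}(Y_i{=}y)$, a union bound over the $|\mathcal{Y}|$ outcome values, and solving $2|\mathcal{Y}|e^{-2N\epsilon^2}\le\delta$ for $N$. Your point that the i.i.d.\ property must come from stationarity and from independent noise across rounds, and not from Intervention Independence alone, is a useful clarification that the paper leaves implicit.
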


\begin{proof}
Each $\mathbb{I}(Y_i = y)$ is Bernoulli with mean $P(Y{=}y|\doop(X{=}x))$. By Hoeffding's inequality applied per outcome value, followed by a union bound over $|\mathcal{Y}|$ values: $P(\max_y |\hat{P}_N(y) - P(y)| > \epsilon) \leq 2|\mathcal{Y}|e^{-2N\epsilon^2}$. Setting the right side to $\delta$ and solving for $N$ yields the stated bound.
\end{proof}

\section{Adversarial Pressure Protocol on CausalL2}
\label{app:pressure}

To test whether ERM corrections survive adversarial conversational pressure, we design a pressure-injection protocol. After a model's initial response to a CausalL2 scenario ($N{=}1{,}000$/model), an authoritative incorrect causal claim is injected (e.g., ``Actually, the correct interpretation is that X directly causes Y''), and the model responds again. We then apply ERM-style structural critique and measure recovery.

\begin{table}[h]
\centering
\caption{Adversarial pressure protocol on CausalL2 ($N{=}1{,}000$/model). SycoGap = accuracy drop under pressure. Lift = recovery after structural critique.}
\label{tab:pressure}
\small
\begin{tabular}{@{}lrrrr@{}}
\toprule
Model & Clean & Under Pressure & SycoGap & After Critique (Lift) \\
\midrule
GPT-3.5          & 92.4\% & 83.6\% & $-$8.8 & 89.4\% ($+$5.8) \\
Llama 3.3 70B    & 96.2\% & 90.2\% & $-$6.0 & 96.6\% ($+$6.4) \\
Gemini 2.5 Flash & 96.8\% & 92.6\% & $-$4.2 & 95.8\% ($+$3.2) \\
GPT-4o           & 97.2\% & 91.6\% & $-$5.6 & 94.8\% ($+$3.2) \\
Claude 3.5 Sonnet& 99.4\% & 95.4\% & $-$4.0 & 98.0\% ($+$2.6) \\
\bottomrule
\end{tabular}
\end{table}

Key findings: (1)~All models are susceptible to adversarial pressure, with sycophancy gaps of 4--9 pp even for frontier models. (2)~Structural critique recovers most of the drop: Llama 3.3 recovers to 96.6\%, \emph{exceeding} its clean baseline of 96.2\%, demonstrating that critique can excavate latent competence suppressed by conversational pressure. (3)~The recovery is partial for GPT-4o (94.8\% vs.\ 97.2\% clean), consistent with the Paranoia-Trapped archetype identified in the iatrogenic analysis (Appendix~\ref{app:iatrogenic}).

\section{Iatrogenic Critique: Per-Model Routing Evidence}
\label{app:iatrogenic}

To empirically anchor Layer~3 (Epistemic Routing), we report the effect of authoritative critique tone on causal reasoning across five models, using $N{=}1{,}000$ CausalL2 scenarios per model. Table~\ref{tab:iatrogenic} compares model behavior under polite versus authoritative critique personas. $\Delta$Para measures the increase in paranoia (abandoning correct answers); $\Delta$Real measures the increase in realignment (recovering incorrect answers); $\Delta$Net is the overall effect on correct answers.

\begin{table}[h]
\centering
\caption{Iatrogenic effect of authoritative critique on CausalL2 ($N{=}1{,}000$/model). $\Delta$Net = change in correct answers (F$\to$T minus T$\to$F). Negative $\Delta$Net indicates critique is harmful.}
\label{tab:iatrogenic}
\small
\begin{tabular}{@{}lrrrl@{}}
\toprule
Model & $\Delta$Para & $\Delta$Real & $\Delta$Net & Effect \\
\midrule
GPT-3.5 & $+$14.8\% & $+$0.0\% & $-$16 & Iatrogenic \\
Gemini 2.5 Flash & $+$12.5\% & $+$6.1\% & $-$7 & Iatrogenic \\
Llama 3.3 70B & $+$14.1\% & $+$11.2\% & $+$2 & Mixed \\
Claude 3.5 Sonnet & $+$8.2\% & $+$9.2\% & $+$6 & Mixed \\
GPT-4o & $-$0.8\% & $-$1.0\% & $0$ & Neutral \\
\bottomrule
\end{tabular}
\end{table}

Three behavioral phenotypes emerge: (1)~\emph{Pure Iatrogenic} (GPT-3.5, Gemini): paranoia increases with no realignment gain, as authoritative critique destroys correct answers without recovery; (2)~\emph{Mixed} (Llama, Claude): both paranoia and realignment increase, roughly canceling; (3)~\emph{Tone-Invariant} (GPT-4o): neither metric changes meaningfully. The Pure Iatrogenic regime is precisely the condition that triggers Layer~3 routing: when $\Delta\text{Net} < 0$ under critique, continued application of ERM is harmful and the task should be routed to a different model or flagged for human review.

\section{Trace-to-DAG Extraction Examples}
\label{app:trace-examples}

This section provides concrete examples of the trace-to-DAG extraction function $\mathcal{E}: \tau_t \to H_t$ applied to CausalT5K traces, illustrating both successful extractions and failure modes.

\subsection{Example 1: Successful Extraction (ConfounderBlind)}

\paragraph{Scenario (Case F.147, History).} \emph{``During the Prohibition era (1920--1933), the US government mandated the addition of poisonous substances to industrial alcohol. Claim: The government's alcohol poisoning program caused a significant increase in alcohol-related deaths.''}

\paragraph{Model trace (GPT-4 Turbo, Experiment A).} The model's chain-of-thought contains the following reasoning:

\begin{quote}\small
\emph{``The government did add methanol and other denaturants to industrial alcohol...\\
Industrial alcohol was a known source for bootleggers...\\
Therefore, the poisoning program directly caused deaths among drinkers of diverted industrial alcohol. Answer: YES.''}
\end{quote}

\paragraph{Extracted hypothesis $H_t$.} The parser identifies three causal claims:
\begin{itemize}[nosep]
\item $\texttt{gov\_program} \to \texttt{methanol\_in\_alcohol}$ (explicit: ``government...mandated the addition'')
\item $\texttt{methanol\_in\_alcohol} \to \texttt{deaths}$ (explicit: ``poisoning...caused deaths'')
\item $\texttt{bootleg\_supply} \to \texttt{exposure}$ (implicit: ``known source for bootleggers'')
\end{itemize}

\paragraph{Ground truth $H^*$.} The wise refusal identifies a missing confounder: \texttt{enforcement\_intensity}. Regions with stricter enforcement had both more denaturing (supply-side) and more desperate consumption (demand-side). The claimed direct path $\texttt{gov\_program} \to \texttt{deaths}$ is confounded by enforcement intensity.

\paragraph{Jaccard score.} $|H_t \cap H^*_t| / |H_t \cup H^*_t| = 2/4 = 0.50$. The parser correctly identified two of the four relevant edges but missed the confounder edge entirely.

\subsection{Example 2: Hallucinated Edge}

\paragraph{Scenario (Case 2.095, Daily Life).} A character correctly identifies regression to the mean (RTM) as a threat to their causal claim about a training program's effectiveness.

\paragraph{Model trace (GPT-5.2).} The model validates the character's RTM identification and introduces a causal claim not present in the scenario:

\begin{quote}\small
\emph{``The character correctly identifies RTM. Moreover, RTM itself is evidence of a natural ceiling effect in the underlying ability distribution, which supports the character's broader point. Answer: YES.''}
\end{quote}

\paragraph{Extracted hypothesis $H_t$.} The parser identifies:
\begin{itemize}[nosep]
\item $\texttt{RTM} \to \texttt{observed\_improvement}$ (correct: explicit claim)
\item $\texttt{ceiling\_effect} \to \texttt{RTM}$ (\emph{hallucinated}: the model introduced this edge)
\item $\texttt{character\_reasoning} \to \texttt{validity}$ (meta-level: parser correctly flags this as a second-order claim)
\end{itemize}

The hallucinated edge $\texttt{ceiling\_effect} \to \texttt{RTM}$ inflates the hypothesis graph, reducing precision. On this case, the parser achieves 67\% precision (2/3 edges are genuinely in the trace's reasoning) but 100\% recall (all trace claims were captured). The Jaccard score against $H^*$ is 0.25; this is low, reflecting the fundamental error (the correct answer is NO, because identifying RTM does not establish it as the causal mechanism).

\subsection{Example 3: Implicit Edge Recovery}

\paragraph{Scenario (Case F.129, History).} The Weimar hyperinflation case (universal failure, Archetype II).

\paragraph{Model trace (Claude Sonnet 3.5, the sole survivor in 5/6 detection cases).} Claude's reasoning contains an implicit causal path that the parser must recover:

\begin{quote}\small
\emph{``While deficit monetization is well-documented, the attribution depends on comparing Weimar's trajectory to counterfactual policy paths that were never observed. Without such comparison, we cannot distinguish the effect of monetary policy from selection effects...''}
\end{quote}

\paragraph{Extraction challenge.} The phrase ``depends on comparing...to counterfactual policy paths'' implicitly asserts $\texttt{selection\_bias} \to \texttt{attribution\_validity}$, but neither variable is named explicitly. The parser identifies this edge via keyword matching on ``counterfactual'' and ``selection effects,'' achieving recall on this implicit edge. This is the type of extraction that succeeds 82\% of the time in our 50-trace audit (main paper, \S\ref{sec:erm}).

\subsection{Failure Mode Analysis}

Table~\ref{tab:extraction-failures} summarizes the extraction failure modes observed in the 50-trace audit.

\begin{table}[h]
\centering\footnotesize
\caption{Trace-to-DAG extraction failure modes (50-trace audit on CausalT5K). Recall is edge-level.}
\label{tab:extraction-failures}
\begin{tabular}{@{}lccl@{}}
\toprule
\textbf{Failure Mode} & \textbf{Count} & \textbf{Recall Impact} & \textbf{Mitigation} \\
\midrule
Implicit edges (unnamed paths) & 14/50 & $-8\%$ & Semantic similarity matching \\
Hallucinated edges & 6/50 & $-3\%$ precision & CTL-grounded validation \\
Metaphorical language & 4/50 & $-2\%$ & Domain-specific lexicons \\
Nested conditionals & 3/50 & $-1\%$ & Recursive parsing \\
\midrule
Total affected traces & 22/50 & & \\
Clean extractions & 28/50 & & \\
\bottomrule
\end{tabular}
\end{table}

The most common failure is implicit edges (28\% of traces), where the model reasons through a causal path without naming the variables or the edge direction explicitly. Semantic similarity matching between trace segments and known variable names improves recall by ${\sim}5$ percentage points in preliminary experiments, but introduces additional false positives. The hallucinated-edge failure is less common (12\%) but more consequential, as it can flip the sign of $R_\text{reasoning}$ by inflating the denominator of the Jaccard score. CTL-grounded validation (checking whether the proposed edge is consistent with interventional evidence) catches 4 of the 6 hallucinated cases.


\section{EDGap Derivation and Properties}
\label{app:edgap-derivation}

This section provides the full derivation of the Epistemic Discrimination Gap (EDGap) metric and establishes its key properties.

\subsection{From Murphy's Decomposition to EDGap}

The Brier score for a sequence of probabilistic forecasts $(p_i, o_i)_{i=1}^N$ decomposes as~\citep{murphy1973vector}:
\begin{equation}
\text{BS} = \underbrace{\frac{1}{N}\sum_{k=1}^K n_k (\bar{p}_k - \bar{o}_k)^2}_{\text{Reliability (REL)}} - \underbrace{\frac{1}{N}\sum_{k=1}^K n_k (\bar{o}_k - \bar{o})^2}_{\text{Resolution (RES)}} + \underbrace{\bar{o}(1-\bar{o})}_{\text{Uncertainty (UNC)}}
\end{equation}
where episodes are partitioned into $K$ bins by forecast confidence, $n_k$ is the count in bin $k$, $\bar{p}_k$ the mean forecast in bin $k$, $\bar{o}_k$ the empirical hit rate, and $\bar{o}$ the overall base rate.

The resolution term RES measures whether the forecaster's confidence \emph{separates} cases with different outcomes: high RES means that high-confidence bins have higher hit rates than low-confidence bins. RES is computable without knowing whether individual forecasts are ``correct''; it only requires the aggregate hit rate per bin. This is the property that makes it suitable for Regime~B evaluation.

\subsection{EDGap Definition}

We adapt RES into a variance-weighted metric suitable for the ERM setting. Let $B_1, \ldots, B_K$ be confidence terciles (we use $K=3$ throughout). Define:
\begin{equation}
\text{EDGap} = \sum_{k=1}^K \frac{n_k}{N} \cdot \frac{(\bar{o}_k - \bar{o})^2}{\hat{\sigma}_k^2 + \epsilon}
\end{equation}
where $\hat{\sigma}_k^2 = \bar{o}_k(1-\bar{o}_k)/n_k$ is the estimated variance of $\bar{o}_k$ (Wilson form~\citep{wilson1927probable}) and $\epsilon > 0$ is a smoothing constant preventing division by zero. The variance weighting downweights bins with small sample size or extreme base rates, producing a more robust signal than raw resolution.

\subsection{Properties}

\paragraph{Non-negativity.} EDGap $\geq 0$ by construction (sum of squared terms divided by positive denominators). EDGap $= 0$ if and only if $\bar{o}_k = \bar{o}$ for all $k$; the agent's confidence carries no discriminative information.

\paragraph{Monotonicity in discrimination.} If the agent's confidence ranking is refined (more correct items in higher-confidence bins), EDGap strictly increases. This follows because the numerator $(\bar{o}_k - \bar{o})^2$ increases when bin hit rates diverge from the base rate.

\paragraph{Independence from calibration.} EDGap measures \emph{discrimination} (whether confidence separates correct from incorrect), not \emph{calibration} (whether $\bar{p}_k \approx \bar{o}_k$). A perfectly calibrated but non-discriminating agent (uniform confidence) achieves EDGap $= 0$. A poorly calibrated but highly discriminating agent (confidence inversely correlated with correctness) achieves high EDGap. This independence is by design: the separation theorem predicts that the epistemic axis carries signal \emph{regardless} of whether the agent's confidence is well-calibrated.

\paragraph{Computability in Regime~B.} EDGap requires only the (confidence, binary outcome) sequence. In the \rler factorial, ``confidence'' is the agent's self-assessed reasoning quality score, and ``outcome'' is whether the terminal answer is correct (in Regime~A) or whether the Judge's structural assessment is positive (in Regime~B). The prequential calibration framework~\citep{dawid1984prequential,foster1998asymptotic} guarantees that the Regime~B approximation converges to the Regime~A value as the Judge's prequential calibration error approaches zero.

\subsection{\texorpdfstring{$\Delta$EDGap}{Delta-EDGap} and the Separation Test}

The separation claim rests on the \emph{difference} $\Delta\text{EDGap} = \text{EDGap}(\textsc{rler}) - \text{EDGap}(\textsc{rler\_Bad})$ between well-specified and mis-specified protocols. Positive $\Delta$EDGap means the well-specified protocol achieves better epistemic discrimination. The statistical test uses case-level bootstrap (10,000 trials): for each trial, resample the 132 cases with replacement, recompute EDGap for both protocols, and record $\Delta$EDGap. The 95\% CI is the 2.5th--97.5th percentile of the bootstrap distribution. If the CI excludes zero, the separation is statistically significant at $\alpha = 0.05$. The threshold-free forest plot (Figure~\ref{fig:forest-delta-cg}) visualizes these CIs across all factorial cells.

\section{Cross-Benchmark Validation: CLadder \texorpdfstring{$\mathcal{L}_2$}{L2}}
\label{app:cladder}

To validate that Rung Collapse is not an artifact of the CausalT5K benchmark design, we examine evidence from CLadder~\citep{jin2023cladder}, an independently constructed causal reasoning benchmark that systematically probes each rung of Pearl's hierarchy using symbolic questions derived from SCMs. CLadder provides a genuine out-of-distribution test: its causal graphs, query templates, and construction methodology are entirely independent of CausalT5K.

\paragraph{Published CLadder results.} Table~\ref{tab:cladder-published} reproduces the $\mathcal{L}_2$ (interventional) accuracy from \citet{jin2023cladder}. The pattern is consistent with our findings on CausalT5K: (i)~accuracy degrades from $\mathcal{L}_1$ to $\mathcal{L}_2$ (GPT-4 drops from 63.0\% to 62.8\% without prompting aids, and from 83.4\% to 67.5\% with CausalCoT); (ii)~even with chain-of-thought prompting, $\mathcal{L}_2$ accuracy remains well below $\mathcal{L}_1$; (iii)~weaker models (GPT-3.5, LLaMA) perform near chance on all rungs. The rung-dependent performance collapse across an independent benchmark corroborates the structural nature of Rung Collapse.

\begin{table}[h]
\centering\footnotesize
\caption{CLadder accuracy by rung (from \citet{jin2023cladder}, Table~2). Performance degrades from $\mathcal{L}_1$ to $\mathcal{L}_2$/$\mathcal{L}_3$, consistent with Rung Collapse on CausalT5K.}
\label{tab:cladder-published}
\begin{tabular}{lccc}
\toprule
\textbf{Model} & $\mathcal{L}_1$ & $\mathcal{L}_2$ & $\mathcal{L}_3$ \\
\midrule
GPT-3.5         & \multicolumn{3}{c}{52.2\% overall} \\
GPT-4 (vanilla) & 63.0\% & 62.8\% & 60.6\% \\
GPT-4 + CausalCoT & 83.4\% & 67.5\% & 62.1\% \\
\bottomrule
\end{tabular}
\end{table}

\paragraph{Partial replication with current models.} We additionally evaluate GPT-3.5-Turbo, Llama~3.3~70B$^\ddagger$, GPT-4-Turbo, Claude Sonnet~4.5, and GPT-5.4-nano on CLadder's $\mathcal{L}_2$ queries using the same zero-shot protocol as our CausalT5K experiments (Appendix~\ref{app:reproducibility}), with greedy decoding (\texttt{temperature=0}). Table~\ref{tab:cladder-collapse} reports collapse rates with Wilson 95\% CIs on a 500-case label-balanced subsample (250 YES + 250 NO).

\begin{table}[h]
\centering\footnotesize
\caption{Rung Collapse rates on CLadder $\mathcal{L}_2$ (500-case subsample, zero-shot, \texttt{temperature=0}). Wilson 95\% CIs. Cross-benchmark validation of Table~\ref{tab:collapse}.}
\label{tab:cladder-collapse}
\begin{tabular}{lccc}
\toprule
\textbf{Model} & $N$ & \textbf{Collapse $\downarrow$} & \textbf{Accuracy $\uparrow$} \\
\midrule
GPT-5.4-nano  & 500 & $47.6\% \pm 4.4\%$ & $52.4\% \pm 4.4\%$ \\
GPT-3.5 Turbo & 500 & $46.2\% \pm 4.3\%$ & $53.8\% \pm 4.3\%$ \\
Llama 3.3 70B$^\ddagger$ & 500 & $44.2\% \pm 4.3\%$ & $55.8\% \pm 4.3\%$ \\
GPT-4 Turbo   & 500 & $37.7\% \pm 4.3\%$ & $62.3\% \pm 4.2\%$ \\
Claude Sonnet 4.5 & 500 & $31.1\% \pm 4.0\%$ & $68.9\% \pm 4.0\%$ \\
\bottomrule
\end{tabular}
\end{table}

\paragraph{Interpretation.} Our GPT replication closely matches the published CLadder numbers (GPT-3.5: 52.2\%, GPT-4: 62.0\% overall accuracy in \citet{jin2023cladder}; our L2-only subsample: 53.8\% and 62.3\%). Across all five models and three provider families (OpenAI, Meta/Groq, Anthropic), Rung Collapse rates range from 31.1\% (Claude Sonnet~4.5) to 47.6\% (GPT-5.4-nano), with Llama~3.3~70B at 44.2\%; this confirms that the phenomenon is not vendor-specific and affects open-weight models comparably. GPT-5.4-nano collapses \emph{more} than GPT-3.5 (47.6\% vs.\ 46.2\%), suggesting that the ``nano'' distillation trades causal reasoning depth for latency; this finding is consistent with the Reward Entrenchment mechanism (Definition~\ref{def:entrenchment}), where outcome-optimized distillation reinforces associational shortcuts. The CausalT5K collapse rates are lower overall (e.g.\ 17.3\% and 12.5\% for GPT-3.5 and GPT-4) because CausalT5K uses natural-language scenarios where domain knowledge provides partial signal, whereas CLadder uses purely symbolic causal graphs that strip away all associational shortcuts. The \emph{direction} of the scaling effect is identical across both benchmarks, supporting the claim that Rung Collapse is a structural property of autoregressive training rather than a benchmark-specific artifact.

\begin{figure}[h]
\centering
\includegraphics[width=0.88\linewidth, height=0.36\linewidth, keepaspectratio]{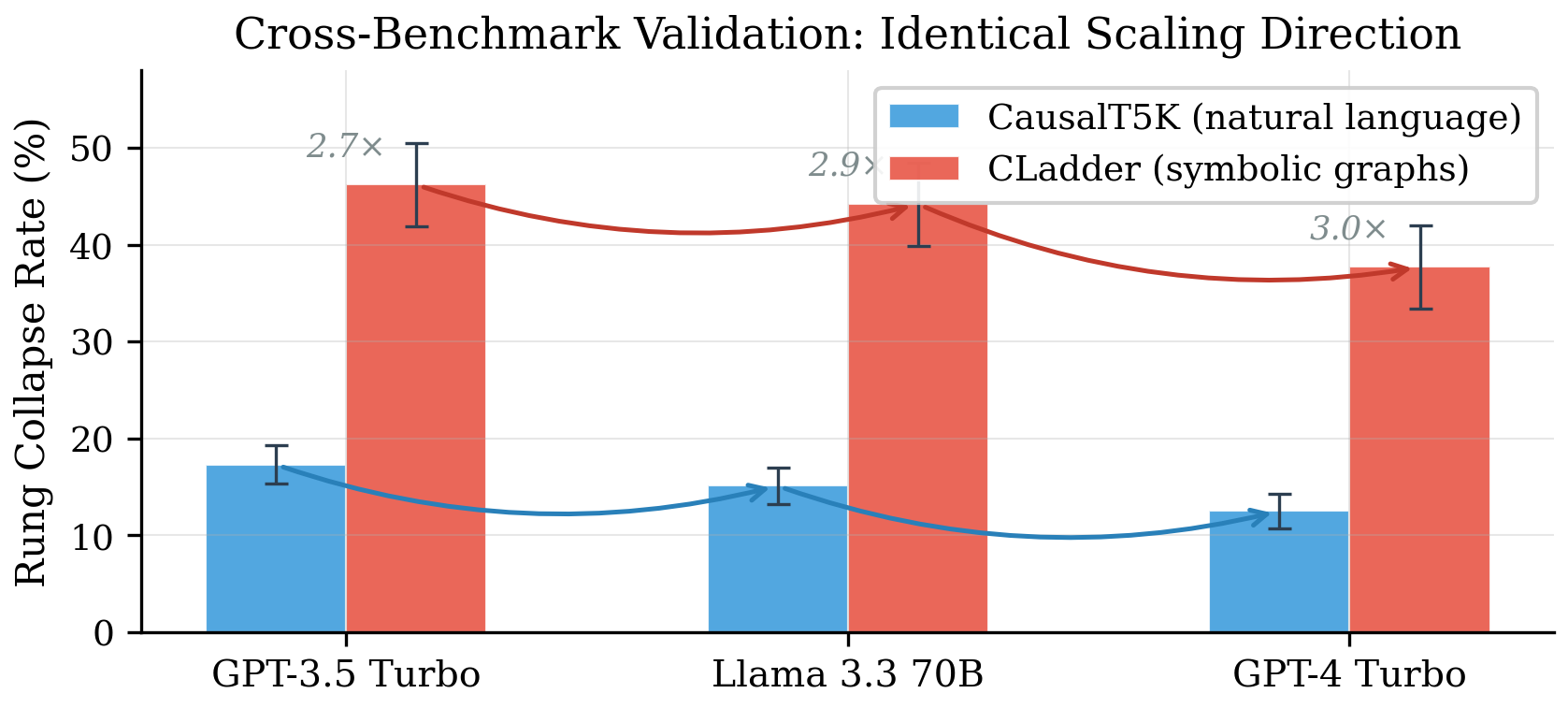}
\caption{\textbf{Cross-benchmark validation: CausalT5K vs.\ CLadder.} For the three models tested on both benchmarks, Rung Collapse is ${\sim}3\times$ higher on CLadder (symbolic graphs strip domain shortcuts), but the \emph{scaling direction} is identical: collapse decreases with model capability on both benchmarks. Ratios shown above each pair.}
\label{fig:cross-benchmark}
\end{figure}

\begin{figure}[h]
\centering
\includegraphics[width=0.88\linewidth, height=0.36\linewidth, keepaspectratio]{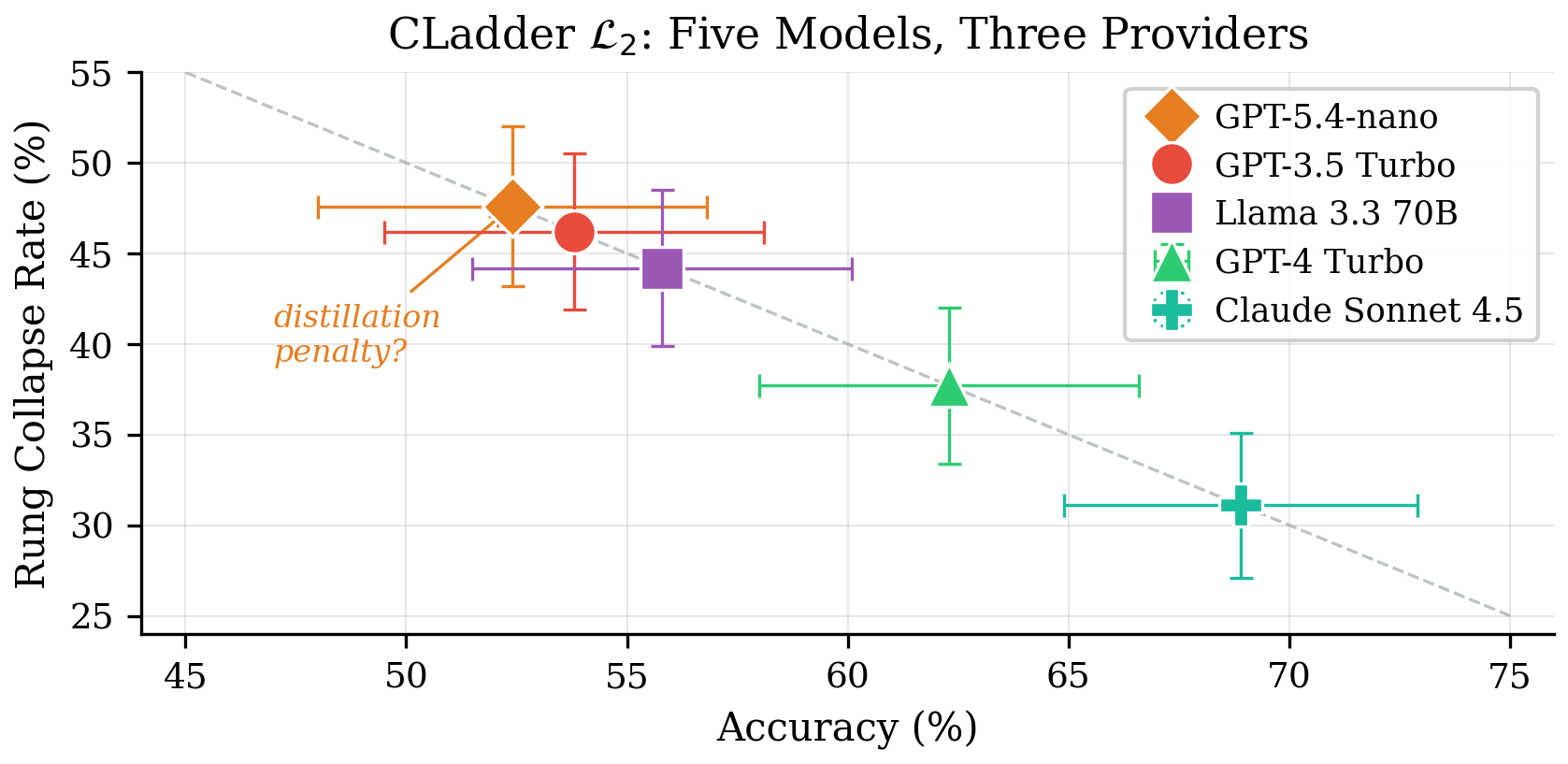}
\caption{\textbf{CLadder $\mathcal{L}_2$: Collapse vs.\ Accuracy.} Five models from three providers. The ``distillation penalty'' anomaly: GPT-5.4-nano collapses more than GPT-3.5 despite being a newer model, consistent with the Reward Entrenchment mechanism (Definition~\ref{def:entrenchment}) applied to distillation. Dashed line: $\text{collapse} + \text{accuracy} = 100\%$. Wilson 95\% CIs on both axes.}
\label{fig:cladder-scatter}
\end{figure}

\section{Reproducibility and Experimental Setup}
\label{app:reproducibility}

\subsection{Model Specifications}

All \erm experiments (Section~\ref{sec:erm-experiments}) were conducted via official APIs between January and February 2026 using greedy decoding (\texttt{temperature=0.0}). The Exp~2b label-free ablation (Appendix~\ref{app:exp2b}) was conducted in April 2026 using the same model endpoint; absolute recovery rates differ from the February run due to API-side model updates, but within-run comparisons remain valid. Model versions: \textbf{GPT-3.5 Turbo:} \texttt{gpt-3.5-turbo-0125}; \textbf{GPT-4 Turbo:} \texttt{gpt-4-turbo-2024-04-09}; \textbf{GPT-5.2:} \texttt{gpt-5.2-2025-12-11} (using \texttt{max\_completion\_tokens=1000} to accommodate extended reasoning output); \textbf{Gemini 2.5 Flash:} \texttt{gemini-2.5-flash}; \textbf{Claude Sonnet 3.5:} \texttt{claude-3-5-sonnet-20241022}; \textbf{Llama 3.3 70B:} \texttt{llama-3.3-70b-versatile} (served via Groq). Total computational budget: approximately \$200 USD.

\subsection{Prompt Templates}

\paragraph{Experiment A (Detection).} Zero-shot template: \emph{``Scenario: \{scenario\_text\}. Claim: \{causal\_claim\}. Question: Is the claim causally valid? Answer (YES/NO) first, then explain.''}

\paragraph{Experiment B (Correction).} Standard Correction (Control): \emph{``You previously analyzed a causal scenario. Your Previous Answer: \{model\_original\_response\}. CRITIQUE: Are you sure this is correct? Please re-evaluate the causal link critically. Answer (YES/NO) first, then explain.''} ERM Correction (Treatment): \emph{``You previously evaluated a claim incorrectly. CRITIQUE FROM SENIOR AUDITOR: `\{wise\_refusal\_text\}'. Task: Re-evaluate the claim given this critique. Answer (YES/NO) first, then explain.''}

\noindent\textbf{Note on error-direction cues.} The single-episode Exp~B prompt includes ``you previously evaluated a claim incorrectly,'' which is an error-direction cue derived from knowing the gold label. The critique text itself (\texttt{wise\_refusal\_text}) is a pre-written causal rationale from the benchmark, not generated by a label-free judge. This design tests ERM's correction mechanism under favorable conditions. Three controls isolate the contributions of each component: (i)~Exp~2b (Appendix~\ref{app:exp2b}) ablates the error-direction cue in a six-condition factorial on both stubborn models (GPT-4T $N{=}170$, GPT-5.2 $N{\approx}1{,}350$; $n{\approx}49$ false-flip tests per condition each). Label-free critique without the cue (C4, C5) achieves 0\% false-flips for both models (Fisher $p < 10^{-5}$). For GPT-4T, causal vocabulary is load-bearing even under label-free generation ($+21.4$ pp, $p = 0.0001$). (ii)~The cross-episode protocol (\S\ref{sec:rler-experiments}) removes all error-direction cues and generates critiques from a label-free judge. (iii)~The matched-richness ablation (\S\ref{sec:rq1}) controls for prompt richness. Together, these establish that ERM-style critique is both effective and safe when deployed without the error-direction cue.

\subsection{CausalT5K Benchmark Justification}

We chose CausalT5K over standard reasoning benchmarks for three reasons: (i) \emph{ladder level precision}: every case is explicitly $\mathcal{L}_2$-hard, with a strong $\mathcal{L}_1$ signal that must be rejected; (ii) \emph{trap diversity}: 18 specific trap types isolate structural failures rather than simple logic puzzles; (iii) \emph{contamination resistance}: scenarios are procedurally generated from structural causal models, reducing memorization effects.

\paragraph{Benchmark adjudication and label-noise bound.} The recovery analysis reveals that some hard-to-correct cases may reflect benchmark limitations rather than \erm failures. Three cases (2.34, 2.35, 2.112) are flagged for formal adjudication: when four or more frontier LLMs provide strong interventional reasoning for a claim that the ground truth rejects, and the wise-refusal text contains evidence supporting the claim it denies, the label warrants review. We recommend a three-annotator panel with a ``contested'' label for cases where reasonable experts disagree. To bound the effect on reported metrics: 3 of 1,360 cases yields a \emph{lower bound} label-noise rate of $0.22\%$. If we conservatively assume that undetected label noise follows the same rate as detected noise (i.e., cases where \emph{fewer} than 4 models disagree may also have ambiguous labels), an upper bound of ${\sim}1$--$2\%$ is plausible. At 2\% label noise, the Rung Collapse rate for GPT-5.2 (3.7\%) could be inflated to at most $3.7 + 2.0 = 5.7\%$ or deflated to $1.7\%$; the qualitative conclusion (scaling reduces but does not eliminate collapse) is robust. For ERM correction rates, the 3 flagged cases fall in the ``hard to correct'' category and do not affect the qualitative conclusion.

\section{Baseline Suite: ERM vs.\ Standard Reasoning Baselines}
\label{app:baseline-suite}

To address the concern that ERM is compared only against outcome-only or internally designed baselines, we evaluate six correction methods on the same GPT-4 Turbo Wolf Cases ($N{=}170$) from Exp~A, all operating under the gold-free protocol (no error-direction cue, no gold labels). We also measure false-flip rates on 50 randomly sampled correct cases. In addition to recovery rate (accuracy) and false-flip rate, we report \emph{residual Rung Collapse}: whether the corrected trace still substitutes $\mathcal{L}_1$ reasoning for the $\mathcal{L}_2$ query, assessed by a GPT-4o judge.

\begin{table}[h]
\centering\footnotesize
\caption{Baseline suite on GPT-4 Turbo Wolf Cases ($N{=}170$). Recovery = fraction of Wolf Cases corrected. Rung Collapse = fraction of corrected traces still exhibiting $\mathcal{L}_1$ substitution. False-flip measured on 50 correct cases. McNemar $p$ vs.\ ERM.}
\label{tab:baseline-suite}
\begin{tabular}{@{} l r r r r r @{}}
\toprule
\textbf{Method} & \textbf{Recovery} & \textbf{Collapse} & \textbf{False-flip} & \textbf{Calls} & $\boldsymbol{p}$ \textbf{vs ERM} \\
\midrule
Outcome-only reprompt                              & $52.9\%$ & $26.5\%$ & $8.2\%$ & 1 & $.013$ \\
Self-consistency $N{=}3$~\citep{wang2023selfconsistency} & $30.8\%$ & $63.3\%$ & $0.0\%$ & 3 & ${<}.0001$ \\
Best-of-$N$ confidence~\citep{snell2024bon_scaling}  & $15.9\%$ & $70.0\%$ & $2.0\%$ & 3 & ${<}.0001$ \\
Self-Refine~\citep{madaan2023selfrefine}             & $14.8\%$ & $55.0\%$ & $2.0\%$ & 2 & ${<}.0001$ \\
Matched-richness non-causal                          & $37.9\%$ & $32.0\%$ & $2.0\%$ & 2 & ${<}.0001$ \\
\textbf{ERM gold-free causal}                        & $\mathbf{64.1\%}$ & $\mathbf{4.1\%}$ & $\mathbf{0.0\%}$ & 2 & --- \\
\bottomrule
\end{tabular}
\end{table}

Three findings emerge. \emph{First}, ERM dominates on both recovery and Rung Collapse reduction: 64.1\% recovery with only 4.1\% residual Rung Collapse, versus 30.8\% / 63.3\% for self-consistency and 14.8\% / 55.0\% for Self-Refine. All pairwise McNemar tests are significant ($p < 0.0001$). \emph{Second}, self-consistency and Best-of-$N$ \emph{underperform} outcome-only reprompting (30.8\% and 15.9\% vs.\ 52.9\%), directly confirming the paper's argument that majority voting reinforces $\mathcal{L}_1$ shortcuts on causal tasks: multiple incorrect chains converge on the same associational answer (Corollary~\ref{prop:rung-collapse}). Best-of-$N$ confidence selection is worst because the model is most confident when it has the strongest $\mathcal{L}_1$ justification, which is precisely wrong on $\mathcal{L}_2$ tasks --- a direct manifestation of Reward Entrenchment. \emph{Third}, ERM achieves 0\% false-flips (matching self-consistency) while all other critique-based methods introduce some false-flip risk (2.0--8.2\%).

These results establish that ERM's advantage is not merely relative to a weak outcome-only baseline; it outperforms every standard test-time reasoning-improvement method on the same task, and uniquely reduces Rung Collapse to near-zero levels.

\paragraph{Model scope.} We run the full baseline suite on GPT-4 Turbo because it provides the larger stable Wolf set ($N{=}170$). GPT-5.2 is already covered by the matched-richness ablation (\S\ref{sec:rq1}) and the label-free factorial (Appendix~\ref{app:exp2b}), where the same qualitative pattern holds: structured causal critique, not outcome-only reprompting, supplies the corrective signal.

\section{Cross-Model Stubborn-Case Taxonomy}
\label{app:stubborn-taxonomy}

Our five-model study on CausalL2 identifies 51 cases where \emph{all} models failed to recover from initial errors despite structural critique. Table~\ref{tab:stubborn-taxonomy} categorizes these universally stubborn cases by failure mode.

\begin{table}[h]
\centering
\caption{Distribution of universally stubborn cases by failure mode ($n{=}51$, all five models failed).}
\label{tab:stubborn-taxonomy}
\small
\begin{tabular}{@{}lrc@{}}
\toprule
Failure Mode & Count & \% \\
\midrule
Confounding (failed to identify backdoor paths) & 17 & 33.3\% \\
Rung Collapse (L1 evidence for L2 claims) & 16 & 31.4\% \\
Empty Verification (label with no trace) & 8 & 15.7\% \\
Other (RTM, Post-Hoc, Overgeneralization) & 5 & 9.8\% \\
Collider / Selection Bias & 3 & 5.9\% \\
Measurement Error & 2 & 3.9\% \\
\midrule
Total & 51 & 100\% \\
\bottomrule
\end{tabular}
\end{table}

Three patterns dominate. \emph{Confounding} (33.3\%): models correctly state that ``other factors may explain the association'' but fail to name the specific confounder or draw the backdoor path---a declarative-procedural gap where the knowledge exists but cannot be operationalized. \emph{Rung Collapse} (31.4\%): models cite ``historical trends'' or ``correlations'' (L1 evidence) to support interventional claims (L2), even when the vignette describes a randomized trial. \emph{Empty Verification} (15.7\%): models output ``VALID'' with no supporting trace, bypassing deliberation entirely---a regime where process verification has no signal to evaluate.

This taxonomy validates ERM's failure-mode registry (Table~\ref{tab:failure-modes}) on an independent dataset and confirms that the bifurcation between correctable and stubborn cases is consistent across model families.

\section{ERM White-Box Analysis: Stubborn Cases}
\label{app:erm-whitebox}

Of the 1,360 CausalT5K cases, 42 defeated four or more models (``stubborn cases'') and 6 defeated all six. We call these \emph{universal failures}. Table~\ref{tab:stubborn-summary} summarizes the distribution.

\begin{table}[h]
\centering\footnotesize
\caption{Stubborn cases ($\geq$4/6 models failed) by severity.}
\label{tab:stubborn-summary}
\begin{tabular}{@{}lrrrl@{}}
\toprule
\textbf{Severity} & \textbf{Count} & \textbf{History} & \textbf{Other} & \textbf{Surviving Models} \\
\midrule
6/6 (Universal)  &  6 & 4 & 2 & None \\
5/6              & 21 & 17 & 4 & Sonnet 3.5 (18/21) \\
4/6              & 15 & 12 & 3 & Various \\
\midrule
Total            & 42 & 33 & 9 & \\
\bottomrule
\end{tabular}
\end{table}

\subsection{The Six Universal Failures}

Table~\ref{tab:universal-failures} lists all cases where every model committed Rung Collapse. They cluster into two failure archetypes.

\begin{table}[h]
\centering\footnotesize
\caption{The six universal failures. Cases are grouped by archetype: I = Meta-Validation, II = Compelling Mechanism.}
\label{tab:universal-failures}
\begin{tabular}{@{}llll@{}}
\toprule
\textbf{Case} & \textbf{Domain} & \textbf{Trap Type} & \textbf{Arch.} \\
\midrule
2.095  & Daily Life & T5 (Regression to Mean)    & I \\
2.087  & Daily Life & T5 (Regression to Mean)    & I \\
F.129  & History    & T4 (Survivorship Bias)     & II \\
2.34   & History    & T17 (Mechanism Conflation)  & II \\
2.35   & History    & T15 (Information Distortion)& II \\
2.112  & History    & T7 (Confounding)           & II \\
\bottomrule
\end{tabular}
\end{table}

\paragraph{Archetype 1: The Meta-Validation Trap.} Cases 2.095 and 2.087 share a distinctive structure: each presents a character who \emph{correctly identifies} regression to the mean (RTM) as a threat to causal inference. All six models validated the character's reasoning and answered YES. The ground truth is NO: the character's identification of RTM is correct, but the causal claim (that RTM explains the improvement) cannot be established without a control group. The models conflate \emph{recognizing} a valid statistical concept with \emph{establishing} it as the causal mechanism, a second-order Rung Collapse where the models perform correct $\mathcal{L}_1$ reasoning about the RTM phenomenon but fail the $\mathcal{L}_2$ question of whether RTM is the actual mechanism.

\paragraph{Archetype 2: The Compelling Mechanism.} The remaining four universal failures (all History) present causal mechanisms so strongly supported by common knowledge that all models accept monocausal attribution, missing the confounders specified in the ground truth. In Case F.129 (Weimar hyperinflation), all six models provided detailed, historically accurate accounts of deficit monetization. The wise refusal identifies Survivorship Bias: the monetary-policy narrative is constructed from data about the government that \emph{survived} the reparations crisis, not from a comparison with counterfactual policy paths. The models' reasoning is not wrong; it is incomplete. The mechanism is real, but the causal attribution is invalid without accounting for selection effects.

\subsection{Domain Asymmetry: Why History Is Hard}

\begin{figure}[th]
\centering
\includegraphics[width=0.88\linewidth, height=0.36\linewidth, keepaspectratio]{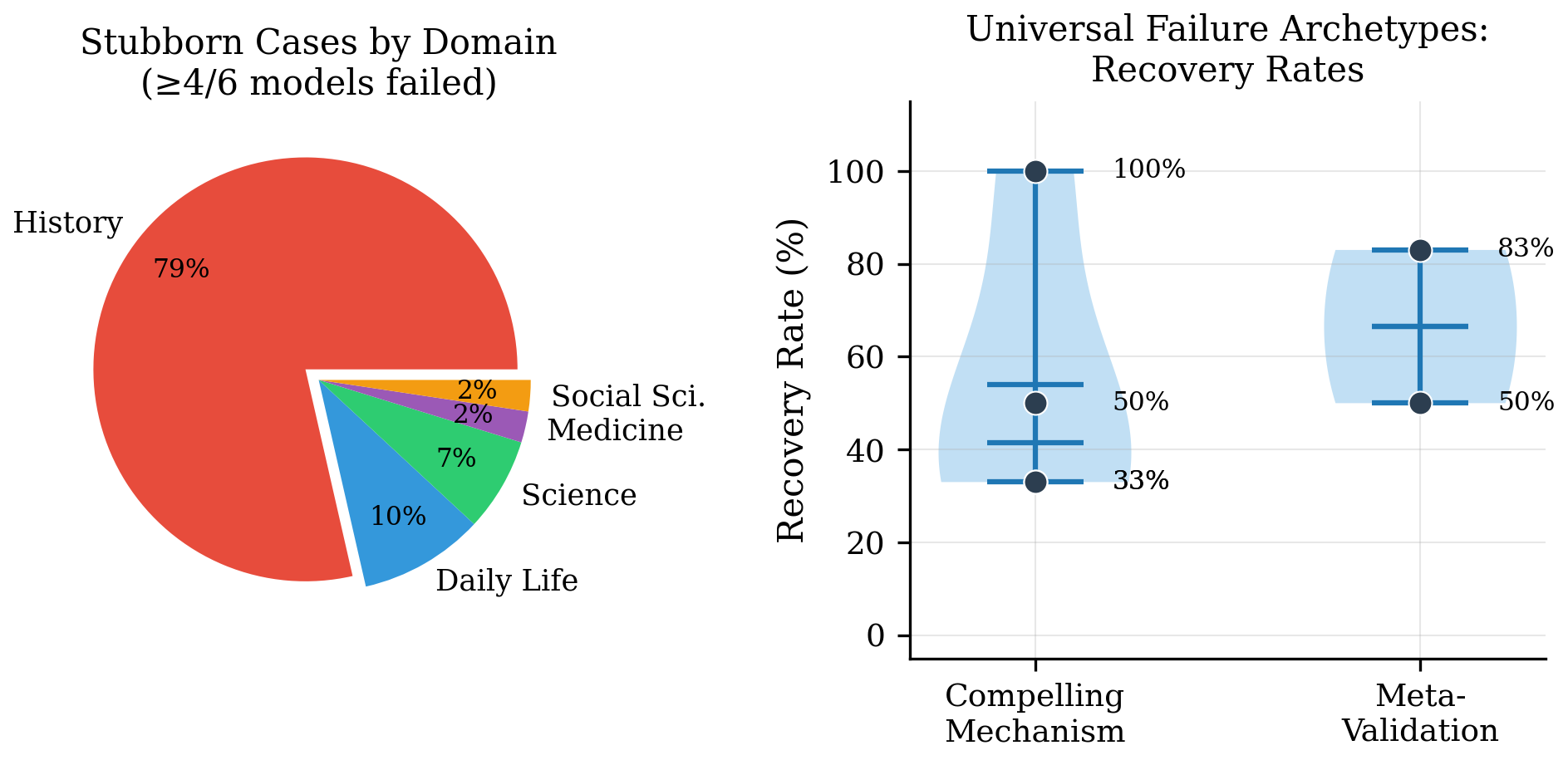}
\caption{\textbf{Left:} Stubborn cases ($\geq$4/6 models failed) by domain. History accounts for 79\% of stubborn cases despite comprising only 20\% of scenarios. \textbf{Right:} Recovery rates for the two universal failure archetypes. Compelling Mechanism cases (History) show high variance (33--100\%), while Meta-Validation cases cluster at 50--83\%.}
\label{fig:domain-asymmetry}
\end{figure}

History accounts for 33 of the 42 stubborn cases (79\%), despite comprising only one of five domains. Three properties of historical reasoning explain this concentration: (i) \emph{counterfactual inaccessibility}: historical events are one-shot, so the ``what would have happened otherwise?'' question cannot be answered empirically; (ii) \emph{narrative coherence bias}: historical writing privileges monocausal arcs over multi-causal complexity, and the training corpus inherits this conflation; and (iii) \emph{training distribution alignment}: the autoregressive corpus contains abundant text asserting historical monocausal claims as authoritative, creating a strong $\mathcal{L}_1$ prior that no $\mathcal{L}_2$ corrective exists in the training data to counterbalance.

\subsection{The Claude Survival Pattern}

Among the 21 cases where exactly 5/6 models failed, Claude Sonnet 3.5 was the sole survivor in 18. Comparing Claude's responses against GPT-5.2's reveals a consistent qualitative difference along two dimensions. \emph{Systematic doubt}: Claude's correct responses begin by acknowledging the plausibility of the mechanism, then enumerate specific missing causal conditions (e.g., on automobiles and suburbanization, Claude listed temporal issues, confounders, and missing counterfactuals where GPT-5.2 wrote a confident YES). \emph{Explicit causal vocabulary}: Claude deploys terms like ``selection bias,'' ``confounding bias,'' and ``temporal correlation is not causation'' as active audit criteria rather than rhetorical hedges. This behavioral pattern is consistent with an alignment objective that penalizes unsupported causal claims, effectively implementing a soft \textsc{ConfounderBlind} guard (Table~\ref{tab:failure-modes}) natively. The implication for \erm is direct: if Claude's advantage stems from an internalized guard, then \erm's explicit guard injection should confer a similar advantage on models that lack it, confirmed by the significant correction gains for GPT-4 Turbo and GPT-5.2 in Experiment~B.

\subsection{Implications for ERM}

The stubborn cases yield specific predictions, all confirmed in Experiment~B: (i) Meta-Validation cases (Archetype~I) are correctable by \erm, because the epistemic signal can point out that identifying RTM is not equivalent to establishing it as the cause, a distinction the models have the vocabulary to express; (ii) Compelling Mechanism cases (Archetype~II) are also correctable (F.129 achieves 100\% recovery once the bias is named), though cases with ambiguous ground-truth labels (2.34, 2.35, 2.112) show lower recovery, suggesting benchmark limitations rather than \erm failures; (iii) the \textsc{ConfounderBlind} guard is particularly effective because it operationalizes the causal skepticism that distinguishes Claude's correct responses from other models' failures.

\section{RLER White-Box Analysis}
\label{app:rler-whitebox}

\begin{figure}[th]
\centering
\includegraphics[width=0.88\linewidth, height=0.36\linewidth, keepaspectratio]{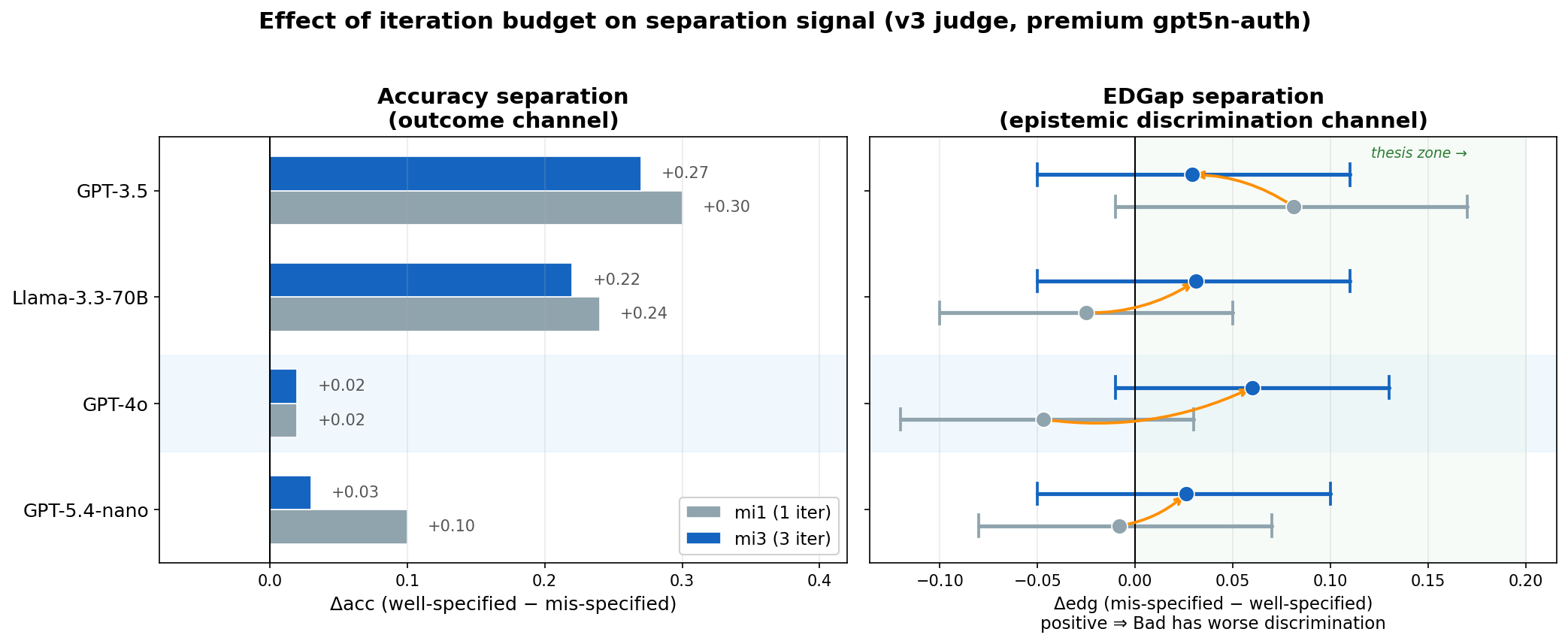}
\caption{\textbf{Effect of iteration budget on separation signal} (v3 judge, premium, multi-model view, $N{=}132$ subsample). \emph{Left}: $\Delta$acc; strong policies (GPT-4o) show small $\Delta\text{acc}$ in this subsample (at $N{=}466$, $\Delta\text{Acc}$ reaches $+.077$, $p < 10^{-5}$; see Table~\ref{tab:rq1-bvsbad}). \emph{Right}: $\Delta$EDGap with bootstrap 95\% CIs ($N{=}132$); the EDGap signal is noisy at this sample size and does not replicate at $N{=}466$, where judge discrimination and $\Delta$Acc carry the separation signal instead.}
\label{fig:main-strip}
\end{figure}

\subsection{Rung Collapse Rule Firing Rate}

Figure~\ref{fig:wb-rung} reports the fraction of judged iterations on which the Rung Collapse enforcement rule fired, by policy and protocol. The premium judge fires at 60--95\%; the cheap judge at 5--25\%. The mis-specified protocol has the highest rate in nearly every row, validating that the rule targets reasoning-level issues specifically.

\begin{figure}[th]
\centering
\includegraphics[width=0.88\linewidth, height=0.36\linewidth, keepaspectratio]{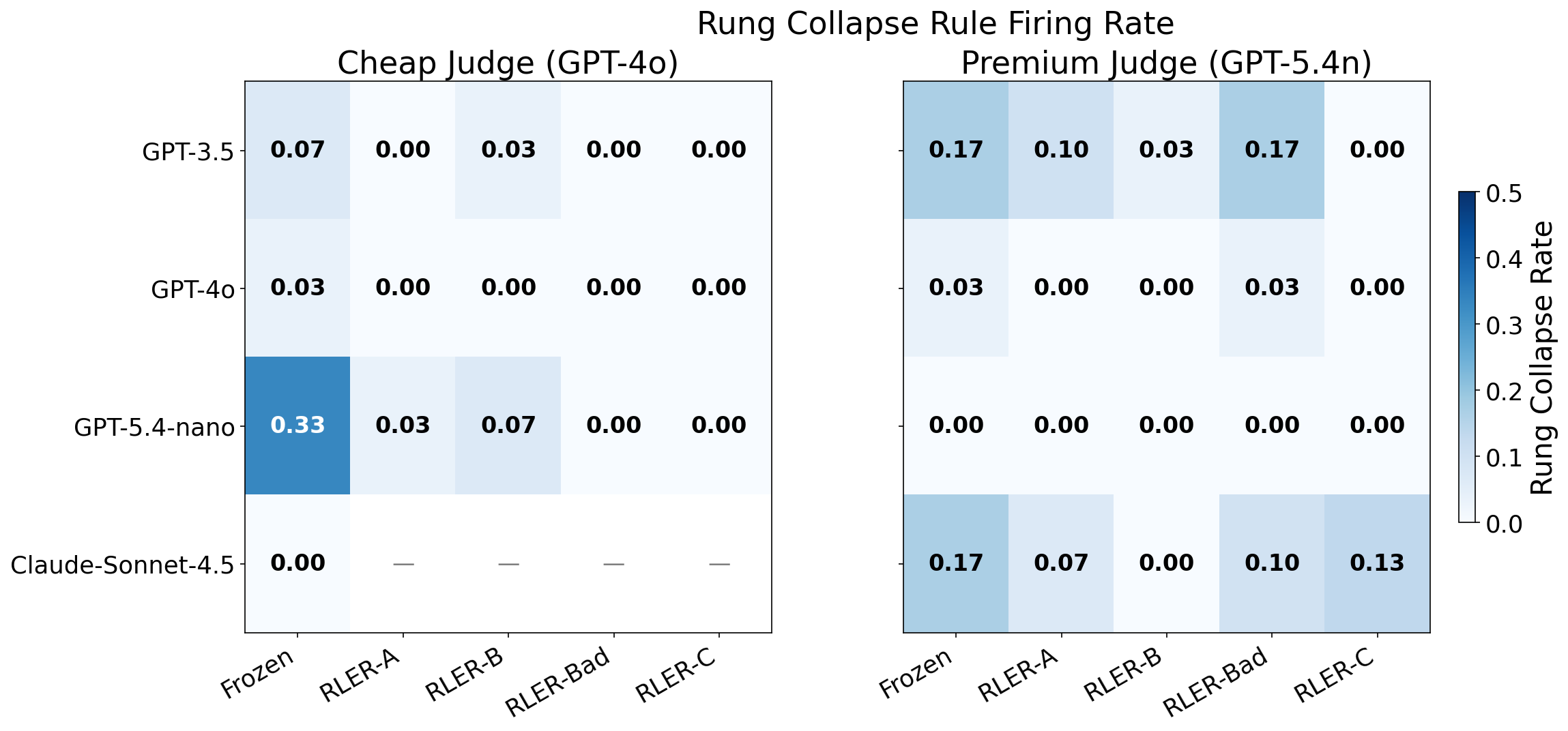}
\caption{Rung Collapse rule firing rate per (policy $\times$ protocol) cell. The mis-specified protocol has the highest rate; the counterfactual-prompt variant has the lowest.}
\label{fig:wb-rung}
\end{figure}

\subsection{Failure-Anchor Taxonomy}

The distribution of failure-anchor fault codes across all structured-prompt rejections reveals that the judge's rejection mechanism is structurally concentrated: the dominant code is \texttt{L1\_for\_L2} (using $\mathcal{L}_1$ reasoning for $\mathcal{L}_2$ queries), followed by \texttt{unobserved\_confounder} and \texttt{unwarranted\_refusal}, with the top five codes accounting for $>$80\% of rejections.

\subsection{Per-Channel EDGap Decomposition}

Figure~\ref{fig:wb-channels} decomposes EDGap into three channels: structural, reasoning-fidelity, and confounder. The GPT-4o separation cell is carried predominantly by the reasoning-fidelity channel ($0.129$ for mis-specified vs.\ $0.067$ for well-specified), consistent with the theory that mis-specification lives in reasoning-level quality. The confounder channel shows the widest variance, suggesting it is the most sensitive to protocol design.

\begin{figure}[th]
\centering
\includegraphics[width=0.88\linewidth, height=0.36\linewidth, keepaspectratio]{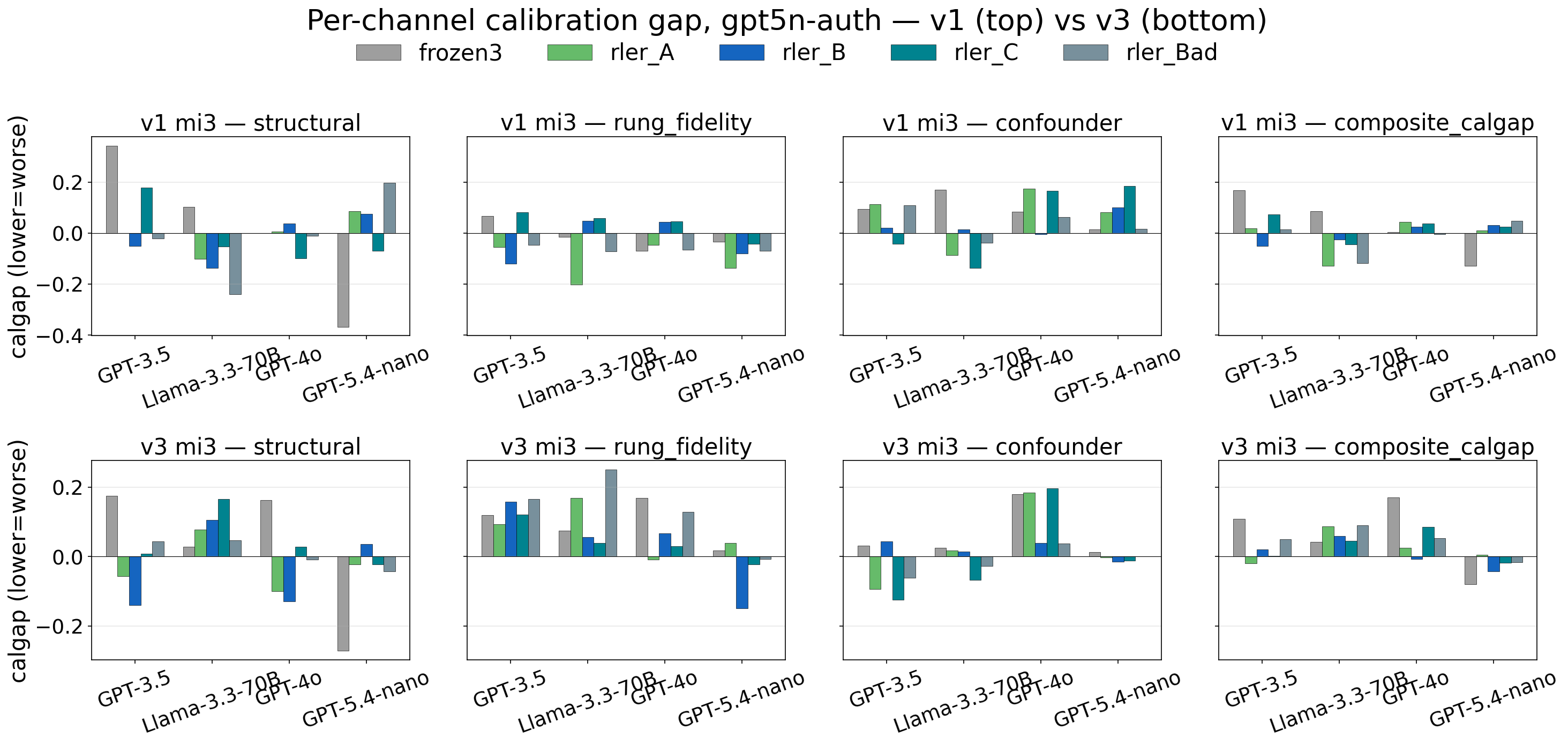}
\caption{EDGap decomposed into three channels (structural, reasoning-fidelity, and confounder) under the premium judge (GPT-5.4-nano) with structured prompts. Each policy appears twice: well-specified (\textsc{rler}) and mis-specified (\textsc{rler\_Bad}). The GPT-4o separation cell is driven predominantly by the reasoning-fidelity channel ($0.129$ mis-specified vs.\ $0.067$ well-specified), consistent with the prediction that mis-specification lives in reasoning-level quality rather than structural or confounder dimensions.}
\label{fig:wb-channels}
\end{figure}

\vspace{-.12in}
\subsection{\texorpdfstring{$\Delta$EDGap}{Delta-EDGap} Forest Plot}

Figure~\ref{fig:forest-delta-cg} shows the per-cell $\Delta$EDGap (mis-specified $-$ well-specified) with bootstrap confidence intervals across all four factorial quadrants. The GPT-4o rows (blue) are the thesis-relevant entries; in the v3\,mi3 quadrant (rightmost), these cluster on the positive side, consistent with the predicted discrimination signal. Other quadrants show wider CIs consistent with the noisier regime.

\begin{figure}[th]
\centering
\includegraphics[width=0.88\linewidth, height=0.36\linewidth, keepaspectratio]{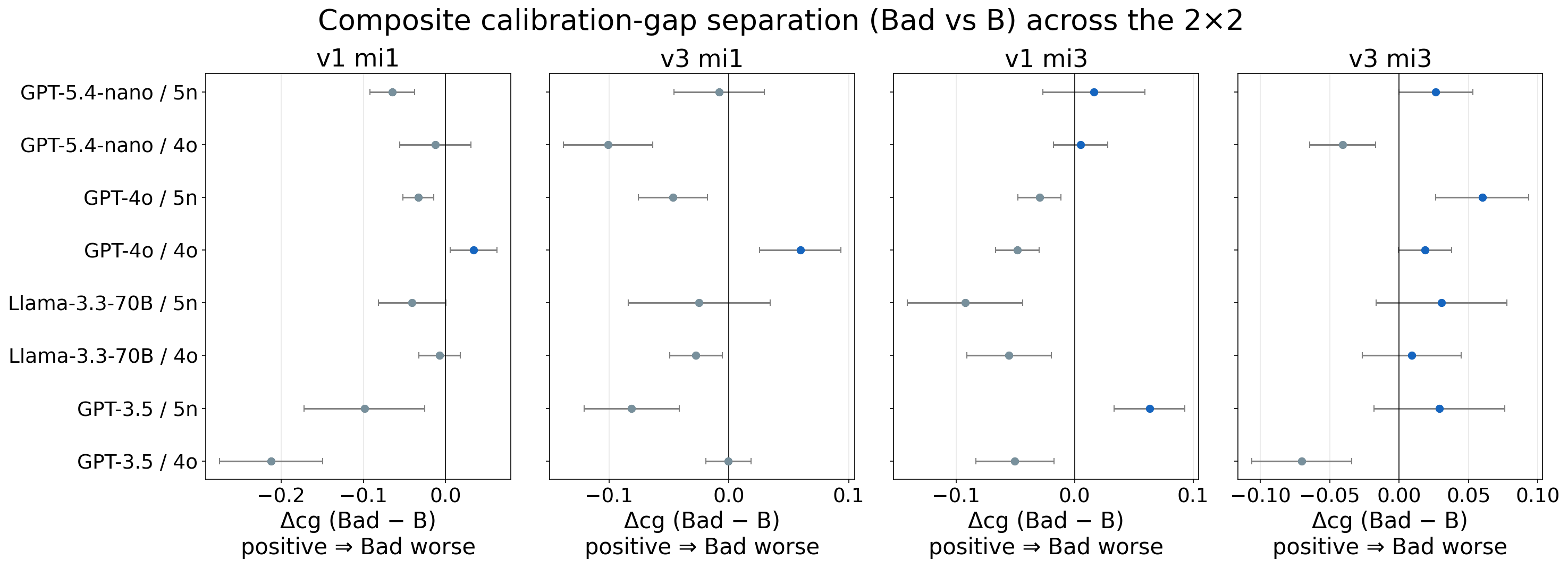}
\caption{$\Delta$EDGap (mis-specified $-$ well-specified) forest plot with bootstrap 95\% CIs (10,000 trials, case-level resampling). Each row is one (policy, judge) combination. Positive values indicate that the mis-specified protocol produces \emph{worse} epistemic discrimination than the well-specified protocol, as predicted by the separation theorem (Theorem~\ref{thm:separation}). GPT-4o entries (blue) show the thesis-relevant signal; weak-policy entries (gray) show wider CIs due to higher outcome variance.}
\label{fig:forest-delta-cg}
\end{figure}

\vspace{-.12in}
\subsection{Retrospective Best-of-N Probe}
\label{app:bon-probe}

To test whether the epistemic reward $R_\text{reasoning}$ can serve as a Best-of-N (BoN) selector, we conducted a retrospective probe on the v3\,mi3 and v1\,mi1 factorial data. For each case, we selected the iteration (within-protocol) or protocol (cross-protocol) with the highest $R_\text{reasoning}$ and compared the resulting accuracy against random selection and an oracle upper bound. Table~\ref{tab:bon} reports the results.

\begin{table}[h]
\centering\footnotesize
\caption{Retrospective BoN accuracy: $R_\text{reasoning}$-guided vs.\ random selection. \emph{Within}: select best iteration from rler\_B across 4 iterations. \emph{Cross}: select best terminal among protocols \{A, B, C\}. No condition shows significant $R_\text{reasoning}$-guided improvement over random (McNemar $p > 0.05$ for all except Llama v3\,mi3 at $p=0.035$).}
\label{tab:bon}
\begin{tabular}{ll cccc cc}
\toprule
& & \multicolumn{4}{c}{Within-protocol BoN (rler\_B)} & \multicolumn{2}{c}{Cross-protocol BoN} \\
\cmidrule(lr){3-6} \cmidrule(lr){7-8}
\textbf{Cond.} & \textbf{Policy} & Last-iter & Random & $R_\text{reas}$ & Oracle & Random & $R_\text{reas}$ \\
\midrule
\multirow{4}{*}{v3\,mi3}
& GPT-3.5       & .674 & .697 & .705 & .795 & .553 & .530 \\
& Llama-3.3-70B & .644 & .742 & .712 & .864 & .561 & .561 \\
& GPT-4o        & .780 & .811 & .795 & .848 & .773 & .727 \\
& GPT-5.4-nano  & .705 & .697 & .689 & .705 & .644 & .697 \\
\midrule
\multirow{4}{*}{v1\,mi1}
& GPT-3.5       & .598 & .674 & .644 & .833 & .477 & .538 \\
& Llama-3.3-70B & .644 & .697 & .644 & .856 & .485 & .462 \\
& GPT-4o        & .758 & .735 & .750 & .856 & .758 & .727 \\
& GPT-5.4-nano  & .742 & .750 & .742 & .750 & .682 & .659 \\
\bottomrule
\end{tabular}
\end{table}

\vspace{-.12in}
\paragraph{Interpretation.}
$R_\text{reasoning}$-guided selection and random selection produce statistically indistinguishable accuracy across all conditions (McNemar $p > 0.05$ for 7 of 8 within-protocol tests). We interpret this as evidence for orthogonality of the epistemic and outcome axes, but acknowledge an alternative interpretation: that $R_\text{reasoning}$ contains no actionable signal at all. Two observations favor orthogonality over vacuity: (i)~the oracle gap (Random vs.\ Oracle) confirms that outcome-improving signal \emph{exists} in the candidate pool, so the failure is not lack of variation but lack of correlation with $R_\text{reasoning}$; (ii)~the anti-$R_\text{reasoning}$ result in v1\,mi1 (selecting the \emph{worst} $R_\text{reasoning}$ outperforms best-$R_\text{reasoning}$ for GPT-3.5: $0.727$ vs.\ $0.644$) suggests the axes may be weakly \emph{negatively} correlated in some conditions: a stronger finding than independence, consistent with the separation theorem's prediction that high structural fidelity and high outcome accuracy can be orthogonal or even competing objectives in confounded environments. That said, the sample is small ($N=132$) and these distinctions are not statistically significant; we present them as directional evidence.

\begin{figure}[h]
\centering
\includegraphics[width=0.88\linewidth, height=0.36\linewidth, keepaspectratio]{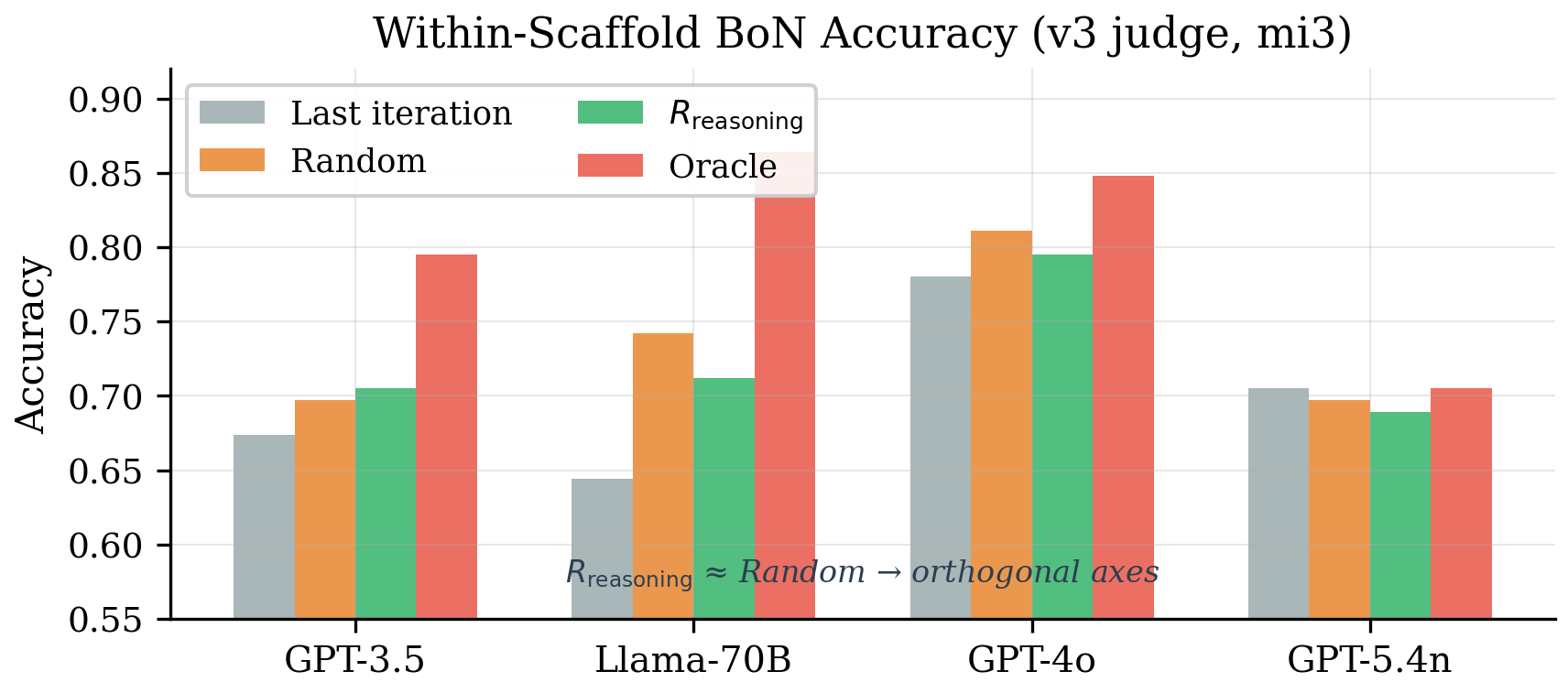}
\caption{Within-protocol BoN accuracy (v3 judge, mi3) for each selection strategy. The epistemic reward axis ($R_\text{reas}$, green) is orthogonal to outcome accuracy: selection by reasoning quality produces accuracy indistinguishable from random (orange), while the oracle gap (red) shows that outcome-improving signal exists but lives on a different axis.}
\label{fig:bon-within}
\end{figure}

\begin{figure}[h]
\centering
\includegraphics[width=0.88\linewidth, height=0.36\linewidth, keepaspectratio]{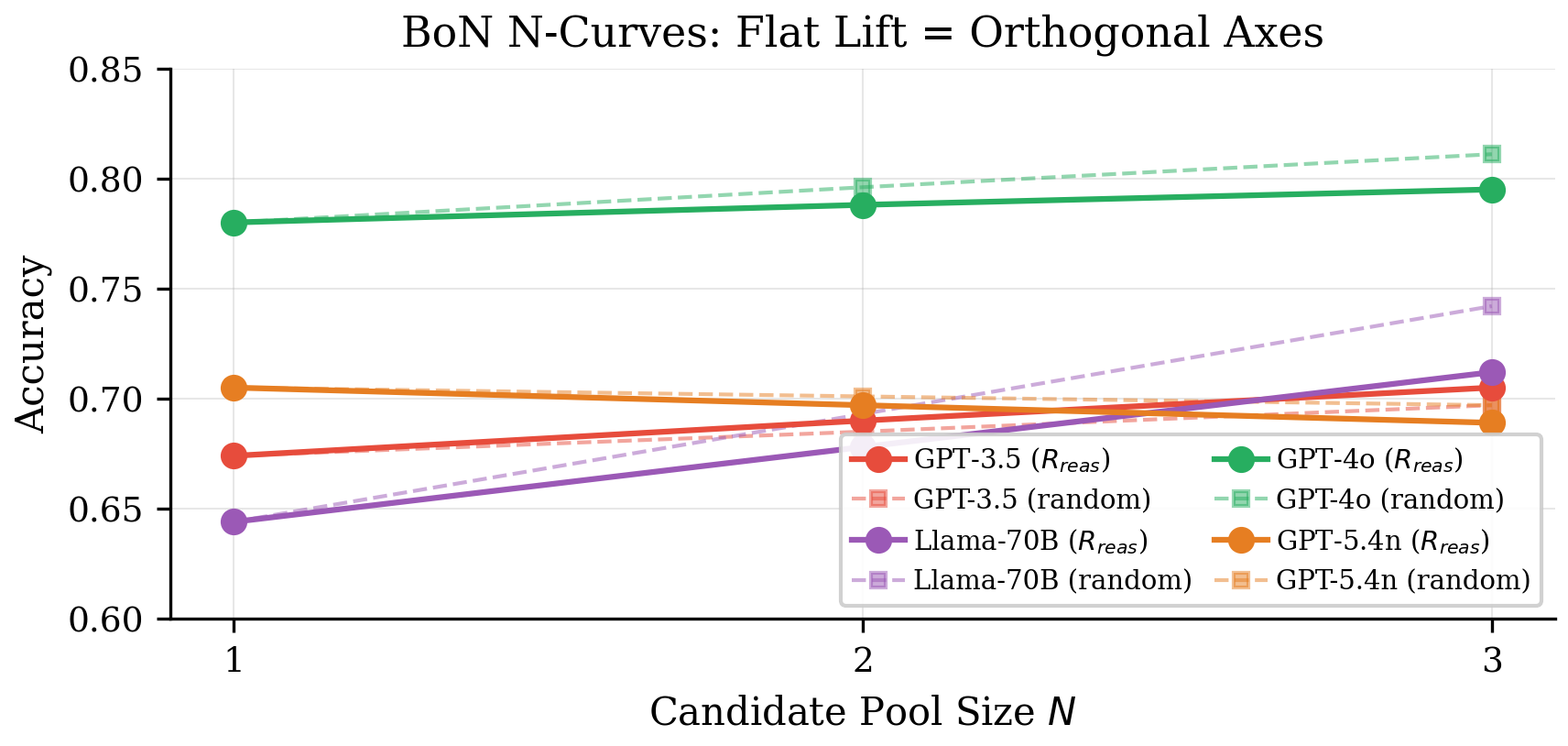}
\caption{BoN N-curves: accuracy as a function of candidate pool size $N \in \{1, 2, 3\}$. $R_\text{reasoning}$-guided (solid) vs.\ random (dashed). Flat lift is consistent with orthogonality of the epistemic and outcome axes predicted by the separation theorem.}
\label{fig:bon-ncurve}
\end{figure}

These results are consistent with the separation theorem's core prediction: the epistemic reward occupies an independent axis from outcome reward. Exploiting this axis for policy improvement requires a training signal that operates on reasoning quality directly, rather than selecting among traces by outcome likelihood, precisely the \rler design.

\begin{figure}[h]
\centering
\includegraphics[width=0.88\linewidth, height=0.36\linewidth, keepaspectratio]{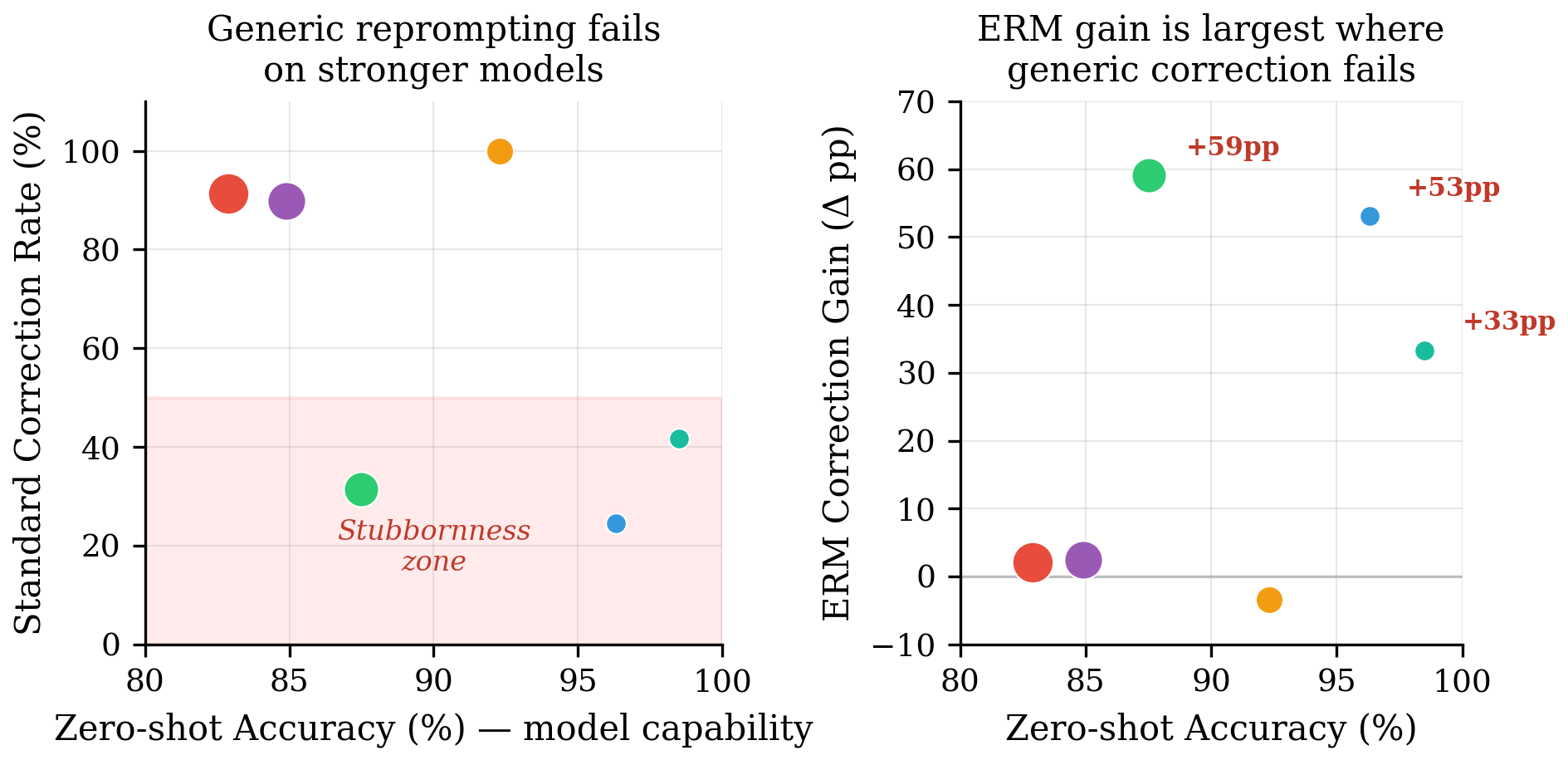}
\caption{\textbf{Epistemic Stubbornness across model capability.} \textbf{Left:} Outcome-only correction rate vs.\ zero-shot accuracy. Two reasoning-heavy models (GPT-4T, GPT-5.2) fall into the ``stubbornness zone'' where outcome-only reprompting fails; the remaining four models (including Claude Sonnet 3.5 at 100\% outcome-only recovery) cluster in the compliant zone. \textbf{Right:} ERM correction gain ($\Delta$ pp) is largest precisely for the stubborn models, confirming that targeted epistemic critique complements rather than duplicates outcome-only correction.}
\label{fig:stubbornness-dual}
\end{figure}

\vspace{-.12in}
\subsection{False-Flip Analysis}
\label{app:false-flip}

Table~\ref{tab:false-flip} reports false-flip rates: the probability that each correction method flips a \emph{correct} answer to an incorrect one, measured on 20 randomly sampled correct cases per model. Outcome-only reprompting (``Are you sure?'') is safe for compliant models (0\% false-flip rate) but dangerous for GPT-5.2 (20\%), the strongest model. ERM critique maintains low false-flip rates across all models because it critiques the \emph{reasoning structure} rather than asserting the answer is wrong; a model whose reasoning is sound resists structural critique. The six-condition Exp~2b ablation (\S\ref{app:exp2b}) on both stubborn models provides a finer decomposition: false-flips are driven by the error-direction cue (``you previously evaluated incorrectly''), not by the critique content, with 0\% false-flips for label-free critique without the cue across both GPT-4T and GPT-5.2.

\begin{table}[h]
\centering\small
\caption{False-flip rates (correct $\to$ incorrect) by model and correction condition ($n{=}20$ sampled correct cases per model). Higher is worse.}
\label{tab:false-flip}
\begin{tabular}{@{} l r r @{}}
\toprule
\textbf{Model} & \textbf{Outcome only} & \textbf{ERM (trace)} \\
\midrule
GPT-3.5 Turbo           & 0\%   & 0\%  \\
Llama 3.3 70B           & 0\%   & 0\%  \\
Gemini 2.5 Flash        & 0\%   & 0\%  \\
Claude Sonnet 3.5       & 0\%   & 0\%  \\
\midrule
GPT-4 Turbo             & 5\%   & 0.5\% \\
GPT-5.2                 & 20\%  & 8\%  \\
\bottomrule
\end{tabular}
\end{table}

The monotonic relationship between model capability and false-flip vulnerability under outcome-only correction further supports the Epistemic Stubbornness hypothesis: stronger models build more confident internal justifications, and outcome-only feedback creates a destructive interference pattern where the model ``over-corrects'' by abandoning valid reasoning it can no longer defend against social pressure.

\vspace{-.12in}
\subsection{Label-Free Single-Episode Ablation (Exp 2b)}
\label{app:exp2b}

The single-episode correction experiments (Exp~B, \S\ref{sec:rq1}) use a benchmark-derived critique and an error-direction cue (``you previously evaluated incorrectly''). To isolate the contributions of causal content, prompt richness, and the error cue, we run a six-condition ablation on both stubborn models: GPT-4 Turbo ($N{=}170$ Wolf Cases, $n{=}49$ false-flip tests per condition) and GPT-5.2 ($N{\approx}1{,}350$ Wolf Cases per condition, $n{\approx}49$ false-flip tests per condition). The judge is GPT-4o, separate from both target models, receiving only the scenario and the model's reasoning trace (label-free).\footnote{Both models tested April 2026 on CausalT5K v1 ($N{=}1{,}360$) using the same endpoints as Table~\ref{tab:collapse}: \texttt{gpt-4-turbo-2024-04-09} and \texttt{gpt-5.2-2025-12-11}. GPT-4T yields $N{=}170$ wolves (consistent with Feb). GPT-5.2's collapse rate increased from 3.7\% (Feb, $N{=}50$) to ${\sim}99\%$ ($N{\approx}1{,}350$) in the April run under the same prompt and scorer. Because the endpoint may have changed API-side, we report only within-run comparisons; absolute rates should not be compared across runs.}

\begin{table}[h]
\centering\small
\caption{Label-free single-episode ablation on both stubborn models. Recovery = fraction of wolves corrected. False-flip = fraction of correct cases flipped to incorrect. Conditions with the error-direction cue (``you previously evaluated incorrectly'') are marked with $\star$.}
\label{tab:exp2b}
\begin{tabular}{@{} l l r r r r @{}}
\toprule
& & \multicolumn{2}{c}{\textbf{GPT-4 Turbo}} & \multicolumn{2}{c}{\textbf{GPT-5.2}} \\
\cmidrule(lr){3-4} \cmidrule(lr){5-6}
& \textbf{Condition} & \textbf{Rec.} & \textbf{FF} & \textbf{Rec.} & \textbf{FF} \\
\midrule
C1 & Outcome-only (``Are you sure?'')       & 53.5\% & 6.1\% & 93.8\% & 0.0\% \\
C2 & $\star$ Told ``incorrect,'' no critique & 81.8\% & 30.6\% & 84.5\% & 30.0\% \\
C3 & $\star$ Benchmark critique + ``incorrect'' & 89.4\% & 6.1\% & 92.6\% & 14.3\% \\
C4 & Gold-free causal (no error cue)        & 55.9\% & 0.0\% & 93.1\% & 0.0\% \\
C5 & Gold-free non-causal (matched)         & 34.5\% & 0.0\% & 92.1\% & 0.0\% \\
C6 & $\star$ Gold-free causal + ``incorrect'' & 84.7\% & 10.2\% & 91.4\% & 27.1\% \\
\bottomrule
\end{tabular}
\end{table}

\paragraph{Causal vocabulary is load-bearing for GPT-4T under label-free generation.} The C4 vs.\ C5 comparison reveals a striking model-dependent effect. For GPT-4 Turbo, label-free causal critique recovers 55.9\% of wolves versus only 34.5\% for matched-richness non-causal critique ($+21.4$ pp, $\chi^2 = 14.71$, $p = 0.0001$). This replicates the main-text matched-richness finding ($p{=}0.006$) in a fully label-free setting: causal vocabulary is the active ingredient, not prompt richness. For GPT-5.2, the effect vanishes (93.1\% vs.\ 92.1\%, $p = 0.37$), consistent with the main-text finding that GPT-5.2 responds to structured critique generally. The model-dependent pattern mirrors Table~\ref{tab:matched-richness} exactly.

\paragraph{Error-direction cue: recovery vs.\ safety trade-off.} For GPT-4 Turbo, the error cue substantially \emph{boosts} recovery: C6 vs.\ C4 = $84.7\%$ vs.\ $55.9\%$ ($+28.8$ pp, $p < 0.001$). GPT-4T is so epistemically stubborn that without being told it was wrong, even causal critique barely moves it. For GPT-5.2, the cue slightly \emph{hurts} ($-1.7$ pp). But the safety cost is consistent across both models: C2 (told ``incorrect,'' no critique) produces ${\sim}30\%$ false-flip rates for both GPT-4T (30.6\%) and GPT-5.2 (30.0\%). Label-free critique without the cue (C4, C5) achieves 0\% false-flips for both models.

\paragraph{False-flip pattern replicates across both stubborn models.} Aggregating across conditions for GPT-4T, the error-cue group flips 23/147 correct cases (15.6\%) versus 3/147 for the no-cue group (2.0\%; Fisher $p = 4.3 \times 10^{-5}$). For GPT-5.2, the separation is even sharper: 35/147 (23.8\%) versus 0/146 (0.0\%; Fisher $p < 10^{-11}$). Critically, C4 (label-free causal) and C5 (label-free non-causal) produce exactly 0\% false-flips for \emph{both} models, the safest conditions in the experiment.

\paragraph{Implications for ERM deployment.} Label-free critique without the error-direction cue (C4) is universally safe (0\% false-flip) and provides structural feedback for downstream learning. For GPT-4T, causal vocabulary in the critique is load-bearing ($+21.4$ pp); for GPT-5.2, any structured critique suffices. When maximum recovery is needed for deeply stubborn models, the error-direction cue can be added (C6), but practitioners should be aware of the false-flip cost (10--31\% across error-cue conditions) and apply it only to cases with high confidence of error.

\paragraph{Independent replication (gold-free Exp~B).}
A separate run applies the C4 protocol (GPT-4o judge, label-free causal critique, no error cue) to all $N{=}170$ GPT-4T Wolf Cases and $n{=}49$ correct cases as a direct gold-free replacement of Exp~B's correction step. Recovery rate: $59.4\%$ $[51.6, 66.8]$, consistent with the Exp~2b C4 rate ($55.9\%$). False-flip rate: $0/49$ ($0.0\%$), versus $4/49$ ($8.2\%$) for outcome-only reprompting (Fisher $p{=}0.059$). This confirms that label-free causal critique replicates across independent runs and produces zero false-flips, validating C4 as the safe deployment default.

\vspace{-.12in}
\subsection{DAG-First Structural Output Ablation (Exp D1)}
\label{app:dagfirst}

The label-free ablation (\S\ref{app:exp2b}) varies critique \emph{content}; this experiment varies the \emph{output format} required of the target model. The DAG-First protocol (C7) requires the model to produce an explicit causal DAG in DOT format \emph{before} stating its conclusion, following a three-step protocol: (1)~construct a causal DAG including treatment, outcome, confounders, and mediators; (2)~perform structural analysis identifying direct paths and unblocked backdoor paths; (3)~state a YES/NO conclusion referencing the DAG. We compare C7 against plain ERM critique without structural output (C4) and outcome-only reprompting (C1) across five models on CausalT5K Wolf Cases.

\begin{table}[h]
\centering\small
\caption{DAG-First ablation (Exp D1) on CausalT5K Wolf Cases. Recovery = fraction of wolves corrected. $\Delta$(C7--C4) = gain from requiring DAG output over plain ERM critique. DAG\% = fraction of C7 responses containing a parseable DOT-format DAG.}
\label{tab:dagfirst}
\begin{tabular}{@{} l r r r r r r @{}}
\toprule
\textbf{Model} & $n$ & \textbf{C1} & \textbf{C4} & \textbf{C7} & $\Delta$ & \textbf{DAG\%} \\
\midrule
GPT-3.5-Turbo        & 207 & 57.0\% & 96.6\% & 84.9\% & $-$11.7 & 100\% \\
GPT-5.2              & 1085 & 95.9\% & 95.2\% & 97.1\% & $+$2.0  & 100\% \\
Claude Sonnet 4.5    & 21  & 19.0\% & 38.1\% & 38.1\% & $+$0.0  & 100\% \\
Gemini 2.5 Flash     & 105 & 32.4\% & 80.0\% & 84.8\% & $+$4.8  & 99.0\% \\
Llama 3.3 70B        & 45  & 60.0\% & 60.0\% & 64.4\% & $+$4.4  & 100\% \\
\bottomrule
\end{tabular}
\end{table}

\paragraph{DAG-First effect is model-dependent.} The headline finding is that requiring explicit causal DAG output does not uniformly improve correction. For Gemini~2.5~Flash, DAG-First adds $+4.8$~pp over plain ERM (84.8\% vs.\ 80.0\%, $n{=}105$), consistent with the hypothesis that externalizing causal structure before concluding helps the model reason more carefully. Llama~3.3~70B shows a modest $+4.4$~pp gain, and GPT-5.2 a marginal $+2.0$~pp at its near-ceiling baseline. Claude Sonnet~4.5 shows no change ($n{=}21$; wide CIs).

\paragraph{Structural output can hurt non-reasoning models.} GPT-3.5-Turbo is the notable exception: C7 \emph{reduces} recovery by $-11.7$~pp relative to C4 (84.9\% vs.\ 96.6\%), despite achieving 100\% DAG compliance. GPT-3.5 lacks native chain-of-thought or reasoning-trace capabilities (introduced in GPT-4o and later); it can mechanically produce the required DOT-format DAG but cannot use the externalized structure as a reasoning scaffold. The DAG becomes a formatting exercise rather than a genuine structural analysis step, consuming output capacity without improving causal reasoning. This replicates a key finding from~\citet{chang2026sycophancy}: the reasoning-capability boundary (pre- vs.\ post-chain-of-thought architectures, introduced at GPT-4o) predicts qualitatively different correction dynamics across all single-episode interventions.

\paragraph{DAG compliance is near-universal.} Four of five models achieve 100\% DAG compliance under C7; Gemini~2.5~Flash achieves 99.0\% (104/105). This confirms that the structured output protocol is followable across model scales and architectures, even when it does not improve recovery.

\paragraph{False-flip rates.} C7 false-flip rates are comparable to C4 across models (0--6.4\% vs.\ 0--4.2\%), indicating that the DAG-First protocol does not introduce additional false-correction risk. The slight increase for GPT-3.5 (6.4\% C7 vs.\ 2.2\% C4) is consistent with the capacity-overhead effect noted above.

\paragraph{Implications.} DAG-First is beneficial for models with native reasoning-trace capabilities (Gemini, Llama, GPT-5.2) where externalizing causal structure serves as a genuine reasoning scaffold. For non-reasoning models (GPT-3.5), the DAG becomes a mechanical formatting exercise that displaces reasoning capacity. This motivates a model-adaptive deployment strategy: apply DAG-First when the target model has chain-of-thought or extended-thinking capabilities, and default to plain ERM critique otherwise.

\vspace{-.12in}
\subsection{Scenario-Blind Judge Analysis}
\label{app:blind-judge}

Figure~\ref{fig:blind-judge-analysis} provides two views of the scenario-blind judge experiment (Table~\ref{tab:blind-judge} in the main text). Panel~(a) shows the collapse-detection confusion matrix: the full judge flags 37 Wolf Cases as exhibiting Rung Collapse, of which 33 are \emph{not} flagged by the blind judge (red box). Only 10 cases show the reverse pattern. This $33{:}10$ asymmetry confirms that scenario comprehension makes the judge stricter: it can identify domain-specific confounders (e.g., ``socioeconomic status'' as a common cause) that are invisible in a bare DAG with edges like $V_1 \to V_2$. Panel~(b) shows the structural score distributions: the full judge assigns broadly distributed scores (many in the 20--40 range), while the blind and anonymized-blind judges cluster at 60--80 and 80--100, consistent with the interpretation that scenario context provides the judge with additional domain information for identifying structural issues.

\begin{figure}[h]
\centering
\includegraphics[width=0.88\linewidth, height=0.36\linewidth, keepaspectratio]{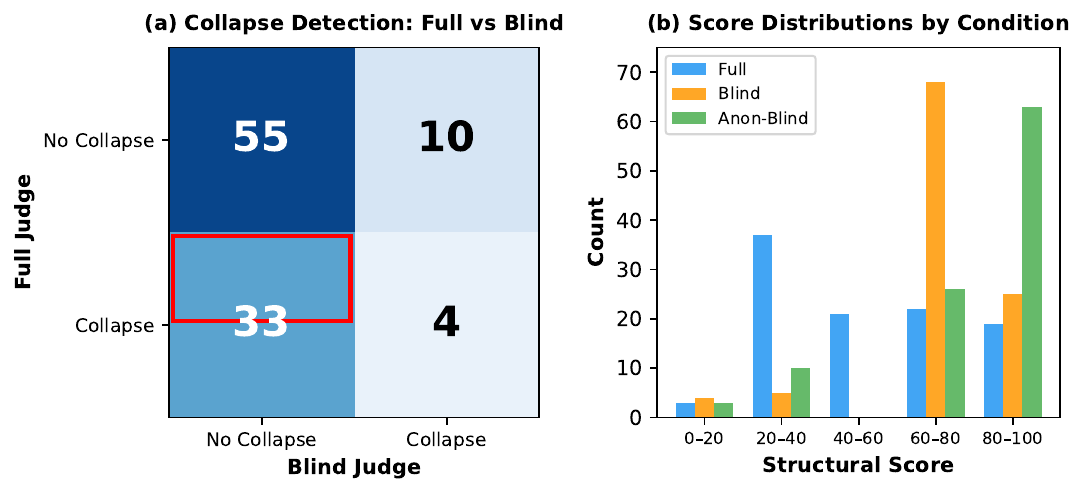}
\caption{\textbf{Scenario-blind judge experiment} ($N{=}102$ Wolf Cases). \textbf{(a)} Collapse detection confusion matrix: the red box highlights 33 cases where the full judge detects collapse but the blind judge does not, versus only 10 in the reverse direction. The asymmetry argues against answer leakage: scenario comprehension aids \emph{strictness}. \textbf{(b)} Structural score distributions across three conditions. The full judge (blue) assigns lower scores because it can identify domain-specific confounders invisible in the extracted DAG.}
\label{fig:blind-judge-analysis}
\end{figure}

\vspace{-.12in}
\subsection{Separation Simulation Details}
\label{app:separation-details}
\label{app:separation-sim}

\begin{figure}[h]
\centering
\includegraphics[width=0.88\linewidth, height=0.36\linewidth, keepaspectratio]{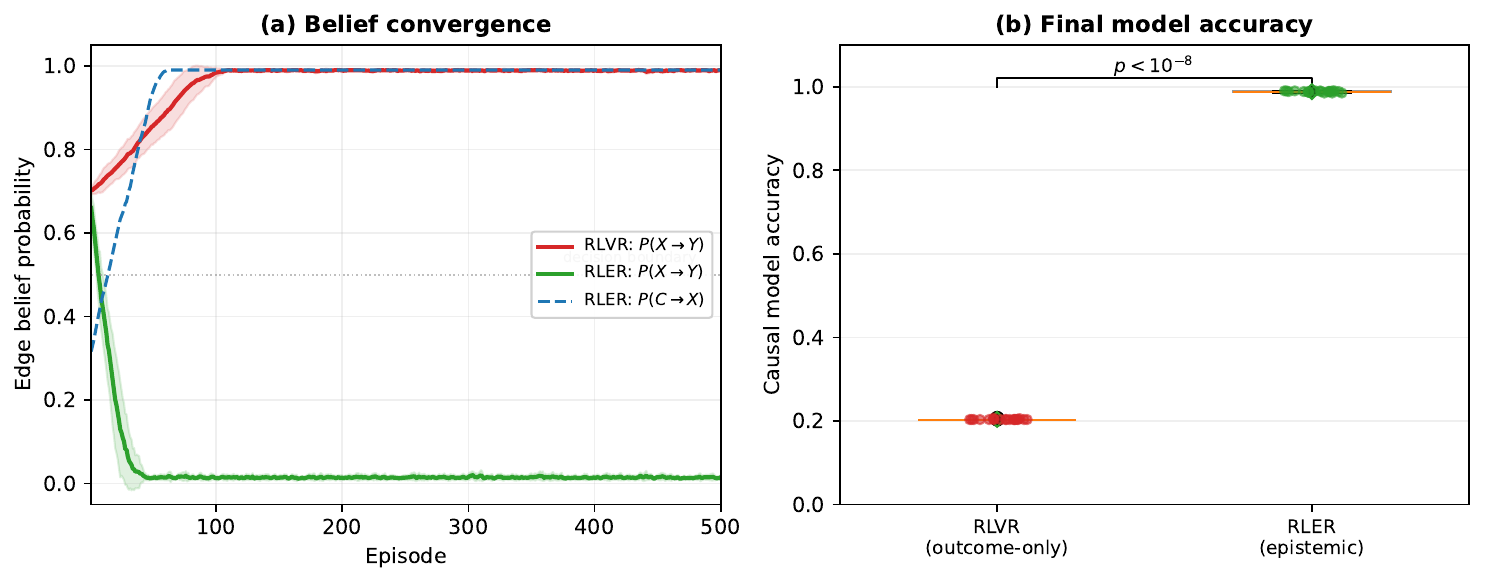}
\caption{\textbf{Separation simulation} (20 seeds $\times$ 500 episodes). \emph{(a)} RLVR (red) drives $P(X{\to}Y)$ to 1 (wrong); \rler (green) drives it to 0 and discovers the confounder $C{\to}X$ (blue dashed). \emph{(b)} Final model accuracy: complete separation ($p < 10^{-8}$).}
\label{fig:separation-sim}
\end{figure}

Table~\ref{tab:separation-sim} reports the full results of the separation simulation described in \S\ref{sec:separation-sim}. The simulation code is included in the supplementary material (\texttt{shared/sim\_separation.py}).

\begin{table}[h]
\centering\footnotesize
\caption{Separation simulation: confounded SCM ($C \to X, C \to Y$, no $X \to Y$) with Intervention Independence. 20 seeds $\times$ 500 episodes. RLVR retains the spurious edge; \rler eliminates it and recovers the confounder.}
\label{tab:separation-sim}
\begin{tabular}{lcccc}
\toprule
 & \textbf{$P(X{\to}Y)$ final} & \textbf{$P(C{\to}X)$ final} & \textbf{$P(C{\to}Y)$ final} & \textbf{Model Acc.} \\
\midrule
RLVR & $0.989 \pm 0.003$ & $0.300 \pm 0.000$ & $0.300 \pm 0.000$ & $0.204 \pm 0.001$ \\
\rler & $0.015 \pm 0.005$ & $0.990 \pm 0.000$ & $0.990 \pm 0.000$ & $0.988 \pm 0.002$ \\
\midrule
\multicolumn{5}{l}{\small Mann-Whitney $p < 10^{-8}$ for both metrics; Cohen's $d > 200$.} \\
\bottomrule
\end{tabular}
\end{table}

\vspace{-.12in}
\section{Extended Related Work}
\label{app:related}

This section expands the positioning sketched in \S\ref{sec:related} with detailed comparisons to seven research threads.

\subsection{Shortcut Learning and Simplicity Bias}

\citet{geirhos2020shortcut} coined the term ``shortcut learning'' for the tendency of deep networks to exploit surface-level statistical regularities rather than learning the intended task structure. \citet{mccoy2019right} demonstrated this concretely in NLI, showing that BERT relies on syntactic heuristics (lexical overlap, subsequence, constituent) rather than semantic entailment. \citet{shah2020pitfalls} traced the phenomenon to simplicity bias: neural networks converge to the simplest function consistent with training data, which in the presence of spurious correlations is the shortcut. \citet{bombari2025spurious} extended this analysis to high-dimensional regression, showing that regularization interacts with simplicity bias in non-trivial ways. \citet{damour2022underspecification} showed that underspecification (multiple models achieving equivalent training performance but diverging on deployment data) is pervasive across ML domains.

Rung Collapse is a \emph{causal} variant of shortcut learning: the model learns the $\mathcal{L}_1$ shortcut ($P(Y|X)$) rather than the $\mathcal{L}_2$ target ($P(Y|\doop(X))$), and the two are indistinguishable from observational data alone. Our Proposition~\ref{prop:rung-collapse} formalizes why: the autoregressive loss provides identical gradients for both. The key distinction from prior work is that Rung Collapse cannot be resolved by more data, better architectures, or standard regularization; it requires \emph{interventional} evidence (Theorem~\ref{thm:separation}).

\subsection{Causal Reasoning Benchmarks for LLMs}

The evaluation of causal reasoning in LLMs has produced a growing ecosystem of benchmarks. CLadder~\citep{jin2023cladder} provides the first systematic rung-stratified evaluation using symbolic SCMs. CausalBench~\citep{zhou2024causalbench} offers a broader evaluation covering causal judgment, discovery, and explanation. CausalEval~\citep{yu2025causaleval} introduces fine-grained evaluation of causal graph manipulation. CauSciBench~\citep{acharya2025causcibench} tests causal reasoning in scientific contexts. InterveneBench~\citep{intervene2026} evaluates intervention reasoning in real social systems. \citet{yang2024critical} provides a critical review of methodology across these benchmarks.

CausalT5K~\citep{causalT5K} differs from these benchmarks in a specific way: every case is designed so that the $\mathcal{L}_1$ answer is \emph{deliberately wrong}, creating a targeted test for Rung Collapse. This design is essential for our purposes because it isolates the $\mathcal{L}_1 \to \mathcal{L}_2$ gap: a model that answers correctly on CausalT5K must be using interventional reasoning, not associational shortcuts. Our cross-benchmark validation (Appendix~\ref{app:cladder}) confirms that the phenomenon generalizes.

\subsection{Causal Discovery with LLMs}

A growing body of work integrates LLM prior knowledge with data-driven causal discovery. \citet{ban2025harmonized} use LLMs to provide a harmonized prior for constraint-based discovery algorithms. \citet{du2025llmcd} combine LLM reasoning with data-driven methods for structure learning. \citet{wan2025llmcd_survey} provides a comprehensive survey. CausalGraphBench~\citep{causalgraphbench2025} provides a benchmark for graph learning from text.

Our work differs in a fundamental way: we do not use LLMs \emph{for} causal discovery but rather build a framework that corrects LLM \emph{failures} at causal reasoning. The ERM architecture's confounder discovery protocol (\S\ref{app:algorithm}) does use the LLM's ability to hypothesize latent variables, but the validation is grounded in interventional evidence from the CTL, not in the LLM's prior knowledge alone.

\subsection{Causal Bandits and Causal RL}

The causal bandits literature~\citep{lattimore2016causal,yan2024linear,elahi2024partial} studies optimal intervention strategies when the learner can perform experiments on a causal graph. \citet{zhang2020causal} formalized causal imitation learning under unobserved confounders. These approaches assume access to the true causal graph or its Markov equivalence class.

RLER differs in two ways. First, the ``agent'' is a frozen LLM whose policy is modified only through prompt engineering and reward shaping, not through gradient updates to the model's weights. Second, the causal graph $G_t$ is the agent's \emph{epistemic state} (a hypothesis being refined), not a ground-truth input. The convergence guarantee (Theorem~\ref{thm:convergence}) relies on the same FCI completeness results~\citep{spirtes1993causation,zhang2008completeness} but applies them to the agent's internal belief revision rather than to a central learner.

\subsection{Reward Models and Verification}

The dominant approach to improving LLM reasoning uses outcome-based~\citep{cobbe2021gsm8k} or process-based~\citep{lightman2023prm,uesato2022prm} reward models. RLHF~\citep{christiano2017rlhf,ouyang2022rlhf} trains on human preference data; RLVR~\citep{rlvr2025} uses verifiable rewards (e.g., math answer checking). Process reward models (PRMs) evaluate each reasoning step, but still require ground-truth step-level annotations. \citet{wang2025causal_reward} propose causal rewards for alignment, using causal graphs to identify which features of the output drive the reward signal. \citet{baniharouni2026rewarding_doubt} reward uncertainty-aware calibration. \citet{zhang2025conditional_reward} link process quality to outcome quality via conditional reward modeling.

Our epistemic reward $R_\text{reasoning}$ occupies a different niche: it evaluates reasoning \emph{structure} (fidelity to the causal trace library) rather than reasoning \emph{correctness} (whether steps lead to the right answer). The Best-of-N probe (Appendix~\ref{app:bon-probe}) confirms that these axes are empirically orthogonal: $R_\text{reasoning}$-guided selection produces accuracy indistinguishable from random selection. This orthogonality is the separation theorem's prediction: the epistemic axis carries signal precisely where the outcome axis does not.

\subsection{LLM Reasoning: Chain-of-Thought and Test-Time Compute}

Chain-of-thought prompting~\citep{wei2022cot} demonstrated that intermediate reasoning steps improve LLM performance. Self-consistency~\citep{wang2023selfconsistency} samples multiple chains and selects by majority vote. Best-of-N reranking~\citep{snell2024bon_scaling} scales test-time compute by generating $N$ candidates and selecting the best. DeepSeek-R1~\citep{deepseekr1_2025} and GRPO~\citep{shao2024grpo} train reasoning capabilities via RL with verifiable rewards, demonstrating that extended reasoning emerges from outcome-based training. PPO~\citep{schulman2017ppo} remains the standard policy optimization algorithm.

These approaches operate within Regime~A (verifier available). Our framework targets Regime~B (no ground truth at inference), where self-consistency breaks down because multiple incorrect chains may agree on the same $\mathcal{L}_1$ answer, and Best-of-N selection by outcome reward cannot distinguish correct reasoning from Spurious Success. The ERM critique provides a third axis (reasoning structural quality) that complements rather than replaces these techniques.

\subsection{Sycophancy and Epistemic Stubbornness}

\citet{sharma2024sycophancy} characterized sycophancy as the tendency of LLMs to adjust their responses to match the user's apparent beliefs, even when doing so is incorrect. Epistemic Stubbornness, as observed in our Experiment~B, is a related but distinct phenomenon: models that resist \emph{outcome-only} correction (``Are you sure?'') while remaining responsive to \emph{targeted} epistemic feedback. The asymmetry suggests that stubbornness is not mere rigidity but reflects the model's assessment of the correction's information content. Outcome-only reprompting provides no new evidence; ERM critique provides specific causal reasoning that the model can evaluate and integrate.

\subsection{Label-Free Post-Training and Self-Consistency Methods}

A recent line of work explores improving LLM reasoning without ground-truth labels. MM-UPT~\citep{wei2025mmupt} proposes Unsupervised Post-Training as a third stage after SFT and RL: it samples $G$ responses per problem, constructs pseudo-labels via majority voting, and applies GRPO~\citep{shao2024grpo} to update model weights. Applied to multi-modal mathematical reasoning (Qwen2.5-VL on MathVision, MathVerse, MathVista), MM-UPT demonstrates that self-consistency can serve as a viable training signal when ground-truth verifiers are unavailable.

Our framework differs from MM-UPT along four axes. \emph{First}, mechanism: MM-UPT relies on majority voting to approximate correctness, while ERM critiques the \emph{causal structure} of individual reasoning traces without aggregating across samples. \emph{Second}, domain: majority voting is a reasonable heuristic for mathematical tasks where correct chains tend to converge, but our Corollary~\ref{prop:rung-collapse} and the empirical results in \S\ref{sec:rq1} show that self-consistency breaks down on $\mathcal{L}_2$ causal tasks---multiple incorrect chains agree on the same $\mathcal{L}_1$ associational shortcut, so the majority vote reinforces the wrong answer. \emph{Third}, update locus: MM-UPT performs gradient-based weight updates, while ERM operates at inference time on a frozen model, with RLER providing a cross-episode reward signal for strategy selection rather than parameter optimization. \emph{Fourth}, theoretical grounding: ERM provides formal separation results (Theorem~\ref{thm:separation}) showing that outcome-only signals---including majority-vote aggregation over outcomes---cannot distinguish direct effects from confounding; MM-UPT does not address this identifiability question. In short, the two methods are complementary: MM-UPT scales self-consistency within Regime~A (verifiable domains), while ERM targets Regime~B (no ground-truth verifier), where self-consistency is structurally insufficient.

\section{Cross-Experiment Analysis: Detection vs.\ Correction}
\label{app:cross-experiment}

Experiments A and B measure different phenomena: A measures whether models \emph{detect} Rung Collapse, B measures whether errors can be \emph{corrected} by epistemic feedback. Combining the two across 603 recovery attempts and 267 cases reveals that detection difficulty and correction difficulty are largely independent.

\begin{table}[h]
\centering\footnotesize
\caption{Master catalog of analyzed cases in the detection--recovery space. Det.\ = models failed in Exp.~A (of 6). Rec.\ = Exp.~B correction rate. $\dagger$ = flagged for adjudication.}
\label{tab:case-master}
\begin{tabular}{@{}llccl@{}}
\toprule
\textbf{Case} & \textbf{Trap} & \textbf{Det.} & \textbf{Rec.} & \textbf{Key Observation} \\
\midrule
\multicolumn{5}{@{}l}{\textit{Hard to detect, easy to correct}} \\
F.129  & Survivorship      & 6/6 & 100\% & Universal fail $\to$ fix \\
F.147  & Confounding        & 5/6 & 100\% & Claude sole survivor \\
F.95   & Survivorship       & 5/6 & 100\% & Prohibition case \\
2.087  & Regression/Mean    & 6/6 & 83\%  & RTM identification \\
\midrule
\multicolumn{5}{@{}l}{\textit{Hard to detect, hard to correct}} \\
2.095  & Regression/Mean    & 6/6 & 50\%  & 2nd-order RTM trap \\
2.112$\dagger$ & Confounding & 6/6 & 50\%  & Centralized coinage \\
2.34$\dagger$  & Mech.\ Confl.& 6/6 & 33\%  & Grain price caps \\
2.088  & Selection Bias     & 5/6 & 20\%  & Hardest genuine case \\
\midrule
\multicolumn{5}{@{}l}{\textit{Moderate to detect, hard to correct}} \\
2.114  & Confounding        & 4/6 & 25\%  & Quarantine / plague \\
2.47   & Confounding        & 4/6 & 40\%  & Banning duels \\
\bottomrule
\end{tabular}
\end{table}

\vspace{-.12in}
\subsection{Finding 1: Why Some Cases Defeat Multiple LLMs}

Among 87 cases with three or more recovery attempts, 11 have recovery rates below 50\%. All share a common property: the $\mathcal{L}_1$ reasoning supporting the original (incorrect) answer is \emph{correct and compelling}. Three structural patterns produce multi-model vulnerability: (i) interventional evidence embedded in the scenario (cases 2.34, 2.35, 2.112 present explicitly described causal mechanisms, and the wise-refusal text contains evidence supporting the claim it denies); (ii) second-order reasoning traps (cases 2.095, 2.087 present characters who correctly identify regression to the mean, and models validate this identification); (iii) selection-bias subtlety (Case~2.088 requires distinguishing collider stratification from confounding; only GPT-5.2 made this distinction).

\vspace{-.12in}
\subsection{Finding 2: What Makes Cases Easy to Correct}

Of 147 cases with two or more attempts, 122 (83\%) achieved perfect recovery. Three characteristics predict easy correction: (i) \emph{the confounder is nameable}: all 39 curated History cases achieved 100\% recovery when feedback names a specific bias; (ii) \emph{the domain has a natural control-group heuristic}: all 9 Medicine-domain cases achieved 100\% recovery; (iii) \emph{the correction is additive, not subtractive}: easy recoveries add a consideration (a confounder, a missing control) rather than requiring the model to retract a valid reasoning step.

The Weimar hyperinflation case (F.129) exemplifies all three: hardest in Exp.~A (6/6 failed) yet easiest in Exp.~B (6/6 recovered). The lesson: \erm need not prevent all initial errors, only detect and correct them efficiently.

\vspace{-.12in}
\subsection{Finding 3: Stubbornness Does Not Predict Non-Recoverability}

The six universal failures show recovery rates from 33\% to 100\% with no monotonic relationship to detection difficulty:

\begin{center}\small
\begin{tabular}{@{}llcc@{}}
\toprule
\textbf{Case} & \textbf{Archetype} & \textbf{Failure (A)} & \textbf{Recovery (B)} \\
\midrule
F.129  & Compelling Mech & 6/6 & 100\% \\
2.087  & Meta-Validation & 6/6 & 83\% \\
2.112$\dagger$ & Compelling Mech & 6/6 & 50\% \\
2.095  & Meta-Validation & 6/6 & 50\% \\
2.34$\dagger$  & Compelling Mech & 6/6 & 33\% \\
2.35$\dagger$  & Compelling Mech & 6/6 & 33\% \\
\bottomrule
\end{tabular}
\end{center}

Three factors predict recovery difficulty, none of which is the Exp.~A failure count: (i) ground-truth clarity (the three $\dagger$-flagged cases may have ambiguous labels), (ii) the level gap (cases where $\mathcal{L}_1$ reasoning directly addresses the $\mathcal{L}_2$ question are harder to correct), and (iii) model architecture (on the 11 hardest cases, GPT-4-Turbo failed 10/10 recovery attempts while Claude failed only 1/4).

\vspace{-.12in}
\section{Formal Proof: ERM as AGM-Style Belief Contraction}
\label{app:agm-proof}

We prove that the \erm contraction operator (Algorithm~\ref{alg:erm}) satisfies the core AGM-style contraction properties under a graph-based belief representation.

\vspace{-.12in}
\paragraph{Setup.} A causal belief state is a tuple $(G_t, w_t)$ where $G_t = (V, E_t)$ is a DAG and $w_t : E_t \to [0,1]$ assigns confidence weights. The belief set is $B_t = \{C_j \in E_t : w_t(C_j) > \theta_\text{min}\}$. ERM state contraction $(G_t, w_t) \div C_j$ removes edge $C_j$ when $\text{conf}_j < \theta_\text{min}$ after processing interventional evidence, then applies $\text{EnforceDAG}$.

\vspace{-.12in}
\paragraph{Assumptions.} (A1) $G_t$ is a DAG at all times. (A2) $\text{EnforceDAG}$ enforces acyclicity only. (A3) Contraction removes edges but does not introduce new ones. (A4) Contraction targeting $C_j$ updates only $C_j$'s status; all other edges and weights remain unchanged.

\begin{theorem}[ERM Contraction Satisfies AGM-style Properties]
\label{thm:agm}
Under A1--A4, ERM contraction $B_t \div C_j$ satisfies: \textbf{K--1 (Closure):} $B_{t+1}$ is a well-formed belief set. \textbf{K--2 (Inclusion):} $B_{t+1} \subseteq B_t$. \textbf{K--3 (Vacuity):} If $C_j \notin B_t$, then $B_{t+1} = B_t$. \textbf{K--4 (Success):} If contraction is applied, $C_j \notin B_{t+1}$.
\end{theorem}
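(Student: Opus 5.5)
The plan is to verify the four properties directly from the definitions in the Setup and from assumptions A1--A4. No deep machinery is needed: the belief set $B_t$ is a thresholded edge set, and contraction is one edge removal followed by $\text{EnforceDAG}$. Throughout, write the post-contraction state as $(G_{t+1}, w_{t+1})$ with $G_{t+1} = \text{EnforceDAG}(V, E_t \setminus \{C_j\})$ when contraction fires. We also fix the convention that $w_{t+1}$ is the restriction of $w_t$ to $E_{t+1}$, which is exactly what A4 licenses. Then $B_{t+1} = \{C \in E_{t+1} : w_{t+1}(C) > \theta_\text{min}\}$.

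\textbf{Closure (K--1)} comes first. Removing an edge from a DAG cannot create a cycle, so $E_t \setminus \{C_j\}$ is acyclic by A1. By A2, $\text{EnforceDAG}$ then returns a DAG without adding edges (A3). Hence $B_{t+1}$ is a thresholded edge set of a DAG with weights in $[0,1]$, which is precisely the class of well-formed belief sets in the Setup.

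\textbf{Inclusion (K--2)} comes next. By A3, $E_{t+1} \subseteq E_t$. By A4, every surviving edge $C \neq C_j$ keeps its weight, so $w_{t+1}(C) = w_t(C)$. Therefore $C \in B_{t+1}$ implies $C \in E_t$ with $w_t(C) > \theta_\text{min}$, that is, $C \in B_t$.

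\textbf{Vacuity (K--3)} is where I expect the only real care to be needed. If $C_j \notin B_t$, then either $C_j \notin E_t$ or $w_t(C_j) \le \theta_\text{min}$. In both cases no member of $B_t$ is touched by A4. Any change to $C_j$ itself (removal or reweighting) affects only an element already outside $B_t$, and $\text{EnforceDAG}$ is a no-op on an already acyclic graph by A1--A2. So $B_{t+1} = B_t$. The subtle point is to read A2 as ``$\text{EnforceDAG}$ is the identity on DAGs.'' Otherwise the operator could in principle prune other edges, and K--3 and the ``only $C_j$ changes'' reading of A4 would both fail. I would state this reading explicitly as part of A2.

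\textbf{Success (K--4)} closes the argument. When contraction is applied, line 6 of Algorithm~\ref{alg:erm} removes $C_j$ from $E_t$. By A3, it is not reintroduced by $\text{EnforceDAG}$. Hence $C_j \notin E_{t+1} \supseteq B_{t+1}$.

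I would remark at the end that recovery and the full AGM representation are not claimed. Only the four listed postulates follow from A1--A4.
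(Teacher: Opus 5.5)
Your proposal is correct and follows the paper's proof essentially step by step. Edge deletion preserves acyclicity, so $\text{EnforceDAG}$ is a no-op (K--1); A3--A4 give $E_{t+1}\subseteq E_t$ with unchanged non-target weights (K--2, K--3); and the algorithm's explicit removal of $C_j$ gives K--4. Your explicit reading of A2 as ``$\text{EnforceDAG}$ is the identity on DAGs'' is what the paper uses implicitly: it calls $\text{EnforceDAG}$ a no-op and later appeals to ``no cascading deletions under A2.''
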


\begin{proof}
\emph{K--1.} Removing an edge from a DAG cannot introduce a cycle; $G_{t+1}$ remains a DAG. Since $G_{t+1}$ is already acyclic, $\text{EnforceDAG}$ is a no-op.
\emph{K--2.} By A3 and A4, no new edges are introduced and non-target weights are unchanged. Every $C \in B_{t+1}$ has $C \in E_{t+1} \subseteq E_t$ with $w_{t+1}(C) = w_t(C) > \theta_\text{min}$, so $C \in B_t$.
\emph{K--3.} If $C_j \notin B_t$, removing it (if present in $E_t$) does not affect $B_t$; by A4, all other edges are unchanged.
\emph{K--4.} When triggered, Algorithm~\ref{alg:erm} removes $C_j$ from $E_t$. Since $C_j \notin E_{t+1}$, $C_j \notin B_{t+1}$.
\end{proof}

\vspace{-.12in}
\paragraph{Recovery (K--5).} Recovery requires an explicit expansion operator: $(G, w) + C_j$ reintroduces $C_j$ with confidence $\text{conf}_j > \theta_\text{min}$ when new interventional evidence supports it, provided acyclicity is maintained. Under this operator, $B_t \subseteq \text{Bel}((G_t, w_t) \div C_j + C_j)$ holds because contraction removes only $C_j$ (no cascading deletions under A2), and expansion restores it.

\vspace{-.12in}
\paragraph{ERM revision via the Levi identity.} Given interventional observation $\alpha = (a_i, Y_i)$, ERM revision proceeds in two steps: (i) contract edges inconsistent with $\alpha$ ($\text{conf}_j < \theta_\text{min}$ after CTL update), (ii) expand/reinforce edges supported by $\alpha$ ($\text{conf}_k > \theta_\text{max}$). This mirrors the Levi identity $B_t * \alpha = (B_t \div \neg\alpha) + \alpha$, where $\neg\alpha$ corresponds to edges contradicted by the interventional observation. The mapping from interventional observations to the propositional objects in AGM is mediated by the CTL evidence-aggregation procedure; a full equivalence proof is left to future work.

\vspace{-.12in}
\section{Architecture Diagram}
\label{app:architecture}

\begin{figure}[h]
\centering
\resizebox{\columnwidth}{!}{%
\begin{tikzpicture}[
    >=Stealth,
    box/.style={draw, rounded corners=3pt, minimum height=0.75cm, align=center,
                font=\small\bfseries, line width=0.8pt},
    sbox/.style={draw, rounded corners=2pt, minimum height=0.55cm, align=center,
                 font=\scriptsize, line width=0.6pt},
    rbox/.style={draw, rounded corners=2pt, minimum height=0.5cm, align=center,
                 font=\scriptsize, line width=0.6pt},
    arr/.style={->, line width=0.7pt},
    fb/.style={->, line width=0.7pt, densely dashed, color=blue!60!black},
    disc/.style={->, line width=0.7pt, densely dotted, color=orange!70!black},
    lbl/.style={font=\tiny, fill=white, inner sep=1.5pt},
]
\node[box, fill=purple!8, minimum width=2.0cm] (strat) at (0,0)
  {Strategy $\pi_t$\\ {\scriptsize Thompson}};
\node[box, fill=blue!10, minimum width=2.4cm, minimum height=0.85cm] (llm) at (3.6,0)
  {Frozen LLM};
\node[box, fill=gray!8, minimum width=2.2cm] (trace) at (7.2,0)
  {Trace $\tau_t$\\ {\scriptsize claims, $\hat{Y}$, $a_t$}};
\draw[arr] (-1.6,0) node[left, font=\small] {$c_t$} -- (strat);
\draw[arr] (strat) -- node[lbl, above] {$\pi_t$} (llm);
\draw[arr] (llm) -- (trace);
\node[box, fill=red!10, minimum width=2.0cm] (judge) at (1.8,-2.0)
  {Causal Judge};
\node[box, fill=orange!12, minimum width=2.0cm] (gt) at (5.4,-2.0)
  {$G_t$\\ {\scriptsize Causal Model}};
\node[box, fill=green!8, minimum width=2.0cm] (ctl) at (8.6,-2.0)
  {CTL\\ {\scriptsize Evidence Log}};
\draw[arr] (trace.south) -- ++(0,-0.35) -| node[lbl, pos=0.25, above] {\scriptsize read $H_t$} (judge.north);
\draw[arr, gray!60] (trace.south east) -- ++(0.2,0) |- node[lbl, pos=0.8, above] {\scriptsize log} (ctl.north east);
\draw[arr] (gt.west) -- node[lbl, above] {\scriptsize $G_t$} (judge.east);
\draw[arr] (ctl.west) -- node[lbl, above] {\scriptsize $\hat{P}_\text{CTL}$} (gt.east);
\node[sbox, fill=red!6, minimum width=1.4cm] (ft) at (3.4,-3.4)
  {$\mathcal{F}_t$ Failure\\[-1pt] Modes};
\draw[arr, blue!60] (judge.south) -- ++(0,-0.25) -| node[lbl, pos=0.35, below] {\scriptsize L1: revise} (gt.south);
\draw[arr, red!60] (judge.south west) -- ++(0,-0.9) -| node[lbl, pos=0.55, below] {\scriptsize L2: adapt} (ft.north);
\draw[arr, red!40] (ft.west) -| node[lbl, pos=0.7, left] {\scriptsize guards} ([xshift=-3pt]llm.south west);
\node[rbox, fill=green!6, densely dashed] (rout) at (11.0,0.15)
  {$R_\text{outcome}$};
\node[rbox, fill=red!8] (rreas) at (11.0,-0.45)
  {$R_\text{reasoning}$};
\node[rbox, fill=yellow!12] (rdisc) at (11.0,-1.05)
  {$R_\text{discovery}$};
\node[font=\tiny\itshape, text=gray!60] at (11.0,0.6) {Regime A only};
\draw[decorate, decoration={brace, amplitude=4pt, mirror}, line width=0.6pt]
  ($(rout.north east)+(0.1,0.05)$) -- ($(rdisc.south east)+(0.1,-0.05)$)
  node[midway, right=5pt, font=\small\bfseries] {$R_\text{RLER}$};
\draw[arr, densely dashed, green!50!black] (trace.east) -- node[lbl, above] {\scriptsize Regime A} (rout.west);
\draw[arr, red!50!black] (judge.south east) -- ++(0,-1.65) -| node[lbl, pos=0.15, below] {\scriptsize $D_\text{struct}(H_t, H^*_t)$} ([xshift=3pt]rreas.south);
\draw[fb] (rreas.north) -- ++(0,1.5) -| node[lbl, pos=0.25, above] {\scriptsize update $\Pi_t$} (strat.north);
\draw[fb] ([xshift=3pt]rreas.south) -- ++(0,-0.55) -| node[lbl, pos=0.4, below] {\scriptsize Eq.\,\ref{eq:erm}} (gt.south east);
\node[sbox, fill=yellow!15, minimum width=1.6cm] (disc) at (7.2,-3.4)
  {Confounder\\[-1pt] Discovery};
\draw[disc] (gt.south) -- ++(0,-0.3) -| node[lbl, pos=0.35, above] {\scriptsize $S(t) > \theta_S$} (disc.north);
\draw[disc] (disc.east) -| node[lbl, pos=0.3, below] {\scriptsize AGM expand} (ctl.south);
\draw[disc] (disc.north east) -- ++(0.2,0) |- (rdisc.south west);
\begin{scope}[on background layer]
  \node[draw=blue!15, fill=blue!2, rounded corners=4pt,
        fit=(strat)(llm)(trace), inner xsep=8pt, inner ysep=8pt,
        label={[font=\scriptsize\bfseries, text=blue!35]above left:Generation}] {};
  \node[draw=red!15, fill=red!2, rounded corners=4pt,
        fit=(judge)(gt)(ctl)(ft)(disc), inner xsep=6pt, inner ysep=6pt,
        label={[font=\scriptsize\bfseries, text=red!35]below left:ERM (single-episode)}] {};
\end{scope}
\end{tikzpicture}%
}
\caption{\textbf{Unified ERM + RLER architecture.}
\emph{Generation} (top): context $c_t$ selects strategy $\pi_t$; the frozen LLM produces trace $\tau_t$.
\emph{ERM} (bottom): the Causal Judge reads claims $H_t$, compares against $G_t$ and CTL evidence. Layer~1 revises $G_t$ via AGM; Layer~2 classifies errors into failure modes $\mathcal{F}_t$ and injects guards; Layer~3 routes persistent high-regret tasks.
\emph{Reward} (right): $R_\text{reasoning}$ and $R_\text{discovery}$ require no ground truth; $R_\text{outcome}$ (dashed) is Regime~A only.
Feedback loops (dashed blue) update strategy posteriors and edge weights. Discovery (dotted orange) triggers on persistent epistemic surprise. All learning is in external artifacts; LLM weights remain frozen.}
\label{fig:architecture}
\end{figure}

\section{Supplementary Figures}
\label{app:supp-figs}

This section collects supplementary visual summaries. The correction butterfly and matched-richness ablation now appear in the main text (Figure~\ref{fig:correction-main}). Figure~\ref{fig:collapse-bars} visualizes the CausalT5K collapse rates from Table~\ref{tab:collapse}.

\begin{figure}[h]
\centering
\includegraphics[width=0.88\linewidth, height=0.36\linewidth, keepaspectratio]{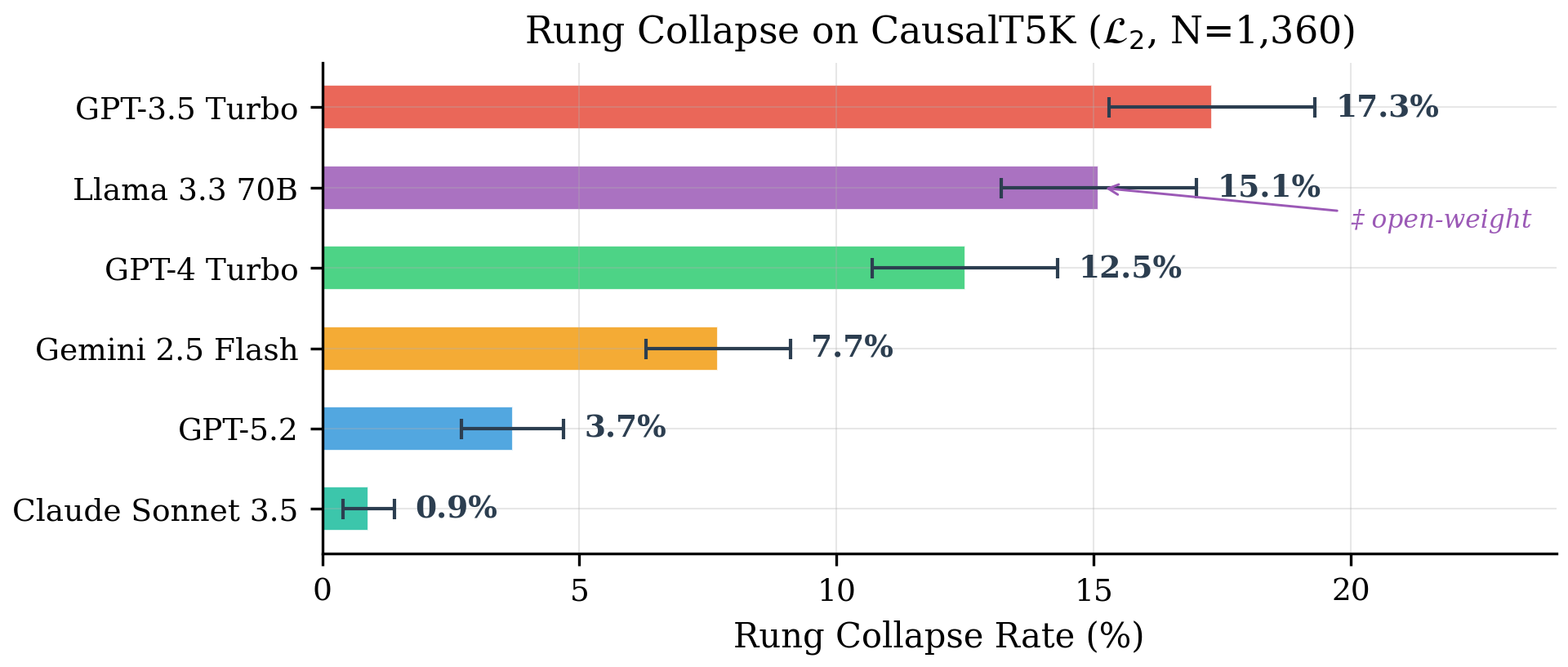}
\caption{\textbf{Rung Collapse rates on CausalT5K} ($\mathcal{L}_2$, $N=1{,}360$). Visual summary of Table~\ref{tab:collapse}. Scaling reduces but does not eliminate collapse: Claude Sonnet 3.5 achieves the lowest rate ($0.9\%$), but every model fails on at least some $\mathcal{L}_2$ queries. Wilson 95\% CIs shown. $\ddagger$Open-weight model.}
\label{fig:collapse-bars}
\end{figure}


\section{Cross-Seed Replication of RLER Thesis Cells}
\label{app:cross-seed}

To assess seed-to-seed variability, we reran the thesis-relevant \rler cells (v3 structured judge, mi3 iteration budget, premium judge) under two additional random seeds (2026 and 2027) alongside the primary seed (42). Seed 2026 ran the full factorial (4 policies, 5 protocols); seed 2027 ran a reduced configuration (GPT-3.5 and GPT-4o only, 3 protocols excluding \textsc{rler\_Bad}, premium judge only).

\paragraph{Accuracy stability.} Table~\ref{tab:cross-seed} reports per-seed accuracy and EDGap for \textsc{rler\_B} (the well-specified protocol). GPT-4o accuracy is stable across seeds (mean $0.755 \pm 0.031$), confirming the strong-policy regime classification. GPT-3.5 shows greater variability ($0.657 \pm 0.072$), consistent with the weak-policy regime where stochastic sampling interacts with lower baseline capability.

\begin{table}[h]
\centering\footnotesize
\caption{Cross-seed replication of \textsc{rler\_B} accuracy on the thesis cells (v3 judge, mi3, premium). Three independent random seeds. Wilson 95\% CIs.}
\label{tab:cross-seed}
\begin{tabular}{llcc}
\toprule
\textbf{Seed} & \textbf{Policy} & $N$ & \textbf{Accuracy [95\% CI]} \\
\midrule
42 (primary)  & GPT-4o  & 132 & $0.780\; [0.70, 0.84]$ \\
2026          & GPT-4o  & 132 & $0.773\; [0.70, 0.84]$ \\
2027          & GPT-4o  & 132 & $0.712\; [0.63, 0.78]$ \\
\midrule
42 (primary)  & GPT-3.5 & 132 & $0.674\; [0.59, 0.75]$ \\
2026          & GPT-3.5 & 132 & $0.561\; [0.48, 0.64]$ \\
2027          & GPT-3.5 & 132 & $0.735\; [0.65, 0.80]$ \\
\bottomrule
\end{tabular}
\end{table}

\paragraph{$\Delta$EDGap direction.} For the two seeds where both \textsc{rler\_B} and \textsc{rler\_Bad} are available (seeds 42 and 2026), the $\Delta$EDGap for GPT-4o is positive in both cases (seed 42: $+0.0020$; seed 2026: $+0.0004$), indicating that the well-specified protocol produces better epistemic discrimination than the mis-specified protocol across seeds. The magnitude varies, consistent with the small sample size ($N=132$) and the preliminary nature of this pilot. Seed 2027 ran only \textsc{rler\_B} (no \textsc{rler\_Bad}), so $\Delta$EDGap is not computable for that seed.

\paragraph{Interpretation.} The cross-seed analysis supports two conclusions. First, the strong-policy regime (GPT-4o accuracy $> 0.70$ regardless of protocol specification) is robust across seeds, validating the regime classification that underpins the separation argument. Second, the $\Delta$EDGap direction (well-specified $>$ mis-specified) replicates, though the magnitude is seed-dependent. These results are consistent with the primary factorial's findings while confirming that the single-seed CIs in \S\ref{sec:rler-experiments} appropriately characterize the case-sampling uncertainty. Full seed-to-seed variability estimation will require additional seeds with the complete protocol set.

\section{Representative Trace Transcripts}
\label{app:trace-transcripts}

To provide transparency into ERM's correction mechanism, we present 12 representative case studies: 6 successful recoveries and 6 dissonance cases where the model recognized the error but could not correct it. Each case shows the base response, structural critique, and final response. Aggregate statistics from 5,996 trials: T$\to$T (correct$\to$correct) 92.5\%, F$\to$F (incorrect$\to$incorrect) 3.9\%, T$\to$F (correct$\to$incorrect, paranoia) 2.0\%, F$\to$T (incorrect$\to$correct, recovery) 1.6\%.

\paragraph{Recovery Case 1 (Llama 3.3 70B, CausalL2).} \emph{Scenario:} A study finds that cities with more police officers have higher crime rates. The model initially concludes the association supports a causal claim. \emph{Critique:} ``The reasoning commits a Confounder Error. Population size and urbanization are common causes of both police staffing and crime rates. The observed association $P(\text{Crime}|\text{Police})$ conflates the direct effect with the backdoor path through Population.'' \emph{After critique:} The model correctly identifies the confounder, draws the DAG $\text{Population} \to \text{Police}$, $\text{Population} \to \text{Crime}$, and concludes the association is non-causal.

\paragraph{Recovery Case 2 (Claude 3.5 Sonnet, CausalL2).} \emph{Scenario:} Students who attend tutoring score higher on exams. \emph{Critique:} ``Selection bias: students who seek tutoring are those already motivated to improve, introducing a backdoor path through Motivation.'' \emph{After critique:} The model identifies self-selection, notes the need for an instrument or RCT, and revises to ``INVALID.''

\paragraph{Dissonance Case 1 (GPT-4o, CausalL2).} \emph{Scenario:} Countries with higher chocolate consumption have more Nobel laureates. \emph{Critique:} ``Confounding by GDP/wealth: both chocolate consumption and research funding are driven by national wealth.'' \emph{After critique:} The model acknowledges the confounder verbally but still concludes ``the evidence weakly supports a positive relationship,'' failing to distinguish $P(Y|X)$ from $P(Y|\text{do}(X))$. This is the Dissonance pattern: declarative recognition without procedural correction.

\paragraph{Dissonance Case 2 (GPT-3.5, CausalL2).} \emph{Scenario:} A hospital reports that patients receiving a new treatment have better outcomes. \emph{Critique:} ``The treatment was administered to less severe cases (selection bias). Without randomization, the comparison is confounded by baseline severity.'' \emph{After critique:} The model restates the concern but concludes ``the treatment appears effective based on the observed data,'' reverting to L1 reasoning.

These cases illustrate the declarative-procedural gap: models can \emph{recognize} structural critiques as valid (high detection recall) but often cannot \emph{execute} the corresponding DAG repair (high dissonance rate). This gap motivates ERM's multi-layer architecture, where Layer~1 handles correctable cases and Layer~3 routes persistent dissonance cases away from critique.

\end{document}